\setlist{parsep=0pt,listparindent=\parindent}
\algrenewcommand\textproc{}
\newif\ifarxiv
\definecolor{gray230}{RGB}{200,200,200}
\definecolor{amaranth}{RGB}{0,0,0}
\newcommand{\algcomment}[1]{\vspace{0.0em} \State \emph{\# #1}:}
\newcommand{\alglinebreak}{\\ \vspace{-0.4em} {\color{gray230}\hrulefill} \\ \vspace{-1.3em}}
\def\abovestrut#1{\rule[0in]{0in}{#1}\ignorespaces}
\def\belowstrut#1{\rule[-#1]{0in}{#1}\ignorespaces}
\def\abovespace{\abovestrut{0.20in}}
\def\belowspace{\belowstrut{0.10in}}
\def\liblinear{\textsc{LibLinear}}
\def\lasso{lasso}
\def\algname{BlitzWS}
\def\algnamel{BlitzMN}
\def\screenname{BlitzScreen}
\providecommand{\e}[1]{\ensuremath{\times 10^{#1}}}
\newcommand{\listm}[1]{\left( {#1} \right)_{i=1}^m}
\newcommand{\spar}[2]{\small{$s^\star \approx {#1};\ s^\star_{\mathrm{W}} \approx {#2}$}}
\newcommand{\svmspar}[1]{\small{$s^\star \approx {#1}$}}
\newcommand{\dcsvm}[1]{\small{$C = {#1} C_{\mathrm{cv}}$}}
\newcommand{\Csvm}{C_{\mathrm{cv}}}
\newcommand{\lam}[1]{\small{$\lambda = {#1} \lammax$}}
\newcommand{\m}[1]{\mathbf{#1}}
\newcommand{\X}{\mathcal{X}}
\newcommand{\Xis}{\mathcal{X}_i^{(\star)}}
\newcommand{\Xik}{\mathcal{X}_i^{(k)}}
\newcommand{\tXik}{\tilde{\mathcal{X}}_i^{(k)}}
\newcommand{\x}{\m{x}}
\renewcommand{\u}{\m{u}}
\newcommand{\z}{\m{z}}
\newcommand{\xstar}{\x^\star}
\newcommand{\omegab}{\bm{\omega}}
\newcommand{\y}{\m{y}}
\newcommand{\g}{\m{g}}
\newcommand{\A}{\m{A}}
\renewcommand{\b}{\m{b}}
\renewcommand{\c}{\m{c}}
\def\dminc{d_\mathrm{min}^{\mathrm{cap}}}
\def\nnz{\mathrm{NNZ}}
\def\Cprogress{C^{\mathrm{progress}}_t}
\def\Csetup{C^{\mathrm{setup}}_t}
\def\Csolve{C^{\mathrm{solve}}_t}
\def\hCprogress{\hat{C}^{\mathrm{progress}}}
\def\hCsetup{\hat{C}^{\mathrm{setup}}}
\def\hCsolve{\hat{C}^{\mathrm{solve}}}
\def\Tsetup{T^{\mathrm{setup}}}
\def\Tsolve{T^{\mathrm{solve}}}
\def\dmaxc{d_\mathrm{max}^{\mathrm{cap}}}
\def\rcap{r^{\mathrm{cap}}}
\def\philbtm{\phi_{i,t-1}^{\mathrm{LB}}}
\def\ftlb{f_t^{\mathrm{LB}}}
\def\f{f}
\def\fs{f_\mathrm{S}}
\def\phis{\phi_{i, \mathrm{S}}}
\def\xs{\x_\mathrm{S}}
\def\ftmlb{f_{t-1}^{\mathrm{LB}}}
\def\gilb{\g_i^{\mathrm{LB}}}
\def\tmaxc{\rcap}
\renewcommand{\a}{\m{a}}
\newcommand{\reals}{\mathbb{R}}
\newcommand{\norm}[1]{\left\|#1\right\|}
\newcommand{\norms}[1]{\|#1\|}
\newcommand{\ip}[1]{\left<#1\right>}
\newcommand{\ips}[1]{\langle #1 \rangle}
\newcommand{\abs}[1]{\left|#1\right|}
\renewcommand{\min}[1]{\ \underset{#1}{\mathrm{min}}\ }
\renewcommand{\inf}[1]{\ \underset{#1}{\mathrm{inf}}\ }
\renewcommand{\max}[1]{\ \underset{#1}{\mathrm{max}}\ }
\renewcommand{\sup}[1]{\ \underset{#1}{\mathrm{sup}}\ }
\newcommand{\minimize}[1]{\underset{#1}{\mathrm{minimize}} \ }
\newcommand{\argmin}[1]{\underset{#1}{\mathrm{argmin}} \ }
\newcommand{\argmax}[1]{\underset{#1}{\mathrm{argmax}} \ }
\renewcommand{\lim}[1]{\underset{#1}{\mathrm{lim}} \ }
\def\oh{\tfrac{1}2}
\def\toh{\sfrac{1}2}
\def\D{\reals^m}
\def\W{\mathcal{W}}
\renewcommand{\int}[1]{{\bf int} \, #1}
\newcommand{\ball}[1]{\textrm{ball}\left(#1\right)}
\newcommand{\conv}[1]{\mathrm{conv}\left(#1\right)}
\def\oh{\tfrac{1}2}
\def\Ai{{}}
\def\P{\mathcal{P}}
\def\B{\mathcal{B}}
\def\yt{{\y_t}}
\def\ytp{{\y_t'}}
\def\ytm{{\y_{t-1}}}
\def\xtm{{\x_{t-1}}}
\def\dtm{d_{t-1}}
\def\zt{{\z_{t}}}
\def\xt{{\x_{t}}}
\newcommand{\fullappref}[1]{Appendix~\ref{#1}}
\def\fld{\psi_{\mathrm{MN}}}
\def\flod{f_{\mathrm{L1D}}}
\def\flo{g_{\mathrm{L1}}}
\def\il{i_{\mathrm{limit}}}
\def\betat{{\beta_t}}
\def\thetat{\theta_t}
\newcommand{\lammax}{\lambda_{\mathrm{max}}}
\newcommand{\eqnref}[1]{(\ref{#1})}
\newcommand{\thmref}[1]{Theorem~\ref{#1}}
\newcommand{\corref}[1]{Corollary~\ref{#1}}
\newcommand{\lemref}[1]{Lemma~\ref{#1}}
\newcommand{\secref}[1]{\S\ref{#1}}
\newcommand{\appref}[1]{supplementary material}
\newcommand{\figref}[1]{Figure~\ref{#1}}
\renewcommand{\algref}[1]{Algorithm~\ref{#1}}
\newcommand{\propref}[1]{Proposition~\ref{#1}}
\newcommand{\tblref}[1]{Table~\ref{#1}}
\newcommand{\prbref}[1]{(\ref{#1})}
\theoremstyle{plain}
\newtheorem{thm}{Theorem}[section]
\newtheorem{cor}[thm]{Corollary}
\newtheorem{prop}[thm]{Proposition}
\def\D{\reals^n} 
\def\T{\mathcal{S}}
\def\Tcap{\mathcal{S}_\xi^{\mathrm{cap}}}
\def\Wcap{\mathcal{W}_\xi^{\mathrm{cap}}}
\def\Wxi{\mathcal{W}_\xi}
\def\ccap{\c^{\mathrm{cap}}}
\def\Bcap{\B^{\mathrm{cap}}}
\def\rcap{r^{\mathrm{cap}}}
\def\W{\mathcal{W}}
\renewcommand{\int}[1]{{\mathrm{int}}(#1)}
\newcommand{\cl}[1]{{\mathrm{cl}}(#1)}
\newcommand{\bd}[1]{{\mathrm{bd}}(#1)}
\def\sump{\sum_{i=1}^m} 
\def\G{\mathcal{G}}
\def\ggl{g_{\mathrm{GL}}}
\def\fgl{f_{\mathrm{GL}}}
\begin{document}

\title{A Fast, Principled Working Set Algorithm for Exploiting \\ Piecewise Linear Structure in Convex Problems}

\author{\name Tyler B. Johnson \email tbjohns@washington.edu \\
       \addr Department of Electrical Engineering 
       \\ University of Washington\\
       Seattle, WA 98195, USA
       \AND
       \name Carlos Guestrin \email guestrin@cs.washington.edu \\
       \addr Department of Computer Science \& Engineering \\
       University of Washington\\
       Seattle, WA 98195, USA}

\editor{editor}


\maketitle

\begin{abstract}
  By reducing optimization to a sequence of smaller subproblems, 
working set algorithms achieve fast convergence times for many machine learning problems.  
Despite such performance,
working set implementations often resort to heuristics to determine subproblem size, makeup, and stopping criteria.  
We propose \algname{}, a working set algorithm with useful theoretical guarantees.
Our theory 
relates subproblem size and stopping criteria to the amount of progress during each iteration.
This result motivates strategies for optimizing algorithmic parameters and discarding irrelevant components as \algname{} progresses toward a solution.
\algname{} applies to many convex problems, including training $\ell_1$-regularized models and support vector machines.
We showcase this versatility with empirical comparisons, which demonstrate \algname{} is indeed a fast algorithm.

\end{abstract}

\begin{keywords}
\ifarxiv
\else
  Working set algorithms, Safe screening, Sparse optimization, Convex optimization, Scalable optimization
\fi
\end{keywords}

\section{Introduction}

Many optimization problems in machine learning have useful structure at their solutions.
For sparse regression, the optimal model makes predictions using a fraction of available features.
For support vector machines, easy-to-classify examples have no influence on the optimal model.
In this work, we exploit such structure to train these models efficiently.

Working set algorithms exploit structure by reducing optimization to a sequence of simpler subproblems.  Each subproblem considers only a priority subset of the problem's components---the features likely to have nonzero weight in sparse regression, for example, or training examples near the margin in SVMs.
Likely the most prominent working set algorithms for machine learning are those of the
\liblinear{} library \citep{Fan:2008}, an efficient software package for training linear models.
By using working set and related ``shrinking'' \citep{Joachims:1999} heuristics, \liblinear{} converges very quickly.
Other successful applications of working sets include algorithms proposed by \citet{Osuna:1997}, \citet{Zanghirati:2003}, \citet{Tsochantaridis:2005}, \citet{Kim:2008}, \citet{Roth:2008}, \citet{Obozinski:2009} and \citet{Friedman:2010}.

Despite the usefulness of working set algorithms, there is limited theoretical understanding of these methods.
For \liblinear{}, except for guaranteed convergence to a solution, there are no guarantees with regard to working sets and shrinking.
As a result, critical aspects of working set algorithms typically rely on heuristics rather than principled understanding.

We propose \algname{}, a working set algorithm accompanied by useful theoretical analysis.
Our theory explains how to prioritize components of the problem in order to guarantee a specified amount of progress during each iteration.
To our knowledge, \algname{} is the first working set algorithm with this type of guarantee.
This result motivates a theoretically justified way to select each subproblem, making \algname{}'s choice of subproblem size, components, and stopping criteria more principled and robust than those of prior approaches.


\algname{} solves instances of a novel problem formulation, which formalizes our notion of ``exploiting structure'' in problems such as sparse regression.
Specifically, we define the objective function as a sum of many piecewise terms.
Each piecewise function is comprised of simpler subfunctions, some of which we assume to be linear.
Exploiting structure amounts to selectively replacing piecewise terms in the objective with linear subfunctions.  This results in a modified objective that can be much simpler to minimize.
By solving a sequence of such subproblems, \algname{} rapidly converges to the original problem's solution.

In addition to \algname{}, we propose a closely related safe screening test called \screenname{}. 
First proposed by \cite{Ghaoui:2010}, safe screening identifies problem components that are guaranteed to be irrelevant to the solution.  
Compared to prior screening tests, \screenname{} (i)~applies to a larger class of problems, and (ii)~simplifies the objective function by a greater amount.

We include empirical evaluations to showcase the usefulness of \algname{} and \screenname{}.   
We find \algname{} significantly outperforms \liblinear{} in many cases, especially for sparse logistic regression problems. 
Perhaps surprisingly, although our screening test improves on many prior tests,
we find that screening 
often
has \emph{negligible} effect on overall convergence times. In contrast, \algname{} improves convergence times significantly in nearly all cases.

This work builds upon two previous conference papers \citep{Johnson:2015,Johnson:2016}. 
New contributions include refinements to the proposed algorithm, improved theoretical results, and additional empirical results.
An open-source implementation of \algname{} is available at the web address \url{http://github.com/tbjohns/blitzml}.

We organize the remainder of this paper as follows.
In \secref{sec:blitz_lasso}, we introduce \algname{} for a simple constrained problem,  emphasizing \algname{}'s main concepts.
In \secref{sec:formulation}, we introduce a piecewise problem formulation, which encompasses a larger set of problems than we consider in \secref{sec:blitz_lasso}. In \secref{sec:blitz}, we define \algname{} for the general piecewise problem.
This section contains more detail compared to \secref{sec:blitz_lasso}, including analysis of approximate subproblem solutions and a method for selecting algorithmic parameters.
In \secref{sec:screening}, we introduce \screenname{} and explain its relation to \algname{}.
In \secref{sec:empirical}, we demonstrate the usefulness of \algname{} and \screenname{} in practice.  We discuss conclusions in \secref{sec:discussion}.

\section{\algname{} for a simple constrained problem} \label{sec:blitz_lasso}

In this section, we introduce \algname{} for computing the minimum norm vector in a polytope.
Given vectors $\a_i \in \reals^n$ and scalars $b_i \in \reals$ for $i = 1, 2, \ldots, m$, we solve 
\begin{equation}
\begin{array}{cll}
\minimize{\x \in \D} & \fld(\x) := \oh \norm{\x}^2 & \\
\mathrm{s.t.} & \ip{\a_i, \x} \leq b_i & i = 1, \ldots, m \, .
\end{array}
\label{prb:lasso_dual}
\tag{PMN}
\end{equation}
We refer to the special case of \algname{} that solves this min-norm problem as ``\algnamel{}.''
For now, we consider only this simple problem to emphasize the algorithm's concepts rather than its capability of solving a variety of problems.

\prbref{prb:lasso_dual} has important applications to machine learning.
Most notably, $\ell_1$-regularized least squares problems---the ``\lasso{}'' \citep{Tibshirani:1996}---can be transformed into an instance of \prbref{prb:lasso_dual} using duality.
We discuss using \algname{} to  solve the \lasso{} more in \secref{sec:sparsity_inducing}.

\subsection{Overview of \algnamel{}} \label{sec:overview}

During each iteration $t$, \algnamel{} selects a working set of constraints, $\W_t$.  \algnamel{} 
then computes the minimizer of $\fld$ subject only to constraints in $\W_t$, storing the result as $\xt$:
\[
\x_t \gets \mathrm{argmin} \left\{ \tfrac{1}2 \norm{\x}^2 \, \big| \, \ip{\a_i, \x} \leq b_i \ \mathrm{for\ all}\ i \in \W_t \right\} \, .
\]
We refer to the task of computing $\xt$ as ``subproblem $t$.''

Typical working set algorithms select each $\W_t$ using heuristics.
\algnamel{} improves upon this with two main novelties.
First, in addition to $\xt$, \algnamel{} introduces a second iterate, $\y_t$.
This iterate is feasible (satisfies all constraints in \prbref{prb:lasso_dual}) for all $t$.

The iterate $\yt$ is necessary for \algnamel{}'s second novelty, which is
the principled choice of each working set.
\algnamel{} selects $\W_t$ in a way that guarantees quantified progress during iteration $t$.
This amount of progress is determined by a progress parameter, $\xi_t \in (0, 1]$.
For now we assume $\xi_t$ is given, but in \secref{sec:alg_params}, we discuss a way to automatically select $\xi_t$.

If $\xi_t = 1$, then \algnamel{} is guaranteed to return \prbref{prb:lasso_dual}'s solution upon completion of iteration $t$. As $\xi_t$ decreases toward zero, \algnamel{} guarantees less progress (which we quantify more precisely in \secref{sec:teardrop}).
At a high level, \algnamel{} combines two ideas to ensure this progress:

\begin{itemize}
\item \emph{Including in the working set constraints that are active at the previous subproblem solution:} \algnamel{} includes in $\W_t$ all constraints for which $\ip{\a_i, \xtm} = b_i$.  
This ensures that $\fld(\xt) \geq \fld(\xtm)$. 
\item \emph{Enforcing an equivalence region:} For subproblem $t$, \algnamel{} defines an ``equivalence region,'' $\T_\xi$, which is a subset of $\reals^n$.  
\algnamel{} selects $\W_t$ in a way that ensures subproblem $t$ and \prbref{prb:lasso_dual} are identical within $\T_\xi$ (i.e., within $\T_\xi$, the feasible region is preserved). 

Establishing this equivalence region has two major implications.
First, if $\xt \in \T_\xi$, then $\xt$ solves not only subproblem $t$ but also \prbref{prb:lasso_dual}.
This is because subgradient values are preserved within the equivalence region.
Second, if $\xt$ does not equal the solution, then it must be the case that $\xt \notin \T_\xi$. 
We design $\T_\xi$ to ensure ``$\xi_t$ progress'' in this case.
\end{itemize}


\subsection{Making working sets tractable with suboptimality gaps} \label{sec:tractable}

We measure \algnamel{}'s progress during each iteration in terms of a \emph{suboptimality gap}.
Since $\xt$ minimizes $\fld$ subject to a subset of constraints, 
 it follows that
$\fld(\xt) \leq \fld(\xstar)$, where $\xstar$ solves \prbref{prb:lasso_dual}.
Thus, given the feasible point $\yt$, we can define the suboptimality gap
\[
\Delta_t = \fld(\yt) - \fld(\xt) \geq \fld(\yt) - \fld(\xstar) \, .
\]

Later, we analyze the improvement in $\Delta_t$ between iterations $t-1$ and $t$.
Maximizing this improvement leads to a principled method for selecting each working set.

\subsection{Converging from two directions: iterate $\yt$ and line search \label{sec:backtracking}}


\algnamel{} initializes $\y_0$ as a feasible point.\footnote{%
While computing a feasible $\y_0$ could be difficult in general, this is not an issue for the applications we consider.  When solving the \lasso{}'s dual (\secref{sec:sparsity_inducing}), for example, \algname{} defines $\y_0 = \m{0}$, which is feasible in this case.} After the algorithm computes $\xt$ during iteration $t$, \algnamel{} performs a line search update to $\yt$.  Specifically, 
$\y_t$ is is the point on the segment $[\ytm, \xt]$ that is closest to $\xt$ while remaining feasible.
Put differently, assuming that $\x_t$ violates at least one constraint, \algnamel{} updates $\yt$ so that (i)~$\yt$ lies on the segment $[\ytm, \xt]$, (ii) $\yt$ satisfies all $m$ constraints, and (iii) unless $\yt = \xt$, there exists a ``limiting constraint'' $\il$ for which $\ip{\a_{\il}, \x_t} > b_{\il}$ and $\ip{\a_{\il} \yt} = b_{\il}$.


To perform this line search update, \algnamel{} computes
\begin{equation}
\label{eqn:backtracking_update}
\alpha_t = \min{i \, : \, \ip{\a_i, \xt} > b_i} \frac{b_i - \ip{\a_i, \ytm}}{\ip{\a_i, \xt} - \ip{\a_i, \ytm}}
\end{equation}
and subsequently defines
$
\yt = \alpha_t \xt + (1 - \alpha_t) \ytm  .
$
In the special case that $\ip{\a_i, \xt} \leq b_i$ for all $i$, we define $\alpha_t = 1$ (and hence $\yt = \xt$). \algnamel{} has converged in this case, since $\Delta_t = 0$.


Because $\xt$ minimizes $\fld$ subject to a subset of constraints, $\fld(\xt) \leq \fld(\ytm)$.
By convexity of $\fld$, this implies $\fld(\yt) \leq \fld(\ytm)$.
Recall also from \secref{sec:overview} that $\fld(\xt)$ is nondecreasing with $t$.  Therefore, $\Delta_t$ is nonincreasing with $t$.

We have not yet quantified \emph{how much} $\Delta_t$ decreases between iterations.
We next derive a rule for selecting $\W_t$ that guarantees this suboptimality gap decreases by a specified amount.

\subsection{Quantifying suboptimality gap progress during iteration $t$ \label{sec:teardrop}}

In \secref{sec:overview}, we established that \algnamel{} includes in $\W_t$ all constraints that are active at $\xtm$. 
We now add additional constraints to the working set in order to guarantee a specified amount of progress toward convergence.
In particular, given a progress coefficient $\xi_t \in (0, 1]$, we design $\W_t$ such that
\begin{equation} \label{eqn:progress}
\Delta_t \leq (1 - \xi_t) \Delta_{t-1} \, .
\end{equation}


Applying properties of convexity and the definition of $\yt$, we derive the following bound:
\begin{restatable}{lem}{blitzlassolemma}
\label{lem:blitz_lasso_lemma}
Assume $\alpha_t > 0$, and define $\beta_t = \alpha_t (1 + \alpha_t)^{-1}$.
Assume that 
$\W_t$ includes all constraints that are active at $\xtm$.
Then we have
\begin{equation} \label{eqn:blitzlasso_lemma}
\Delta_t \leq \tfrac{1 - 2 \beta_t}{1 - \beta_t} \left[ \Delta_{t-1} - \tfrac{1 - \beta_t}{\beta_t^2} \oh \norm{\yt - \beta_t \xtm - (1 - \beta_t) \ytm}^2 - \beta_t \oh \norm{\xtm - \ytm}^2 \right] \, .
\end{equation}
\end{restatable}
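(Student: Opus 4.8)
The plan is to exploit the fact that $\fld$ is an exact quadratic, so that its first-order Taylor expansion holds with equality and $\fld$ is $1$-strongly convex. The natural starting point is the convex-combination identity for quadratics applied to $\yt = \alpha_t \xt + (1-\alpha_t)\ytm$, namely
\[
\fld(\yt) = \alpha_t \fld(\xt) + (1-\alpha_t)\fld(\ytm) - \oh \alpha_t(1-\alpha_t)\norm{\xt - \ytm}^2 .
\]
Subtracting $\fld(\xt)$ and regrouping immediately gives
\[
\Delta_t = (1-\alpha_t)\bigl(\fld(\ytm) - \fld(\xt)\bigr) - \oh \alpha_t(1-\alpha_t)\norm{\xt - \ytm}^2 .
\]
Writing $\fld(\ytm) - \fld(\xt) = \Delta_{t-1} - \bigl(\fld(\xt) - \fld(\xtm)\bigr)$ then reduces the entire problem to finding a sufficiently sharp lower bound on $\fld(\xt) - \fld(\xtm)$.

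The hard part will be that the crude bound $\fld(\xt) \ge \fld(\xtm)$ established in \secref{sec:overview} is too lossy: discarding the difference entirely loses exactly the $\norm{\xt - \xtm}^2$ term needed to match \eqnref{eqn:blitzlasso_lemma}. The key observation that closes this gap is that the hypothesis ``$\W_t$ includes all constraints active at $\xtm$'' lets me say more. By the KKT conditions for subproblem $t-1$, $\xtm$ is a nonnegative combination of the $\a_i$ over constraints active at $\xtm$, so $\xtm$ is in fact the minimizer of $\fld$ over the feasible region defined by those active constraints \emph{alone}. Since $\W_t$ contains that active set, the feasible region of subproblem $t$ is a subset of this region and $\xt$ lies in it; the variational inequality for $\xtm$ then yields $\ip{\xtm, \xt - \xtm} \ge 0$, and $1$-strong convexity upgrades this to $\fld(\xt) - \fld(\xtm) \ge \oh\norm{\xt - \xtm}^2$.

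Substituting this refined bound, and using $1 - \alpha_t \ge 0$ (valid since $\alpha_t \in (0,1]$), produces
\[
\Delta_t \le (1-\alpha_t)\Delta_{t-1} - \tfrac{1-\alpha_t}{2}\norm{\xt - \xtm}^2 - \tfrac{\alpha_t(1-\alpha_t)}{2}\norm{\xt - \ytm}^2 .
\]
All that then remains is a routine but careful check that this right-hand side coincides with that of \eqnref{eqn:blitzlasso_lemma}. Using $\alpha_t = \beta_t(1-\beta_t)^{-1}$, I would first verify the scalar identities $\tfrac{1-2\beta_t}{1-\beta_t} = 1-\alpha_t$, $\beta_t = \tfrac{\alpha_t}{1+\alpha_t}$, and $\tfrac{1-\beta_t}{\beta_t^2} = \tfrac{1+\alpha_t}{\alpha_t^2}$. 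Writing $\u = \xt - \xtm$ and $\w = \xt - \ytm$, one finds $\yt - \beta_t\xtm - (1-\beta_t)\ytm = \tfrac{\alpha_t}{1+\alpha_t}(\u + \alpha_t\w)$ and $\xtm - \ytm = \w - \u$; the elementary identity $\norm{\u + \alpha_t\w}^2 + \alpha_t\norm{\w - \u}^2 = (1+\alpha_t)\bigl(\norm{\u}^2 + \alpha_t\norm{\w}^2\bigr)$ then makes the sum of the two squared-norm terms in the bracket of \eqnref{eqn:blitzlasso_lemma} collapse to $\oh\norm{\u}^2 + \oh\alpha_t\norm{\w}^2$, matching the display above. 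The boundary case $\alpha_t = 1$ is immediate, since then $\yt = \xt$ forces $\Delta_t = 0$ while the right-hand side also vanishes.
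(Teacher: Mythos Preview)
Your proposal is correct and follows essentially the same route as the paper's proof: both start from the exact quadratic identity for $\fld(\yt)$, split off $\Delta_{t-1}$, bound $\fld(\xt)-\fld(\xtm)\ge \oh\norm{\xt-\xtm}^2$, and then perform the same algebraic repackaging (the paper's identity \eqnref{eqn:z8j3n} is exactly your $\norm{\u+\alpha_t\w}^2+\alpha_t\norm{\w-\u}^2=(1+\alpha_t)(\norm{\u}^2+\alpha_t\norm{\w}^2)$ in disguise). Your KKT/variational-inequality justification for $\ip{\xtm,\xt-\xtm}\ge 0$ is in fact cleaner than the paper's one-line remark ``because $\fld(\xtm)\le\fld(\xt)$,'' which by itself does not obviously imply the first-order condition; the underlying reason is precisely the optimality of $\xtm$ over the active constraints, which your argument makes explicit.
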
 
We prove this result in \fullappref{app:proof_blitz_lasso_lemma}.  
Since the special case that $\alpha_t = 0$ is unnecessary for understanding the algorithm's concepts,
we ignore this case except in later proofs.

The working set affects only two variables in \eqnref{eqn:blitzlasso_lemma}: $\beta_t$ and $\yt$.
The other variables---$\xtm$, $\ytm$, and $\Delta_{t-1}$---are given when selecting $\W_t$.
By understanding how $\W_t$ affects $\beta_t$ and $\yt$, we can choose $\W_t$ so that the right side of \eqnref{eqn:blitzlasso_lemma} is upper bounded by $(1 - \xi_t) \Delta_{t-1}$.

Assume for a moment that $\beta_t$ is given when \algnamel{} defines $\W_t$.
In this scenario, it is straightforward to select $\W_t$ in a way that guarantees \eqnref{eqn:progress}.
For the special case that $\beta_t = \toh$, \eqnref{eqn:blitzlasso_lemma} simplifies to $\Delta_t \leq 0$---\eqnref{eqn:progress} holds, regardless of $\W_t$.
In the case that $\beta_t < \toh$, we have $\alpha_t < 1$.
Applying the definition of $\alpha_t$ in \eqnref{eqn:backtracking_update}, there exists a limiting constraint 
$\il$
such that
\begin{equation} \label{eqn:y_boundary}
\ip{\a_{\il}, \xt} > b_{\il} \quad \quad \text{and} \quad \quad
\ip{\a_{\il}, \yt} = b_{\il} \, .
\end{equation}

Since $\xt$ violates constraint $\il$, we have $\il \notin \W_t$.
To apply this fact, note $\yt$ appears in \lemref{lem:blitz_lasso_lemma} through the quantity $\norm{\yt - \beta_t \xtm - (1 - \beta_t) \ytm}$.
\algnamel{} chooses $\W_t$ to ensure this norm equals a threshold $\tau_{\xi}(\betat)$ at minimum.
To achieve this, we define the ``equivalence ball''
\begin{equation} \label{eqn:equiball}
\B_{\xi}(\betat) = \{ \x \mid \norm{\x - \beta_t \xtm - (1 - \betat) \ytm} < \tau_{\xi}(\betat) \} \, .
\end{equation}

\begin{figure}
\begin{center}
\includegraphics[width=2.7in]{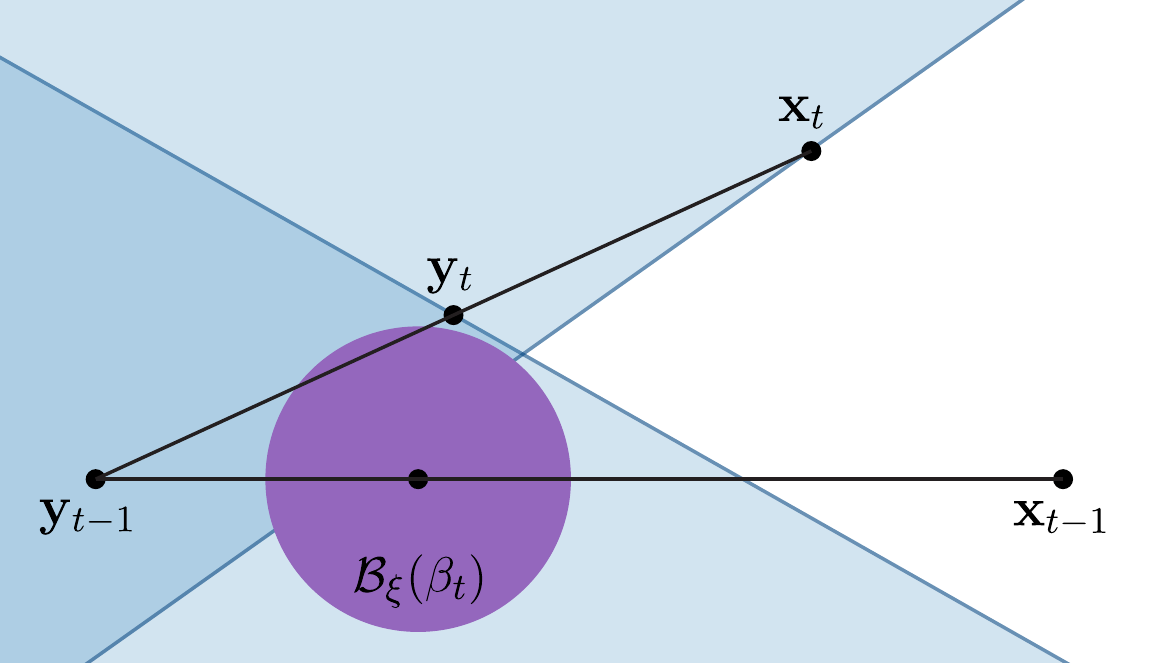} 
\end{center}
\begin{small}
\caption{\textbf{\algnamel{} geometry assuming knowledge of $\betat$.} 
Shaded areas represent the feasible regions of two linear constraints.
Assume $\betat = \alpha_t (1 + \alpha_t)^{-1}$ is known when choosing $\W_t$.
Shown in purple, $\B_\xi(\betat)$ is a ball with center $\betat \x_{t-1} + (1 - \betat) \y_{t-1}$ and radius $\tau_\xi(\betat)$.
The bound \eqnref{eqn:progress} depends on $\y_t$'s distance from this ball's center.
\algnamel{} ensures this distance is large by choosing $\W_t$ so that if $\yt \ne \xstar$, then $\y_t \notin \B_\xi(\betat)$.
In particular, \algnamel{} selects $\W_t$ in a way that preverses \prbref{prb:lasso_dual}'s feasible region within $\B_\xi(\betat)$. 
} \label{fig:beta_geometry}
\end{small}
\end{figure}

\algnamel{} includes $i$ in $\W_t$ if 
there exists an $\x \in \B_\xi(\betat)$ such that
$\ip{\a_i, \x} \geq b_i$.
This preserves \prbref{prb:lasso_dual}'s feasible region within $\B_\xi(\betat)$.
Since $\il \notin \W_t$, this implies that 
no point on the boundary of constraint $\il$---$\yt$ included, due to \eqnref{eqn:y_boundary}---lies within $\B_\xi(\betat)$.
By our definition of $\B_\xi(\betat)$ in \eqnref{eqn:equiball},  
this guarantees that $\norm{\yt - \beta_t \xtm - (1 - \beta_t) \ytm} \geq \tau_\xi(\betat)$.  
We illustrate this concept in \figref{fig:beta_geometry}.


Having linked $\tau_\xi(\betat)$ to \lemref{lem:blitz_lasso_lemma}, we can define $\tau_\xi(\betat)$ to produce our desired bound, \eqnref{eqn:progress}:
\begin{equation} \label{eqn:tau}
\tau_\xi(\betat) = \betat \sqrt{2 \Delta_{t-1}} \left[ 1 + \tfrac{\betat}{1 - \betat} \left(1 - \tfrac{\norm{\xtm - \ytm}^2}{2 \Delta_{t-1}} \right) - \tfrac{1 - \xi_t}{1 - 2 \betat} \right]_+^{1/2} .
\end{equation}
This leads to the following result:
\begin{restatable}{lem}{blitzlassoprogressbeta} \label{lem:gap_progress_betat}
Assume $\betat$ is known when selecting $\W_t$, and assume $\betat > 0$.
For all $i \in [m]$, let $\W_t$ include constraint $i$ if either $\{ \x \mid \ip{\a_i, \x} \geq b_i \} \cap \B_\xi(\betat) \ne \emptyset$ or $\ip{\a_i, \xtm} = b_i$.
Then
\[ \Delta_{t} \leq (1 - \xi_t) \Delta_{t-1} \, . \]
\end{restatable}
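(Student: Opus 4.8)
The plan is to obtain the result as a direct consequence of \lemref{lem:blitz_lasso_lemma} by controlling the single free quantity $\norm{\yt - \betat \xtm - (1 - \betat) \ytm}$ that the working-set rule is designed to influence. First I would record that the line search \eqnref{eqn:backtracking_update} forces $\alpha_t \in (0, 1]$: since $\ytm$ is feasible, each numerator $b_i - \ip{\a_i, \ytm}$ is nonnegative, and it is dominated by the denominator $\ip{\a_i, \xt} - \ip{\a_i, \ytm}$ whenever $\ip{\a_i, \xt} > b_i$. Hence $\betat = \alpha_t(1 + \alpha_t)^{-1} \in (0, \toh]$, and exactly two cases arise. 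Since the working-set rule includes every constraint active at $\xtm$, the hypothesis of \lemref{lem:blitz_lasso_lemma} is met throughout.

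In the boundary case $\betat = \toh$ (equivalently $\alpha_t = 1$), the prefactor $\tfrac{1 - 2\betat}{1 - \betat}$ in \lemref{lem:blitz_lasso_lemma} vanishes, so the lemma gives $\Delta_t \leq 0$; combined with $\Delta_t \geq 0$ (which holds because $\yt$ is feasible and $\fld(\xt) \le \fld(\xstar)$), this yields $\Delta_t = 0 \leq (1 - \xi_t)\Delta_{t-1}$. For the main case $\betat < \toh$ the prefactor is strictly positive, and since $\alpha_t < 1$ there is a limiting constraint $\il$ satisfying \eqnref{eqn:y_boundary}. The key geometric step is the following: because $\xt$ violates constraint $\il$ while $\xt$ satisfies every constraint in $\W_t$, we have $\il \notin \W_t$, so the working-set rule forces $\{\x \mid \ip{\a_{\il}, \x} \geq b_{\il}\} \cap \B_\xi(\betat) = \emptyset$. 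As $\yt$ lies on the boundary of constraint $\il$ it belongs to that half-space, hence $\yt \notin \B_\xi(\betat)$, which by the definition \eqnref{eqn:equiball} is exactly the statement $\norm{\yt - \betat \xtm - (1 - \betat)\ytm} \geq \tau_\xi(\betat)$.

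With this bound in hand I would substitute it into \lemref{lem:blitz_lasso_lemma}: the prefactor is positive and the squared-norm term enters the bracket with a negative coefficient, so replacing $\norm{\yt - \betat\xtm - (1-\betat)\ytm}$ by its lower bound $\tau_\xi(\betat)$ only enlarges the right-hand side and the inequality is preserved. It then remains to check the algebraic identity driving the design of \eqnref{eqn:tau}: inserting $\tau_\xi(\betat)^2$ into $\tfrac{1 - \betat}{\betat^2}\oh\tau_\xi(\betat)^2$, the $\Delta_{t-1}$ and $\betat\oh\norm{\xtm - \ytm}^2$ contributions cancel against the other two terms in the bracket, leaving precisely $\tfrac{1 - \betat}{1 - 2\betat}(1 - \xi_t)\Delta_{t-1}$; multiplying by the prefactor $\tfrac{1 - 2\betat}{1 - \betat}$ then collapses the whole expression to $(1 - \xi_t)\Delta_{t-1}$.

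The main obstacle I anticipate is the $[\,\cdot\,]_+$ truncation in \eqnref{eqn:tau}, which I must handle so that the square root is defined and the conclusion survives. When the bracketed argument is nonnegative, the cancellation above runs verbatim. When it is negative we have $\tau_\xi(\betat) = 0$ and the clean cancellation fails; instead I would simply discard the nonnegative squared-norm term from \lemref{lem:blitz_lasso_lemma}, reducing the bound to $\tfrac{1 - 2\betat}{1 - \betat}\bigl(\Delta_{t-1} - \betat\oh\norm{\xtm - \ytm}^2\bigr)$, and then observe that the defining inequality ``argument $\le 0$'', after multiplying through by the positive factor $\tfrac{1 - \betat}{1 - 2\betat}$ and by $\Delta_{t-1}$, is equivalent to $\Delta_{t-1} - \betat\oh\norm{\xtm - \ytm}^2 \leq \tfrac{1 - \betat}{1 - 2\betat}(1 - \xi_t)\Delta_{t-1}$, which delivers the claim in this regime as well. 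Finally I would dispatch the degenerate subcase $\Delta_{t-1} = 0$ separately, using monotonicity of $\Delta_t$ together with $\Delta_t \geq 0$ to conclude $\Delta_t = 0$.
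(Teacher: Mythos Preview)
Your proposal is correct and follows essentially the same route as the paper's proof: split on $\betat = \toh$ versus $\betat < \toh$, use the limiting constraint $\il \notin \W_t$ to force $\yt \notin \B_\xi(\betat)$ and hence $\norm{\yt - \betat\xtm - (1-\betat)\ytm} \ge \tau_\xi(\betat)$, then substitute into \lemref{lem:blitz_lasso_lemma} and simplify. Your explicit two-branch handling of the $[\cdot]_+$ truncation is exactly what underlies the paper's single inequality step from the bracketed expression to $(1-\xi_t)\Delta_{t-1}$; the paper just compresses those two branches into one ``$\le$''. Two minor wording nits (not errors in logic): $\alpha_t > 0$ comes from the hypothesis $\betat > 0$, not from the line search itself, and the multiplier that linearizes the ``argument $\le 0$'' condition is $(1-\betat)\Delta_{t-1}$ rather than $\tfrac{1-\betat}{1-2\betat}\Delta_{t-1}$.
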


\begin{figure}
\begin{center}
\begin{tabular}{c@{}c}
\raisebox{2.3cm}{ \includegraphics[width=0.9in]{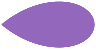} }
&
\includegraphics[width=3.6in]{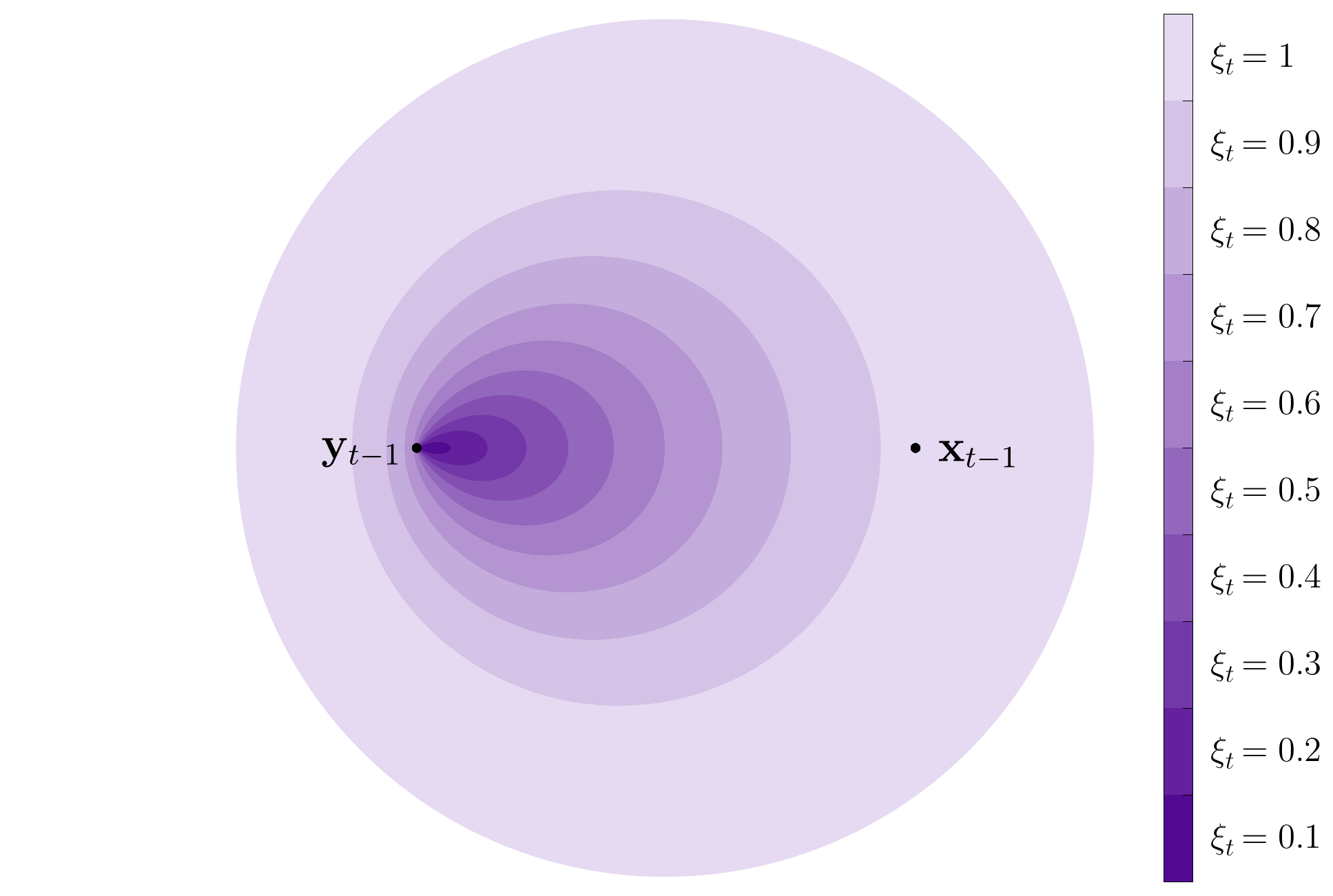} 
\\[1.1em]
 $\T_\xi$ when $\xi_t = 0.2$ (enlarged)
& 
 $\T_\xi$ for many values of $\xi_t$
\end{tabular}
\end{center}
\begin{small}
\caption{\textbf{Geometry of equivalence regions.}
As $\xi_t$ increases, the size of $\T_\xi$ increases, the number of constraints in $\W_t$ increases, and the amount of guaranteed progress increases.
For small $\xi_t$, $\T_\xi$ has a ``teardrop'' shape.  When $\xi_t = 1$, $\T_\xi$ is a ball with center $\oh (\x_{t-1} + \y_{t-1})$.
To generate the figure, we let $\norm{\x_{t-1} - \y_{t-1}}^2/ \Delta_{t-1}  = 1$.  
\label{fig:s_region}
}
\end{small}
\end{figure}

We prove \lemref{lem:gap_progress_betat} in \fullappref{app:proof_blitz_lasso_lemma2}. 
On its own, this lemma is impractical, as it assumes knowledge of $\betat$ when choosing $\W_t$.
Since $\betat$ is unknown
\algnamel{} considers \emph{all possible} $\beta$ when selecting $\W_t$.  To do so, we define the equivalence region
\[
\T_\xi = \bigcup_{\beta \in (0, \toh)} \B_\xi(\beta) \, .
\]
This new equivalence region leads to the following result, which we prove in \fullappref{app:proof_blitz_lasso_theorem}:
\begin{samepage}
\begin{restatable}[Guaranteed progress during iteration $t$ of \algnamel{}]{thm}{blitzlassoprogress} \label{thm:blitz_lasso_theorem}
During iteration $t$ of \algnamel{}, consider any progress coefficient $\xi_t \in (0, 1]$.
For all $i \in [m]$, let $\W_t$ include constraint $i$ if either $\T_\xi \cap \{ \x  \mid  \ip{\a_i, \x} \geq b_i \} \ne \emptyset$ or $\ip{\a_i, \xtm} = b_i$.
Then
\[ \Delta_{t} \leq (1 - \xi_t) \Delta_{t-1} \, . \]
\end{restatable}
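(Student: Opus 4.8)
The plan is to reduce the theorem to the already-established ``known-$\betat$'' result, \lemref{lem:gap_progress_betat}, by exploiting the single observation that each ball $\B_\xi(\beta)$ is contained in the region $\T_\xi$. Concretely, the line search produces some $\alpha_t \in [0,1]$, and with it $\betat = \alpha_t(1+\alpha_t)^{-1} \in [0, \tfrac12]$. I would split on the value of $\alpha_t$: the substantive case is $\alpha_t \in (0,1)$, giving $\betat \in (0,\tfrac12)$, and the two boundary values $\alpha_t = 1$ and $\alpha_t = 0$ are dispatched separately.

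For the main case $\betat \in (0,\tfrac12)$, the argument mirrors the proof of \lemref{lem:gap_progress_betat}, with the set containment as the only new ingredient. First, since $\xt$ minimizes $\fld$ over the constraints in $\W_t$, it satisfies every constraint in $\W_t$; because the limiting constraint $\il$ is violated by $\xt$ (by \eqnref{eqn:y_boundary}), it follows automatically that $\il \notin \W_t$. By the working-set rule of the theorem, $\il \notin \W_t$ forces $\T_\xi \cap \{ \x \mid \ip{\a_{\il}, \x} \ge b_{\il}\} = \emptyset$. Now $\yt$ lies on the boundary of constraint $\il$, so $\yt$ belongs to that half-space and therefore $\yt \notin \T_\xi$. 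Because $\betat \in (0,\tfrac12)$, the ball $\B_\xi(\betat)$ is one of the sets in the union defining $\T_\xi$, hence $\B_\xi(\betat) \subseteq \T_\xi$ and so $\yt \notin \B_\xi(\betat)$. By the definition \eqnref{eqn:equiball} of the equivalence ball, this gives $\norm{\yt - \betat \xtm - (1-\betat)\ytm} \ge \tau_\xi(\betat)$. Finally, since $\W_t$ contains every constraint active at $\xtm$, \lemref{lem:blitz_lasso_lemma} applies to the actual iterates; substituting this norm lower bound into \eqnref{eqn:blitzlasso_lemma} and using the definition \eqnref{eqn:tau} of $\tau_\xi(\betat)$---which is engineered precisely so that the right-hand side collapses to $(1-\xi_t)\Delta_{t-1}$---yields \eqnref{eqn:progress}.

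The boundary cases complete the proof. When $\alpha_t = 1$ we have $\yt = \xt$, so $\Delta_t = \fld(\yt) - \fld(\xt) = 0 \le (1-\xi_t)\Delta_{t-1}$ and the claim is immediate. The delicate case, and the step I expect to be the main obstacle, is $\alpha_t = 0$: here $\betat = 0$, so \lemref{lem:blitz_lasso_lemma} no longer applies and the argument above degenerates because the ball $\B_\xi(\beta)$ shrinks to its center as $\beta \to 0^+$. My plan is to handle this through a limiting argument on the ratio $\tau_\xi(\beta)/\beta$: from \eqnref{eqn:tau} this ratio tends to $\sqrt{2 \xi_t \Delta_{t-1}}$ as $\beta \to 0^+$, while the distance from $\ytm$ to the center $\beta\xtm + (1-\beta)\ytm$ is exactly $\beta\norm{\xtm - \ytm}$. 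Comparing these shows that whenever $\norm{\xtm-\ytm}^2 < 2\xi_t\Delta_{t-1}$ the point $\ytm$ lies in $\B_\xi(\beta)\subseteq \T_\xi$ for all sufficiently small $\beta$; since $\alpha_t = 0$ means the limiting constraint is active at $\ytm$, this would place that constraint's half-space in contact with $\T_\xi$ and force it into $\W_t$, contradicting $\il \notin \W_t$. The complementary regime $\norm{\xtm - \ytm}^2 \ge 2\xi_t\Delta_{t-1}$ must then be ruled out or bounded directly, and verifying this edge behavior carefully is where the proof demands the most attention.
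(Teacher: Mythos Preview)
Your treatment of the main case $\betat \in (0,\tfrac12)$ and of $\alpha_t = 1$ is exactly the paper's argument: the working-set rule forces $\yt \notin \T_\xi$, the containment $\B_\xi(\betat) \subseteq \T_\xi$ gives $\yt \notin \B_\xi(\betat)$, and then \lemref{lem:blitz_lasso_lemma} with the definition of $\tau_\xi$ finishes. No issues there.

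The gap is in your plan for $\alpha_t = 0$. Your idea---show $\ytm \in \T_\xi$ so that the limiting constraint would have been forced into $\W_t$, a contradiction---only fires when $\norm{\xtm - \ytm}^2 < 2\xi_t\Delta_{t-1}$, and you correctly flag the complementary regime as open. That regime cannot be ruled out: it is perfectly possible that $\norm{\xtm - \ytm}^2 \geq 2\xi_t\Delta_{t-1}$ (we only know $\norm{\xtm - \ytm}^2 \leq 2\Delta_{t-1}$ in general), and in that regime $\ytm$ need not lie in any $\B_\xi(\beta)$, so your contradiction evaporates and the argument stalls with no direct bound on $\Delta_t$.

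The paper's route for $\alpha_t = 0$ uses the same limit $\tau_\xi(\beta)/\beta \to \sqrt{2\xi_t\Delta_{t-1}}$ but aims it at $\norm{\xt - \xtm}$ rather than at $\ytm$. Since $\yt = \ytm$, one has $\Delta_t \leq \Delta_{t-1} - \tfrac{1}{2}\norm{\xt - \xtm}^2$ (this is \eqnref{eqn:sjasdf} combined with $\fld(\yt)=\fld(\ytm)$). For the lower bound on $\norm{\xt-\xtm}$: because $\il \notin \W_t$, the half-space $\{\x : \ip{\a_{\il},\x}\ge b_{\il}\}$ is disjoint from every $\B_\xi(\beta)$, which (using $\ip{\a_{\il},\ytm}=b_{\il}$ to simplify the center) gives $\beta\,[b_{\il}-\ip{\a_{\il},\xtm}]/\norm{\a_{\il}} > \tau_\xi(\beta)$ for every $\beta$ with $\tau_\xi(\beta)>0$. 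Dividing by $\beta$ and taking $\beta\to 0^+$ yields $[b_{\il}-\ip{\a_{\il},\xtm}]/\norm{\a_{\il}} \geq \sqrt{2\xi_t\Delta_{t-1}}$. Now Cauchy--Schwarz and $\ip{\a_{\il},\xt}>b_{\il}$ give $\norm{\xt-\xtm} \geq \ip{\a_{\il},\xt-\xtm}/\norm{\a_{\il}} > [b_{\il}-\ip{\a_{\il},\xtm}]/\norm{\a_{\il}}$, so $\norm{\xt-\xtm}^2 \geq 2\xi_t\Delta_{t-1}$ and hence $\Delta_t \leq (1-\xi_t)\Delta_{t-1}$. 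This works uniformly, with no case split on $\norm{\xtm-\ytm}$.
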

\end{samepage}

\subsection{Computing $\W_t$ efficiently with capsule approximations \label{sec:capsule}}

\figref{fig:s_region} contains 
renderings of $\T_\xi$.
Note how $\T_\xi$ grows in size as $\xi_t$ increases, since $\tau_\xi(\beta)$ also increases with $\xi_t$ due to \eqnref{eqn:tau}.
Unfortunately, using $\T_\xi$ 
to select $\W_t$ is problematic in practice, since
testing if
$\T_\xi \cap \{ \x \mid  \ip{\a_i, \x} \geq b_i \} \ne \emptyset$ is not simple.
To reduce computation, \algnamel{} constructs $\W_t$ using a relaxed equivalence region, $\Tcap$. This set is the convex hull of two balls:


\[
\Tcap = \conv{\Bcap_1 \cup \Bcap_2} \, , \quad \text{where} 
\]
\[
\begin{array}{lcl}
\Bcap_1 = \{ \x \mid \norm{\x - \ccap_1} < \rcap \} \, , &&
\Bcap_2 = \{ \x \mid \norm{\x - \ccap_2} < \rcap \} \, ,  \\[0.8em]
\ccap_1 = \ytm + \frac{\xtm - \ytm}{\norm{\xtm - \ytm}} (\dminc + \rcap) \, , &&
\ccap_2 = \ytm + \frac{\xtm - \ytm}{\norm{\xtm - \ytm}} (\dmaxc - \rcap) \, , \\[0.8em]
\dminc = \inf{\beta \, : \, \tau_\xi(\beta) > 0} \beta \norm{\xtm - \ytm} - \tau_\xi(\beta)  \, , &&
\dmaxc = \sup{\beta \, : \, \tau_\xi(\beta) > 0} \beta \norm{\xtm - \ytm} + \tau_\xi(\beta)  \, , 
\end{array}
\]
\vspace{-0.75em}
\[
\text{and} \quad \tmaxc = \sup{\beta \, : \, \tau_\xi(\beta) > 0} \tau_\xi(\beta)  \, .
\]

\begin{figure}
\begin{center}
\includegraphics[width=4.25in]{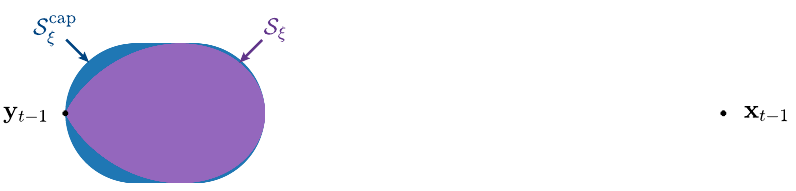} 
\end{center}
\begin{small}
\caption{\textbf{Capsule approximation.}
\label{fig:s_region_comparison}
To simplify computation, \algnamel{} constructs $\W_t$ using $\Tcap$, which is a relaxation of $\T_\xi$.  $\T_\xi$ is shaped like a teardrop when $\xi_t$ is small, while $\Tcap$ is the smallest capsule (convex hull of two balls with equal radius) that contains $\T_\xi$.
For the figure, $\xi_t = 0.4$, and $\norm{\xtm - \ytm}^2 / \Delta_{t-1} = 1$.
}
\end{small}
\end{figure}

In \fullappref{sec:proof_capsule_subset}, we prove that $\T_\xi \subseteq \Tcap$.
Illustrated in \figref{fig:s_region_comparison}, 
$\Tcap$
is the smallest ``capsule'' (set of points within a fixed distance from a line segment) for which $\T_\xi \subseteq \Tcap$.
We define the capsule using three scalars: $\dminc$, $\dmaxc$, and $\tmaxc$ (in addition to the points $\xtm$ and $\ytm$).  
The radius of the capsule is $\tmaxc$, while $\dminc$ and $\dmaxc$ parameterize the capsule's endpoints.
Determining these three parameters requires solving three 1-D optimization problems,
which we can solve efficiently:
\begin{restatable}[Computing capsule parameters is quasiconcave]{thm}{quasiconcavethm} \label{thm:quasiconcave}
For each $s \in \{-1, 0, +1\}$, the function $q_s(\beta) = s \beta \norm{\xtm - \ytm} + \tau_\xi(\beta)$ is quasiconcave over $\{ \beta \mid q_s(\beta) > 0 \}$.
Thus, $\dminc$, $\dmaxc$, and $\tmaxc$ are suprema of 1-D quasiconcave functions.
\end{restatable}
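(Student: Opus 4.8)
The plan is to prove that each $q_s$ is \emph{unimodal} on the interval where the square root in \eqnref{eqn:tau} is active, from which the stated quasiconcavity and the representation of the three capsule parameters as suprema both follow. Writing $D\defeq\norm{\xtm-\ytm}$, $\rho\defeq D^2/(2\Delta_{t-1})$, and letting $I\defeq\{\beta\in(0,\toh):\tau_\xi(\beta)>0\}$ be the index set over which the parameters are defined, the definitions give $\tmaxc=\sup{\beta\in I}q_0(\beta)$, $\dmaxc=\sup{\beta\in I}q_{+1}(\beta)$, and $\dminc=-\sup{\beta\in I}q_{-1}(\beta)$, so once each $q_s$ is shown quasiconcave all three are computed as the supremum of a one-dimensional quasiconcave function (by golden-section search, say), which is the ``Thus'' clause. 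On $(0,\toh)$ the bracket in \eqnref{eqn:tau} is positive exactly where the quadratic $2\rho\beta^2-(1+\rho+\xi_t)\beta+\xi_t$ is positive; since that quadratic is positive at $\beta=0$ and nonpositive at $\beta=\toh$, the set $I$ is a single open interval $(0,\beta_0)$ on which $\tau_\xi$ is smooth and strictly positive. I would establish quasiconcavity of each $q_s$ on $I$; this is the domain relevant to the capsule parameters, and it equals $\{q_s>0\}$ for $s\in\{0,-1\}$ (restriction to a subinterval preserves quasiconcavity).

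For unimodality I would invoke the one-dimensional second-order criterion: a $C^2$ function on an interval is quasiconcave once every interior stationary point is a strict local maximum (two strict maxima would force an intervening minimum). Hence it suffices to show $q_s''(\beta^\ast)<0$ whenever $\beta^\ast\in I$ and $q_s'(\beta^\ast)=0$. Put $P(\beta)\defeq\tau_\xi(\beta)^2/(2\Delta_{t-1})=\beta^2\bigl[1+\tfrac{\beta(1-\rho)}{1-\beta}-\tfrac{1-\xi_t}{1-2\beta}\bigr]$ on $I$, so $\tau_\xi=\sqrt{2\Delta_{t-1}}\,\sqrt{P}$ and, since $s\beta D$ is affine, $q_s''=\tau_\xi''$ has the sign of $2PP''-(P')^2$. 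The key reduction is the stationarity relation: $q_s'(\beta^\ast)=0$ means $\tau_\xi'(\beta^\ast)=-sD$, i.e.
\[ \tfrac{\sqrt{2\Delta_{t-1}}\,P'(\beta^\ast)}{2\sqrt{P(\beta^\ast)}}=-sD, \]
which squares (using $D^2=2\Delta_{t-1}\rho$) to $(P')^2=4s^2\rho P$ at $\beta^\ast$. Substituting this into $2PP''-(P')^2=2P\bigl(P''-2s^2\rho\bigr)$ and using $P>0$ collapses the required sign condition to the single scalar inequality
\[ P''(\beta^\ast)<2s^2\rho. \]

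It remains to verify this inequality, for which I would use the closed form obtained by partial fractions,
\[ P''(\beta)=2\rho+\tfrac{2(1-\rho)}{(1-\beta)^3}-\tfrac{2(1-\xi_t)}{(1-2\beta)^3}, \]
together with the structural bound $\rho\in[0,1]$ (so $1-\rho\ge0$): because $\xtm$ is the projection of the origin onto the feasible set of subproblem $t-1$ and $\ytm$ lies in that set, $\ip{\xtm,\ytm-\xtm}\ge0$, whence $\norm{\xtm-\ytm}^2\le\norm{\ytm}^2-\norm{\xtm}^2=2\Delta_{t-1}$. For $s=0$ the target is $P''(\beta^\ast)<0$ at a zero of $P'$; for $s=\pm1$ it is $P''(\beta^\ast)<2\rho$ at a point where $(P')^2=4\rho P$.

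The main obstacle is that these inequalities are \emph{false pointwise}---for instance $P''$ can exceed $2\rho$ at non-stationary $\beta$---so the stationarity relation must genuinely be exploited. For $s=0$ this is comparatively clean: solving $P'(\beta^\ast)=0$ for $\tfrac{1-\xi_t}{(1-2\beta^\ast)^2}$ and substituting turns $P''(\beta^\ast)$ into an explicit rational function of $\beta^\ast$ whose negativity on $(0,\beta_0)$ I expect to certify after clearing the positive denominators $(1-\beta)^3(1-2\beta)^3$ and invoking $1-\rho\ge0$ together with the elementary comparison $(1-2\beta)^{-k}>(1-\beta)^{-k}$ on $(0,\toh)$. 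The case $s=\pm1$ is the hard part: the relation $(P')^2=4\rho P$ is quadratic rather than linear in the rational terms, so eliminating $\xi_t$ and reducing $P''(\beta^\ast)<2\rho$ to a certifiable polynomial inequality in $(\beta^\ast,\rho,\xi_t)$ requires combining the closed forms of both $P$ and $P'$ and a more delicate sign analysis; the degenerate possibility $P''(\beta^\ast)=0$ is excluded by the same strict computation (or handled by continuity). Carrying out this reduction case-by-case in $s$ is the crux of the argument.
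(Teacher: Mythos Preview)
Your second-derivative route is a genuinely different strategy from the paper's, and it is the harder road. The paper avoids all of your case analysis by a single change of variables: substituting $\beta=\theta(1+\theta)^{-1}$ gives $\beta/(1-\beta)=\theta$ and $(1-2\beta)^{-1}=(1+\theta)/(1-\theta)$, so the bracket in \eqnref{eqn:tau} becomes $1+\theta(1-\rho)-(1-\xi_t)\tfrac{1+\theta}{1-\theta}$, a concave function of $\theta$ (affine plus a nonpositive multiple of the convex $(1+\theta)/(1-\theta)$, using $\rho\le 1$ and $\xi_t\le 1$). Then $q_s(\beta(\theta))=\tfrac{\theta}{1+\theta}\bigl(sD+\sqrt{2\Delta_{t-1}}\,[\cdot]_+^{1/2}\bigr)$ is the product of two positive concave functions, hence log-concave in $\theta$, hence quasiconcave; monotonicity of $\theta\mapsto\beta$ transfers quasiconcavity back to $\beta$. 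All three values of $s$ are handled at once, and no stationary-point computation is needed.

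Your proposal, by contrast, has a real gap precisely where you flag it: the inequality $P''(\beta^\ast)<2\rho$ subject to $(P')^2=4\rho P$ is not verified, and you describe the required elimination only as ``a more delicate sign analysis'' to be carried out. That is the entire content of the $s=\pm1$ cases, so the proof is not complete as written. There is also a smaller domain issue you brush past: for $s=+1$ the set $\{q_{+1}>0\}$ strictly contains $I$ (since $q_{+1}(\beta)=\beta D>0$ wherever $\tau_\xi=0$), so establishing quasiconcavity only on $I$ does not give the stated conclusion without an extra argument gluing on the linear regime. The paper's log-concavity argument sidesteps both difficulties.
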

We prove \thmref{thm:quasiconcave} in \fullappref{app:proof_quasiconcave}.
Making use of this result, \algnamel{} computes $\dminc$, $\dmaxc$, and $\tmaxc$ using the bisection method.
Empirically we find the computational cost of computing $\Tcap$ is negligible compared to the cost of solving each subproblem (unless \prbref{prb:general} is very small).


After computing $\Tcap$, 
it is simple to test whether $\Tcap \cap \{ \x \mid \ip{\a_i, \x} \geq b_i \} \ne \emptyset$.
The condition is true iff $\Bcap_1$ or $\Bcap_2$ intersect $\{ \x \mid \ip{\a_i, \x} \geq b_i \}$.
It follows that $\Tcap \cap \{ \x \mid \ip{\a_i, \x} \geq b_i \} \ne \emptyset$ iff
\[
b_i - \mathrm{max}\left\{\ip{\a_i, \ccap_1}, \ip{\a_i, \ccap_2} \right\}
< {\norm{\a_i}} \rcap  \, .
\]

\algnamel{} includes in the working set any $i$ for which either the above inequality is satisfied or $\ip{\a_i, \xtm} = b_i$.
Since $\T_\xi \subseteq \Tcap$, this $\W_t$ satisfies the conditions for \thmref{thm:blitz_lasso_theorem}, ensuring the suboptimality gap decreases by at least a $1 - \xi_t$ factor during iteration $t$.

\subsection{\algnamel{} definition and convergence guarantee}

\begin{algorithm}[t]
\caption{\algnamel{} for solving \prbref{prb:lasso_dual}}
\label{alg:blitz_lasso}
\begin{algorithmic}
\State \textbf{input} feasible point $\y_0$ and method for choosing $\xi_t$ for all $t$
\State \textbf{initialize} $\x_0 \gets \m{0}$
\For{$t = 1, \ldots, T$ \textbf{until} $\x_T = \y_T$}
  \State Choose progress coefficient $\xi_t \in (0, 1]$
  \State $\Tcap \gets {\tt compute\_capsule\_region}(\xi_t, \x_{t-1}, \y_{t-1})$
  \quad \emph{\# see \secref{sec:capsule}}
  \State $\W_t \gets \{ i \in [m] \mid {\tt include\_in\_working\_set?}(i, \Tcap, \xtm)\}$
\State $\x_t \gets \mathrm{argmin} \left\{ \tfrac{1}2 \norm{\x}^2 \, \big| \, \ip{\a_i, \x} \leq b_i \ \mathrm{for\ all}\ i \in \W_t \right\}$
  \State $\y_t \gets {\tt compute\_extreme\_feasible\_point}(\x_t, \y_{t-1})$
\EndFor
\State \textbf{return} $\y_T$
\\
\vspace{-0.5em}
\hrulefill
\Function{${\tt include\_in\_working\_set?}$}{$i, \Tcap, \x_{t-1}$}
\State $\ccap_1, \ccap_2, \rcap \gets {\tt get\_capsule\_centers\_and\_radius}(\Tcap)$
  \quad \emph{\# see \secref{sec:capsule}}
\If{$\left({b_i - \mathrm{max}\left\{ \ip{\a_i, \ccap_1}, \ip{\a_i, \ccap_2} \right\} } < \norm{\a_i} \rcap \right)\ \textbf{or}\ (\ip{\a_i, \x_{t-1}} = b_i)$}
\State \textbf{return true}
\EndIf
\State \textbf{return false}
\EndFunction
\\
\vspace{-0.5em}
\hrulefill
\Function{${\tt compute\_extreme\_feasible\_point}$}{$\x_t, \y_{t-1}$}
\State $\alpha_t \gets 1$
\For{$i \in [m]$}
\If{$(\ip{\a_i, \x_t} > b_i)$}
\State $\alpha_t \gets \mathrm{min}\left\{\alpha_t, \frac{b_i - \ip{\a_i, \ytm}}{\ip{\a_i, \xt} - \ip{\a_i, \ytm}}\right\}$
\EndIf
\EndFor
\State \textbf{return} $\alpha_t \x_t + (1 - \alpha_t) \y_{t-1}$
\EndFunction
\end{algorithmic}
\end{algorithm}

We formally define \algnamel{} in \algref{alg:blitz_lasso}.
\algnamel{} assumes an initial feasible point $\y_0$ and initializes $\x_0$ as the zero vector (``subproblem 0'' is implicitly defined as minimizing $\fld$ subject to no constraints).
The suboptimality gap decreases with the following guarantee:

\begin{restatable}[Convergence bound for \algnamel{}]{thm}{blitzlassoconvergence} \label{thm:blitz_lasso_theorem2}
For any iteration $T$ of \algref{alg:blitz_lasso}, define the suboptimality gap $\Delta_T = \fld(\y_T) - \fld(\x_T)$.
For all $T > 0$, we have
\[
\Delta_T \leq \Delta_0 \prod_{t=1}^T (1 - \xi_t) \, .
\]
\end{restatable}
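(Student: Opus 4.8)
The plan is to obtain the product bound by telescoping the single-iteration contraction of \thmref{thm:blitz_lasso_theorem}. The heavy lifting has already been done: \thmref{thm:blitz_lasso_theorem} guarantees $\Delta_t \leq (1-\xi_t)\Delta_{t-1}$ whenever $\W_t$ contains every constraint active at $\xtm$ together with every constraint whose feasible halfspace meets $\T_\xi$. So the first step is to verify that the working set built in \algref{alg:blitz_lasso} satisfies these hypotheses at each iteration $t$. The inclusion test of \algref{alg:blitz_lasso} adds constraint $i$ to $\W_t$ whenever $\ip{\a_i,\xtm}=b_i$ or $\Tcap \cap \{\x \mid \ip{\a_i,\x}\geq b_i\} \neq \emptyset$; since $\T_\xi \subseteq \Tcap$ (established in \secref{sec:capsule}), any constraint meeting $\T_\xi$ also meets $\Tcap$ and is therefore included. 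Hence the algorithm's $\W_t$ is a superset of the set required by \thmref{thm:blitz_lasso_theorem}, and the per-iteration bound $\Delta_t \leq (1-\xi_t)\Delta_{t-1}$ holds for every $t \in \{1, \ldots, T\}$.

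With the per-iteration contraction in hand, I would finish by induction on $T$. The base case is the empty product, $\Delta_0 \leq \Delta_0$. For the inductive step, assume $\Delta_{t-1} \leq \Delta_0 \prod_{s=1}^{t-1}(1-\xi_s)$; multiplying both sides by the factor $1-\xi_t$, which is nonnegative because $\xi_t \in (0,1]$, and chaining with $\Delta_t \leq (1-\xi_t)\Delta_{t-1}$ yields $\Delta_t \leq \Delta_0 \prod_{s=1}^{t}(1-\xi_s)$. Taking $t=T$ gives the claim. It is worth recording alongside this that $\Delta_t \geq 0$ for all $t$, since $\xt$ minimizes $\fld$ over a relaxation of \prbref{prb:lasso_dual} while $\yt$ is feasible, so that $\fld(\xt) \leq \fld(\xstar) \leq \fld(\yt)$; this keeps every quantity in the chain well-behaved and confirms the bound is nonvacuous.

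The only delicate points are the boundary values of the line-search parameter $\alpha_t$, and I expect these---rather than the telescoping---to be the main thing to check. When the algorithm terminates with $\xt=\yt$ (the case $\alpha_t=1$, where $\xt$ is already feasible) we have $\Delta_t = \fld(\yt)-\fld(\xt)=0$, which trivially satisfies $\Delta_t \leq (1-\xi_t)\Delta_{t-1}$, so the terminating iteration is covered. The genuinely degenerate case is $\alpha_t = 0$, which is excluded from \lemref{lem:blitz_lasso_lemma}; here I would argue that it cannot arise for the working set actually constructed, since a constraint active at $\ytm$ and violated at $\xt$ would place a point of its feasible halfspace into $\Tcap$, forcing that constraint into $\W_t$ and contradicting that $\xt$ violates it---or, failing a clean exclusion, recover the bound for $\alpha_t=0$ by a limiting argument as $\alpha_t \downarrow 0$. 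Once this case is dispatched, the contraction of \thmref{thm:blitz_lasso_theorem} applies at every iteration and the product bound follows.
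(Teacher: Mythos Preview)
Your approach matches the paper's: use $\T_\xi \subseteq \Tcap$ to verify that \algref{alg:blitz_lasso}'s working set satisfies the hypotheses of \thmref{thm:blitz_lasso_theorem}, then telescope the per-iteration contraction by induction. Your discussion of the $\alpha_t = 0$ edge case is unnecessary here, since \thmref{thm:blitz_lasso_theorem} is stated without any restriction on $\alpha_t$ (the assumption $\alpha_t > 0$ appears only in \lemref{lem:blitz_lasso_lemma}, and the edge cases are dispatched inside the theorem's own proof), so you may cite it as a black box and skip that paragraph.
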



We have yet to address several practical considerations for \algnamel{}.
This includes analysis of approximate subproblem solutions and a procedure for selecting $\xi_t$ during each iteration.
We address these details in \secref{sec:blitz} in the context of our more general working set algorithm, \algname{}.
Before that, we define a more general problem formulation.

\section{Exploiting piecewise linear structure in convex problems  \label{sec:formulation}}

Rather than exploiting irrelevant constraints, \algname{} exploits piecewise linear structure. 
In this section, we reformulate the objective function
to accommodate
this more general concept.

\subsection{Piecewise problem formulation \label{sec:piecewise_framework}}

For the remainder of this work, we consider convex optimization problems of the form
\begin{equation}
 \minimize{\x \in \D} f(\x) := \psi(\x) + \sump \phi_i(\Ai \x)  \, .
  \label{prb:general}
  \tag{P}
 \end{equation} 
We assume each $\phi_i$ is \emph{piecewise}.
That is, for each $\phi_i$, we assume a domain-partitioning function $\pi_i \, : \, \D \rightarrow \{ 1, 2, \ldots, p_i \}$ and corresponding subfunctions $\phi_i^{(1)}, \phi_i^{(2)}, \ldots, \phi_i^{(p_i)}$ such that
\[
\phi_i(\x) = 
\left\{
\begin{array}{cl}
\phi_i^{(1)}(\x) & \text{if}\ \pi_i(\x) = 1, \\[0.0em]
\vdots & \\[0.2em]
\phi_i^{(p_i)}(\x) & \text{if}\ \pi_i(\x) = p_i \, . \\[0.1em]
\end{array}
\right. 
\]
We assume that $\psi$, $\phi_i$, and $\phi_i^{(k)}$ for all $i$ and $k$ are convex lower semicontinuous functions.  We also assume that $\psi$ is $1$-strongly convex. (We can adapt this formulation to the more general case that $\psi$ is $\gamma$-strongly convex for some $\gamma > 0$ by scaling $f$ by $\gamma^{-1}$.)

Let $\X_i^{(k)}$ denote the $k$th subdomain of $\phi_i$:
$
\X_i^{(k)} = \{ \x \mid \pi_i(\Ai \x) = k \}  .
$
Denoting \prbref{prb:general}'s solution by $\xstar$, 
we use $\Xis$ to denote the subdomain of $\phi_i$ that contains $\xstar$.  

\algname{} efficiently solves \prbref{prb:general} by exploiting $f$'s piecewise structure.
We focus on instances of \prbref{prb:general} for which the piecewise functions are the primary obstacle to efficient optimization (generally problems for which $m$ is large).
We also focus on instances of \prbref{prb:general} for which many $\phi_i^{(k)}$ subfunctions are linear.
We base our methods on the following proposition:
\begin{prop}[Exploiting piecewise structure at $\xstar$] \label{prop:pw}
For each $i \in [m]$, assume knowledge of $\pi_i(\Ai \xstar)$ and whether \mbox{$\xstar \in \bd{\Xis}$}. Define
$\phi_i^\star = \phi_i$ if $\xstar \in \bd{\Xis}$ and 
$\phi_i^\star = \phi_i^{(\pi_i(\xstar))}$ otherwise.
Then $\xstar$ is also the solution to 
\begin{equation}
 \minimize{\x \in \D} f^\star(\x) := \psi(\x) + \sump \phi^\star_i(\Ai \x)  \, .
  \label{prb:oracle}
  \tag{P$^\star$}
 \end{equation}
\end{prop}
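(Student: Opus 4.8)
The plan is to avoid subdifferential calculus entirely and argue purely from convexity: I will show that $f$ and $f^\star$ coincide on an open neighborhood of $\xstar$, so that $\xstar$ is a \emph{local} minimizer of $f^\star$, and then invoke the elementary fact that a local minimizer of a convex function is automatically a global minimizer. The first thing to record is that $f^\star$ is convex, being the sum of the convex function $\psi$ with the terms $\phi_i^\star(\Ai \x)$, each of which is convex in $\x$ (the $\phi_i^\star$ are convex by hypothesis, equal either to $\phi_i$ or to some $\phi_i^{(k)}$, and composition with the linear map $\Ai$ preserves convexity).

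The heart of the argument is to produce an open set $N \ni \xstar$ on which $f = f^\star$, handled one term at a time using exactly the two cases that define $\phi_i^\star$. When $\xstar \in \bd{\Xis}$, the proposition sets $\phi_i^\star = \phi_i$, so the $i$-th terms of $f$ and $f^\star$ are identical as functions and agree on all of $\D$. When $\xstar \notin \bd{\Xis}$, I would first note that since $\xstar \in \Xis$ by the definition of $\Xis$, a point of a set that is not on its boundary must lie in its interior, so $\xstar \in \int{\Xis}$. On the open set $\int{\Xis} \subseteq \Xis$ every $\x$ satisfies $\pi_i(\Ai \x) = \pi_i(\Ai \xstar)$, whence $\phi_i(\Ai \x) = \phi_i^{(\pi_i(\Ai \xstar))}(\Ai \x) = \phi_i^\star(\Ai \x)$; that is, the $i$-th terms agree on the open neighborhood $\int{\Xis}$ of $\xstar$. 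Intersecting these finitely many open neighborhoods (and noting the $\psi$ terms are trivially equal) yields an open $N \ni \xstar$ with $f|_N = f^\star|_N$.

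The conclusion is then immediate: $\xstar$ minimizes $f$ globally, hence over $N$ in particular; since $f = f^\star$ on $N$, the point $\xstar$ is a local minimizer of $f^\star$; and convexity of $f^\star$ upgrades this to global minimality, so $\xstar$ solves \prbref{prb:oracle}.

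I expect the only subtle point to be the interior case --- specifically, confirming that ``not on the boundary'' genuinely places $\xstar$ in the \emph{open} set $\int{\Xis}$, so that the two objectives agree on a full neighborhood rather than merely along $\Xis$. An alternative would be to differentiate: because two convex functions that coincide on an open neighborhood of a point share their subgradients there, one obtains $\partial\,[\phi_i^\star(\Ai\,\cdot)](\xstar) = \partial\,[\phi_i(\Ai\,\cdot)](\xstar)$ in both cases and hence $\partial f^\star(\xstar) = \partial f(\xstar) \ni \m 0$; but this route additionally requires a sum rule for subdifferentials with its attendant constraint qualification, which the local-agreement argument neatly avoids.
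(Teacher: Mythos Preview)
Your argument is correct. The paper's own justification is a single sentence rather than a formal proof: it observes that $f^\star$ preserves the subgradient of $f$ at $\xstar$, so $\m{0} \in \partial f^\star(\xstar)$. Both routes rest on the same underlying observation---that $f$ and $f^\star$ agree on an open neighborhood of $\xstar$ whenever $\xstar \in \int{\Xis}$---but differ in how they finish. The paper passes through subgradient preservation (which, as you note, formally leans on a sum rule for $\partial f^\star$); you instead use the elementary fact that a local minimizer of a convex function is global. Your version is slightly cleaner in that it sidesteps any constraint-qualification technicalities, at the cost of writing out the neighborhood construction explicitly. Your identification of the only delicate step---that $\xstar \in \Xis \setminus \bd{\Xis}$ forces $\xstar \in \int{\Xis}$, so the agreement is genuinely on an open set---is accurate and correctly handled.
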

\propref{prop:pw} states that if $f$'s minimizer does not lie on the boundary of $\Xis$, then replacing $\phi_i$ with the subfunction $\phi_i^{(\pi_i(\xstar))}$ in $f$ does not change the objective's minimizer.
We can verify this by observing that $f^\star$ preserves the subgradient of $f$ at $\xstar$, which implies that $\m{0} \in \partial f^\star(\xstar)$.

Despite matching solutions, solving \prbref{prb:oracle} can require \emph{much less} computation than solving \prbref{prb:general}.
This is especially true when many $\phi_i^\star$ are linear subfunctions. In this case, the linear subfunctions collapse into a single linear term, making $f^\star$ simpler to minimize than $f$.
We next provide some examples to illustrate this idea.


\subsection{Piecewise linear structure in machine learning \label{sec:examples}} 
We now describe several instances of \prbref{prb:general} that are importance to machine learning.  

\subsubsection{Inactive constraints in constrained optimization \label{sec:constraind_example}}

We first consider constrained optimization (for which \prbref{prb:lasso_dual} is a special case):
\begin{equation}
\begin{array}{cll}
\minimize{\x \in \D} & \psi(\x) & \\
\mathrm{s.t.} & \sigma_i(\Ai \x) \leq 0 & i = 1, \ldots, m \, .
\end{array}
\label{prb:constrained}
\tag{PC}
\end{equation}
If each $\sigma_i$ is convex and $\psi$ is $1$-strongly convex,
this problem
can be transformed into an instance of \prbref{prb:general} using implicit constraints. 
For each $i \in [m]$, define $\phi_i$ as 
\[
\phi_i(\Ai \x) = 
\left\{
  \begin{array}{ll}
    0 & \text{if}\ \sigma_i(\Ai \x) \leq 0  , \\
    +\infty & \text{otherwise.}
  \end{array}
  \right. 
\]
There are two subdomains for each $\phi_i$---the constraint's feasible and infeasible regions. 
Since $\xstar$ must satisfy all constraints, note $\Xis$ represents constraint $i$'s feasible region.

Let us consider \propref{prop:pw} in the context of \prbref{prb:constrained}.
Define $\mathcal{W}^\star = \{ i \mid \sigma_i(\xstar) = 0 \}$, the set of constraints that are active at \prbref{prb:constrained}'s solution.
The condition \mbox{$\xstar \in \bd{\Xis}$} implies \mbox{$\sigma_i(\Ai \xstar) = 0$} and $i \in \W^*$.
To define each $\phi_i^\star$, we let $\phi_i^\star = \phi_i$ for all $i \in \W^\star$ and $\phi_i^\star = 0$ otherwise.
Applying \propref{prop:pw}, we see that $\xstar$ also solves
\begin{equation}
\begin{array}{cll}
\minimize{\x \in \D} & \psi(\x) & \\
\mathrm{s.t.} & \sigma_i(\Ai \x) \leq 0 & \quad \mathrm{for}\ i \in \mathcal{W}^\star \, .  \end{array}
\label{prb:constrained_star}
\tag{PC$^\star$}
\end{equation}
That is, we have reduced \prbref{prb:constrained} to a problem with only $ \abs{\mathcal{W}^\star}$ constraints. 
Since often $\abs{\mathcal{W}^\star} \ll m$,
solving \prbref{prb:constrained_star} can be significantly simpler than solving the original problem.

\subsubsection{Zero-valued weights in sparse optimization \label{sec:sparsity_inducing}}

\begin{table*}
\begin{center}
\begin{small}
\begin{tabular}{l@{\hskip 0.1in}c@{\hskip 0.1in}c@{\hskip 0.1in}c}
\hline
\abovespace\belowspace
\sc{Loss} & $L_j(\ip{\a_j, \omegab})$ & $L_j^*(x_j)$  \\
\hline
\\[-0.9em]
\abovespace
\belowspace
Logistic & 
$4 \log(1 + \exp(-b_j \ip{\a_j, \omegab}))$ &
$
-\tfrac{1}4 \tfrac{x_j}{b_j} \log( - \tfrac{x_j}{b_j} ) +\tfrac{1}4  (1 + \tfrac{x_j}{b_j}) \log( 1 + \tfrac{x_j}{b_j} )
$
\\[-0.8em]
 \\
\abovespace
\belowspace
\begin{tabular}{@{}l@{}}
Squared \\ hinge 
\end{tabular}
 &
$
\left\{
  \begin{array}{ll}
    \oh (1 - b_j \ip{\a_j, \omegab})^2  & \text{if}\ b_j \ip{\a_j, \omegab} \leq 1  , \\
    0 & \text{otherwise}
  \end{array}
  \right.
$
&
$
\left\{
  \begin{array}{ll}
    \oh (\tfrac{x_j}{b_j} + 1)^2 - \oh  & \text{if}\ \tfrac{x_j}{b_j} \leq 0  , \\
    +\infty & \text{otherwise}
  \end{array}
  \right.
$
\\[-0.8em]
 \\
\abovespace
\belowspace
Squared & 
$\oh (\ip{\a_j, \omegab} - b_j)^2$ &
$\oh (x_j + b_j)^2 - \oh b_j^2$ 
\\[-0.8em]
 \\
\abovespace
\belowspace
Huber
 &
$
\left\{
  \begin{array}{ll}
    \oh ( \ip{\a_j, \omegab} - b_j )^2 & \text{if}\ \abs{\ip{\a_j, \omegab} - b_j} \leq s , \\
    s \abs{\ip{\a_j, \omegab} - b_j} - \oh s^2 & \text{otherwise}
  \end{array}
  \right.
$
&
$
\left\{
  \begin{array}{ll}
    \oh ( x_j + b_j)^2 - \oh b_j^2  & \text{if}\ \abs{x_j} \leq s  , \\
    +\infty & \text{otherwise}
  \end{array}
  \right.
$
\\[-0.8em]
 \\
\hline
\end{tabular}
\end{small}
\end{center}
\caption{\textbf{Smooth loss examples.}
The dual of $\ell_1$-regularized smooth loss minimization is a strongly convex constrained problem.
Each feature corresponds to a dual constraint. We can use working sets to make convergence times less dependent on the number of features.
In the table, $L_j$ is the loss for training example $j$, $(\a_j, b_j)$ is the $j$th example, and $s$ is a design parameter. 
}
\label{tbl:smooth_losses}
\end{table*}

Optimization with sparsity-inducing penalties is popular in machine learning---see \cite{Bach:2012} for a survey.
 Here we consider 
learning $\ell_1$-regularized linear models.

Let $( (\a_j, b_j) )_{j=1}^n$ be a collection of $n$ training examples where $\a_j \in \reals^m$ is a feature vector and $b_j \in \mathcal{B}$ is a corresponding label.  Typically $\mathcal{B} = \{-1, +1\}$ for classification problems, while $\mathcal{B} = \reals$ for regression.
We can fit parameters of a linear model to this data by solving
\begin{equation}
\minimize{\omegab \in \reals^m} \sum_{j=1}^n L_j(\ip{\a_j, \omegab})  + \lambda \norm{\omegab}_1 \, .
\label{prb:l1_loss}
\tag{PL1}
\end{equation}
Above $\lambda > 0$ is a tuning parameter, and $L_j$ is a loss function (parameterized by $b_j$).
  When $\lambda$ is sufficiently large, a solution $\omegab^\star$ is \emph{sparse}, meaning most entries of $\omegab^\star$ equal 0.

\prbref{prb:l1_loss} is not directly an instance of \prbref{prb:general}, since this problem is not $1$-strongly convex in general.
 Assuming each $L_j$ is $1$-smooth, however, we can transform \prbref{prb:l1_loss} into an instance of \prbref{prb:constrained} by considering the problem's dual \citep[Chapter 4]{Borwein:2005}: 
\begin{equation}
\begin{array}{cll}
\minimize{\x \in \D} & \sum_{j=1}^n L_j^*(x_j) &  \\
\mathrm{s.t.} & \abs{\ip{\A_i, \x}} \leq \lambda & i = 1, \ldots, m \, .
\end{array}
\label{prb:l1_dual}
\tag{PL1D}
\end{equation}
By solving \prbref{prb:l1_loss}, we can efficiently recover \prbref{prb:l1_dual}'s solution and vice versa.
In the dual problem, $\A_i \in \D$ refers to the $i$th column (feature) of the $n \times m$ design matrix $[\a_1 , \ldots , \a_n]^T$.  $L_j^*$ denotes the convex conjugate of $L_j$. 
Since each $L_j$ is 1-smooth, the $L_j^*$ terms are 1-strongly convex \citep[Chapter 12]{Rockafellar:1997}.
We include several examples of smooth loss functions and their convex conjugates in \tblref{tbl:smooth_losses}.

This dual transformation allows \algname{} to exploit sparsity to solve \prbref{prb:l1_loss} efficiently.
Due to the correspondence between features and dual constraints, zero entries in $\omegab^\star$ correspond to constraints that are unnecessary for computing \prbref{prb:l1_dual}'s solution.
That is, if we define $\W^\star = \{ i \mid \omega_i^\star \ne 0 \}$, then we can also compute $\xstar$ by minimizing the dual objective subject only to constraints in $\W^\star$.
Since $\omegab^\star$ is sparse, solving the problem with only $\abs{\W^\star}$ constraints typically requires much less computation than solving \prbref{prb:l1_dual} directly.


\subsubsection{Training examples in support vector machines}
\label{sec:svm_example}

Our final example considers support vector machines and, more generally, $\ell_2$-regularized loss minimization problems with piecewise loss functions.
Given training examples $\listm{ (\a_i, b_i) }$ and a tuning parameter $C > 0$,
we can learn a linear model by solving
\[
\minimize{\x \in \D} \tfrac{1}2 \norm{\x}^2 + C \sump L_i(\x) \, .
\tag{PL2}
\label{prb:l2_loss}
\]
Each $L_i$ is a loss function (parameterized by $\a_i$ and $b_i$). 
Often each $L_i$ has piecewise linear components; we include some examples of  such losses in \tblref{tbl:regularized_piecewise_losses}.  

\begin{table}
\begin{center}
\begin{small}
\begin{tabular}{@{\hskip 0.10in}l@{\hskip 0.35in}c@{\hskip 0.35in}l@{}l@{\hskip 0.1in}}
\hline
\abovespace\belowspace
\sc{Loss} & \sc{Use} & \multicolumn{2}{c}{$L_i(\x)\ \ \ \ \ $} \\
\hline
\\[-0.8em]
\abovespace
\belowspace
Hinge & 
Classification & 
$\left\{
  \begin{array}{ll}
    1 - b_i \ip{\a_i, \x} \\
    0 
  \end{array}
  \right.
$
& 
$
  \begin{array}{l}
    \text{if}\ b_i \ip{\a_i, \x} \leq 1  , \\
    \text{otherwise}
  \end{array}
$
\\[-0.8em]
\\
\abovespace
\belowspace
\begin{tabular}{@{}l@{}}
Squared \\ hinge 
\end{tabular} &
Classification & 
$\left\{
  \begin{array}{l}
    \oh (1 - b_i \ip{\a_i, \x})^2 \\
    0 
  \end{array}
  \right.
$
&
$
\begin{array}{l}
  \text{if}\ b_i \ip{\a_i, \x} \leq 1  , \\
  \text{otherwise}
\end{array}
$
\\[-0.8em]
\\
\abovespace
\belowspace
Quantile & Regression & 
$\left\{
\begin{array}{l}
  (1 - s) (b_i - \ip{\a_i, \x})  \\
  s (\ip{\a_i, \x} - b_i) 
\end{array}
\right.
$
& 
$
\begin{array}{l}
  \text{if}\ \ip{\a_i, \x} \leq b_i  , \\
  \text{otherwise}
\end{array}
$
\\
\\[-0.8em]
\hline
\end{tabular}
\end{small}
\end{center}
\caption{
\textbf{Losses with piecewise linear subfunctions.}
For $\ell_2$-regularized learning, we can leverage piecewise losses to reduce the training time's dependence on the number of observations.  
Above, $L_i$ is the loss for example $i$, $(\a_i, b_i)$ is the $i$th training example, and $s$ is a design parameter. 
}
\label{tbl:regularized_piecewise_losses}
\end{table}

When each $L_i$ has piecewise linear components, we can solve the problem quickly by exploiting piecewise structure.
Consider \prbref{prb:l2_loss} instantiated with hinge loss.
Given knowledge of $\pi_i(\xstar)$ and whether \mbox{$\xstar \in \bd{\Xis}$} for each $i$ (for this problem, this implies knowledge of the sign of \mbox{$1 - b_i \ip{\a_i, \xstar}$}), 
we can define
\[
  L_i^\star(\x) = 
\left\{
  \begin{array}{ll}
    0 & \text{if}\ 1 - b_i \ip{\a_i, \xstar} < 0 \, , \\
    1 - b_i \ip{\a_i, \x} & \text{if}\ 1 - b_i \ip{\a_i, \xstar} > 0 \, , \\
    L_i(\x) & \text{if}\ 1 - b_i \ip{\a_i, \xstar} = 0 \, .
  \end{array}
  \right.
\]
Applying \propref{prop:pw}, we can compute $\xstar$ by solving
\begin{equation}
\minimize{\x \in \D} \tfrac{1}2 \norm{\x}^2 + C \sum_{i=1}^n L_i^\star(\x)  \, .
\tag{PSVM$^\star$}
\label{prb:l2_loss_oracle_prime}
\end{equation}
If we define $\mathcal{W}^\star = \{ i \, : \,  1 - b_i \ip{\a_i, \xstar} \ne 0 \}$, the benefit becomes clear.  \prbref{prb:l2_loss_oracle_prime} has the same solution as
\begin{equation}
\minimize{\x \in \D} \tfrac{1}2 \norm{\x}^2 + \ip{\a^\star, \x} +  C \sum_{i \in \mathcal{W}^\star} L_i(\x)  \, , \tag{PSVM$^{\star\star}$} \label{prb:l2_loss_oracle}
\end{equation}
where the vector $\a^\star$ is easily computable.
We have reduced \prbref{prb:l2_loss} from a problem with $m$ training examples to a problem with $\abs{\mathcal{W}^\star}$ training examples.    Since often $\abs{\mathcal{W}^\star} \ll m$, we can solve \prbref{prb:l2_loss_oracle_prime} much more efficiently.

\section{\algname{} working set algorithm \label{sec:blitz}}


\algname{} extends \algnamel{} to problems of the form \prbref{prb:general}.
In this section,
we adapt main concepts from \secref{sec:blitz_lasso} to this piecewise formulation.
We also address some practical considerations for \algname{}. 

\subsection{Working set algorithms for piecewise objectives}

To generalize working set algorithms to our piecewise problem formulation,
we generalize the form of each subproblem.
During each iteration $t$, \algname{}
minimizes a ``relaxed objective,''
\begin{equation} \label{eqn:blitz_relaxed_9aq}
f_t(\x) = \psi(\x) + \sum_{i=1}^m \phi_{i,t}(\x) \, , 
\end{equation}
where for $i \in [m]$, we have $\phi_{i,t} \in \{ \phi_i \} \cup \{ \phi_i^{(1)}, \phi_i^{(2)}, \ldots, \phi_i^{(p_i)} \}$.
That is, \algname{} defines each $\phi_{i,t}$ as either (i)~the original piecewise function, $\phi_i$, or (ii)~one of $\phi_i$'s simpler subfunctions.
The ``working set'' is the set of indices corresponding to piecewise functions in $f_t$.


Analogous to \algnamel{}, \algname{} computes $\x_t \gets \mathrm{argmin}\ f_t(\x)$ during iteration $t$.
As long as $\phi_{i,t}$ is linear for most $i$, solving this subproblem generally requires much less time than solving \prbref{prb:general}.

\subsection{Line search and $\y_t$ in \algname{} \label{sec:blitzws_line}}

Like \algnamel{}, \algname{} maintains two iterates, $\xt$ and $\yt$.
In \algnamel{}, $\y_t$ is the feasible point on the segment $[\y_{t-1}, \x_t]$ with smallest objective value.

Extending this concept to the piecewise problem,
\algname{} initializes $\y_0$ such that $f(\y_0)$ is finite.
After computing $\xt$, \algname{} updates $\yt$ using the rule
\[
\y_t \gets \mathrm{argmin} \left\{{f(\x) \mid \x \in [\y_{t-1}, \x_t]}  \right\} \, . \label{eqn:linesearch}
\]
\algname{} can compute this update using the bisection method; we elaborate on this in \secref{sec:backtracking_time23}.


\subsection{Equivalence regions in \algname{} \label{sec:blitzws_equivalence}}

\algname{}'s choice of each $\phi_{i,t}$ is a choice of \emph{where} in $\phi_i$'s domain the algorithm ensures that $\phi_{i,t}$ and $\phi_i$ are equivalent.
If $\phi_{i,t} = \phi_i^{(1)}$, then $\phi_{i,t}(\x) = \phi_i(\x)$ is only guaranteed for $\x \in \X_i^{(1)}$.


Like \algnamel{},
\algname{} uses equivalence regions to ensure quantifiable progress during each iteration.
Given a progress parameter $\xi_t \in (0, 1]$, \algname{} defines $\Tcap$ exactly as in \secref{sec:capsule}:
\[
\Tcap = \conv{\Bcap_1 \cup \Bcap_2} \, .
\]
Recall that $\Bcap_1$ and $\Bcap_2$ are balls with centers $\ccap_1$ and $\ccap_2$, respectively.

\algname{} selects each $\phi_{i,t}$ so that for all $\x \in \Tcap$, we have $f_t(\x) = f(\x)$.
To establish this equivalence region, \algname{} defines each $\phi_{i,t}$ so that $\phi_{i,t}(\x) = \phi_i(\x)$ for all $\x \in \Tcap$, $i \in [m]$.
This property results from \algname{}'s first sufficient condition for including a particular $i \in [m]$ in the working set (there are three conditions total). For each $i \in [m]$, $k = \pi_i(\ccap_1)$, \algname{} includes $i$ in the working set---that is, \algname{} defines $\phi_{i,t} = \phi_i$---if the following:
\begin{itemize}
\item[(C1)] There exists an $\x \in \Tcap$ such that $\x \notin \Xik$, meaning \emph{equivalence between $\phi_i$ and $\phi_i^{(k)}$ within $\Tcap$ is not guaranteed.}
\end{itemize}

We will define conditions (C2) and (C3) soon.
If any of these sufficient conditions are satisfied, \algname{} includes $i$ in the working set.
Otherwise, \algname{} defines $\phi_{i,t} = \phi_i^{(k)}$,
where $k = \pi_i(\ccap_1)$.


\subsection{Generalizing suboptimality gaps for \algname{}} \label{sec:blitz_tractable}

 Condition (C2) for constructing the working set
 allows us to generalize the suboptimality gap to our piecewise formulation.
Specifically, for each $i \in [m]$, $k = \pi_i(\ccap_1)$, \algname{} defines $\phi_{i,t} = \phi_i$ if:
\begin{itemize}
\item[(C2)] There exists an $\x \in \reals^n$ such that $\phi_i^{(k)}(\x) > \phi_i(\x)$---that is, \emph{$\phi_i^{(k)}$ does not lower bound $\phi_i$.}
\end{itemize}

From (C2) and \eqnref{eqn:blitz_relaxed_9aq}, it follows that $f_t(\x) \leq f(\x)$ for all $\x$.
 Importantly, we can use this property to define a suboptimality gap.
Since $\x_t$ minimizes $f_t$, we have
$
f_t(\x_t) \leq f_t(\xstar) \leq f(\xstar)  .
$
Thus, given any $\y_t$ such that $f(\y_t)$ is finite, we have the suboptimality gap
\[
\Delta_t = f(\y_t) - f_t(\x_t) \geq f(\y_t) - f(\xstar) \, .
\]

We note (C2) does not generally affect the computational cost of  minimizing $f_t$.
The advantage of minimizing $f_t$ instead of $f$ results from the fact that linear $\phi_{i,t}$ functions collapse into a single linear term. 
In the case that $\phi_i^{(k)}$ is linear, then $\phi_i^{(k)}$ lower bounds $\phi_i$ as a result of convexity (assuming $\Xik$ has non-empty interior).
\subsection{Ensuring $f_t(\x_t)$ is nondecreasing with $t$ \label{sec:blitz_sandwich}}


\algnamel{} defines each working set to ensure $\fld(\xt)$ is nondecreasing with $t$.
Generalizing this idea, \algname{} ensures $f_t(\xt) \geq f_{t-1}(\x_{t-1})$ for all $t$.
This property follows from the third condition for including $i$ in the working set.
For each $i \in [m]$, $k = \pi_i(\ccap_1)$, \algname{} defines $\phi_{i,t} = \phi_i$ if:
\begin{itemize}
\item[(C3)] \emph{$\phi_i^{(k)}$ does not upper bound $\phi_{i,t-1}$ in a neighborhood of $\xtm$}---there exists a \mbox{$\g_i \in \partial \phi_{i,t-1}(\xtm)$} and $\x \in \reals^n$ such that
\[
\phi_i^{(k)}(\x) < \phi_{i,t-1}(\xtm) + \ip{\x - \xtm, \g_i} \, .
\]
\end{itemize}
Because of (C3), we have $f_t(\x) \geq f_{t-1}(\x_{t-1}) + \ips{\x - \x_{t-1}, \g_{t-1}}$
for all $\x$ and $\g_{t-1} \in \partial f_{t-1}(\x_{t-1})$.
Since $\x_{t-1}$ minimizes $f_{t-1}$, we have $\m{0} \in \partial f_{t-1}(\x_{t-1})$.
It follows that $f_t(\x) \geq f_{t-1}(\x_{t-1})$ for all $\x$, which implies that
$f_t(\xt) \geq f_{t-1}(\xtm)$.

\subsection{\algname{} definition and convergence guarantee}

\begin{algorithm}[t]
\caption{\algname{} for solving \prbref{prb:general}}
\label{alg:working_set}
\begin{algorithmic}
\State \textbf{input} initial $\y_0$ such that $f(\y_0) < +\infty$, linear functions $(\phi_{i,0})_{i=1}^m$ for which $\phi_{i,0}(\x) \leq \phi_i(\x) \ \forall \x$,
\State \hspace{0.344in} and method for choosing $\xi_t$ 
\vspace{0.2em}
\State $\x_0 \gets \mathrm{argmin}\: f_0(\x) := \psi(\x) + \sump \phi_{i, 0}(\Ai \x)$
\vspace{0.2em}
\For{$t = 1, \ldots, T$ \textbf{until} $f_T(\x_T) = f(\y_T)$}
\alglinebreak{}
  \algcomment{Form subproblem}
  \State Choose progress coefficient $\xi_t \in (0, 1]$
  \State $\Tcap \gets {\tt compute\_capsule\_region}(\xi_t, \x_{t-1}, \y_{t-1})$
  \quad \emph{\# see \secref{sec:capsule}}
  \For{$i = 1, \ldots, m$}
    \If{(C1) or (C2) or (C3)}
      \algcomment{Include $i$ in working set}
      \State $\phi_{i,t} \gets \phi_i$
    \Else
      \State $\phi_{i,t} \gets \phi_i^{(k)}\ \text{where}\ k \ \text{is the subdomain for which}\ \Tcap \subseteq \X_{i}^{(k)}$
    \EndIf
  \EndFor
\alglinebreak{}
  \algcomment{Solve subproblem}
  \State $\x_t \gets \mathrm{argmin}\: f_t(\x) := \psi(\x) + \sump \phi_{i, t}(\Ai \x)$
\alglinebreak{}
  \algcomment{Perform line search update}
  \State $\y_t \gets \argmin{\x \in [\y_{t-1}, \x_t]} f(\x)$
\EndFor
\State \textbf{return} $\y_T$
\end{algorithmic}
\end{algorithm}

We define \algname{} in \algref{alg:working_set}.
\algname{} initializes $\y_0$ such that $f(\y_0)$ is finite, while $\x_0$ is the minimizer of a function $f_0$.  \algname{} defines $f_0$ so that $\x_0$ is easy to compute. For example, for \prbref{prb:constrained}, we can define $\phi_{i,0}(\x) = 0$ for all $i$, making $\x_0$ the unconstrained minimizer of $\psi$.

During each iteration $t$, \algname{} chooses a progress coefficient $\xi_t$, which parameterizes the equivalence region $\Tcap$.
\algname{} defines $f_t$ according to \secref{sec:blitzws_equivalence}, \secref{sec:blitz_tractable}, and \secref{sec:blitz_sandwich}. 
Given $f_t$, \algname{} computes $\xt \gets \mathrm{argmin}\ f_t(\x)$.  At the end of iteration $t$, the algorithm updates $\yt$ via line search. 


Together, conditions (C1), (C2), and (C3) guarantee quantified progress toward convergence during iteration $t$.
In particular, we have the following convergence result for \algname{}:

\begin{restatable}[Convergence bound for \algname{}]{thm}{convergencebound} \label{thm:convergence_rate}
For any iteration $T$ of \algref{alg:working_set},
define the suboptimality gap $\Delta_T = f(\y_T) - f_T(\x_T)$.  For all $T > 0$, we have
\[ \Delta_T \leq \Delta_0 \prod_{t=1}^T (1 - \xi_t) \, .\]
\end{restatable}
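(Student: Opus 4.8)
The plan is to reduce \thmref{thm:convergence_rate} to a single per-iteration guarantee, namely $\Delta_t \le (1-\xi_t)\Delta_{t-1}$, and then telescope. Once the per-iteration bound is in hand, a one-line induction on $t$ gives $\Delta_T \le \Delta_0 \prod_{t=1}^T (1-\xi_t)$, exactly as \thmref{thm:blitz_lasso_theorem2} follows from \thmref{thm:blitz_lasso_theorem} in the \algnamel{} case. So essentially all of the work is in establishing the per-iteration progress, which I would do by mirroring the three-step argument of \secref{sec:teardrop} at the level of functions rather than constraints, using $1$-strong convexity of $\psi$ (hence of $f$ and each $f_t$) as the replacement for the quadratic identities available to \algnamel{}.

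First I would establish the fundamental inequality, the analog of \lemref{lem:blitz_lasso_lemma}. Writing $\yt = \alpha_t \xt + (1-\alpha_t)\ytm$ for the line-search position $\alpha_t \in [0,1]$ and setting $\betat = \alpha_t(1+\alpha_t)^{-1}$, the key substitution is to replace the min-norm computations by the strong-convexity bound $f_t(\x) \ge f_t(\xt) + \oh \norm{\x - \xt}^2$, valid for all $\x$ because $\xt$ minimizes the $1$-strongly convex $f_t$. I would combine this with the two bridging facts proved in \secref{sec:blitz_tractable} and \secref{sec:blitz_sandwich}: condition (C2) gives $f_t \le f$, so $\Delta_t = f(\yt) - f_t(\xt)$ is a genuine gap, while condition (C3) gives $f_t(\xt) \ge f_{t-1}(\xtm)$, hence $\Delta_{t-1} \ge f(\ytm) - f_t(\xt)$. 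Evaluating the strong-convexity inequality at $\xtm$ and at the line-search point, and bookkeeping exactly as in the quadratic case, should reassemble into
\[
\Delta_t \le \tfrac{1-2\betat}{1-\betat}\left[\Delta_{t-1} - \tfrac{1-\betat}{\betat^2}\oh\norm{\yt - \betat\xtm - (1-\betat)\ytm}^2 - \betat\,\oh\norm{\xtm-\ytm}^2\right].
\]

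Next I would prove the geometric lower bound on $\norm{\yt - \betat\xtm - (1-\betat)\ytm}$. Condition (C1) guarantees $f_t = f$ throughout $\Tcap$, and I claim that whenever $\Delta_t > 0$ we have $\yt \notin \Tcap$. Indeed, if $\xt \in \Tcap$ then $f(\xt) = f_t(\xt) \le f(\xstar) \le f(\xt)$, which forces $\xt = \xstar$ by strong convexity and hence $\Delta_t = 0$; so $\xt \notin \Tcap$. Since $\Tcap$ is an open convex set on which $f_t = f$ agrees, and the strictly convex $f_t$ attains its global minimum at the endpoint $\xt$ of the segment $[\ytm,\xt]$, any $\yt \in \Tcap$ on that segment could be improved by moving slightly toward $\xt$ while staying in $\Tcap$, contradicting that $\yt$ minimizes $f$ on the segment. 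Because $\B_\xi(\betat) \subseteq \T_\xi \subseteq \Tcap$ (the last inclusion proved in \fullappref{sec:proof_capsule_subset}), it follows that $\yt \notin \B_\xi(\betat)$, so by the definition \eqnref{eqn:equiball} we get $\norm{\yt - \betat\xtm - (1-\betat)\ytm} \ge \tau_\xi(\betat)$.

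Finally I would substitute and telescope. Plugging this norm bound, with $\tau_\xi$ as in \eqnref{eqn:tau}, into the fundamental inequality collapses to exactly $\Delta_t \le (1-\xi_t)\Delta_{t-1}$ for $\betat < \toh$; the boundary case $\betat = \toh$ makes the prefactor vanish and gives $\Delta_t \le 0$, and the degenerate case $\alpha_t = 0$ is handled as in the \algnamel{} proofs. Telescoping then yields the theorem. I expect the first step to be the main obstacle: unlike the pure quadratic $\fld$, here $\Delta_t$ mixes the two \emph{different} functions $f$ and $f_t$, so the fundamental inequality must be rebuilt from strong convexity together with the bridging inequalities (C2) and (C3), and coaxing those into precisely the same algebraic form as the min-norm lemma is the delicate part — once it holds, the geometric and substitution steps transfer almost verbatim from \secref{sec:teardrop}.
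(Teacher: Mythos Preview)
Your overall plan (per-iteration bound $\Delta_t \le (1-\xi_t)\Delta_{t-1}$, then telescope) is exactly the paper's, and your geometric step is fine: once you know the relevant point lies outside $\Tcap$, the inclusion $\B_\xi(\betat)\subseteq\T_\xi\subseteq\Tcap$ does give $\norm{\cdot}\ge\tau_\xi(\betat)$ just as in \secref{sec:teardrop}. The gap is in your first step, the fundamental inequality, and it is exactly the ``delicate part'' you flagged.

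Your ingredients---$f_t(\x)\ge f_t(\xt)+\oh\norm{\x-\xt}^2$, (C2) giving $f_t\le f$, and (C3) giving $f_t(\xt)\ge f_{t-1}(\xtm)$---all produce \emph{lower} bounds on $f$ or relate values of $f_t$. None of them yields an \emph{upper} bound on $f(\yt)$, which is what $\Delta_t = f(\yt)-f_t(\xt)$ needs. In \algnamel{} this issue never arises because there the subproblem objective equals $\fld$ everywhere (only constraints are dropped), so $\fld(\yt)$ can be expanded quadratically without any equivalence-region restriction. In \algname{}, $f$ and $f_t$ differ outside $\Tcap$, and you have just argued that $\yt\notin\Tcap$; so the strong-convexity expansion you want, namely $f_t(\yt)\le\alpha_t f_t(\xt)+(1-\alpha_t)f_t(\ytm)-\oh\alpha_t(1-\alpha_t)\norm{\xt-\ytm}^2$, bounds the wrong quantity: from (C2) you only get $f_t(\yt)\le f(\yt)$, which goes the wrong way. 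In short, you need $\yt$ inside $\cl{\Tcap}$ for the fundamental inequality and outside $\Tcap$ for the geometric bound, and the actual line-search minimizer can be strictly beyond the capsule boundary.

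The paper resolves this by \emph{decoupling} the analysis point from $\yt$. It defines $\theta_t=\max\{\theta\in[0,1]:\theta\xt+(1-\theta)\ytm\in\cl{\Tcap}\}$ and $\ytp=\theta_t\xt+(1-\theta_t)\ytm$, the point where the segment exits the capsule. Then (i) $f(\yt)\le f(\ytp)$ because $\yt$ minimizes $f$ on the segment; (ii) $f(\ytp)=f_t(\ytp)$ because $\ytp\in\cl{\Tcap}$ and (C1) plus lower semicontinuity give $f=f_t$ there; and (iii) $\ytp\in\bd{\Tcap}$, hence $\ytp\notin\B_\xi(\betat)$ with $\betat=\theta_t(1+\theta_t)^{-1}$. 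Now the strong-convexity expansion and the geometric bound are applied at the \emph{same} point $\ytp$, and the algebra goes through exactly as you outlined. The cases $\theta_t=1$ and $\theta_t=0$ are then the natural replacements for your $\betat=\toh$ and $\alpha_t=0$ edge cases.
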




\subsection{Accommodating approximate subproblem solutions \label{sec:approximate}}

\algname{} can minimize $f_t$ using any subproblem solver.
Since solvers are usually iterative, it is important to only compute $\x_t$ approximately. Computing $\xt$ to high precision would require time that \algname{} could instead use to solve subproblem $t+1$.

To accommodate approximate solutions, we make several adjustments to \algname{}.
Most significantly, the subproblem solver returns three objects: (i)~an approximate subproblem solution, $\z_t$, where $f_t(\zt)$ is finite, (ii) a function $\ftlb$ that lower bounds $f_t$, and (iii)~$\x_t = \mathrm{argmin}\, \ftlb(\x)$, which is a ``dual'' approximate minimizer of $f_t$.
We assume $\ftlb$ takes the form
\[
f_t^{\mathrm{LB}}(\x) = 
\left[\psi(\zt) + \ip{\g_\psi^{\mathrm{LB}}, \x - \zt} + \oh \norm{\x - \zt}^2\right]
+ 
\sum_{i=1}^m \left[\phi_{i,t}(\zt) + \ip{\gilb, \x - \zt} \right] \, ,
\]
where $\g_\psi^{\mathrm{LB}} \in \partial \psi(\zt)$ and 
$\gilb \in \partial \phi_i(\zt)$
for each $i$.
Since $\psi$ is 1-strongly convex and each $\phi_i$ is convex, we have $\ftlb(\x) \leq f_t(\x)$ for all $\x$.
Also, because $\ftlb$ is a simple quadratic function, it is straightforward to compute $\xt$.

Together, $\zt$, $\xt$, and $\ftlb$ allow us to quantify the precision of the approximate subproblem solutions in terms of suboptimality gap.  Since $\xt$ minimizes $\ftlb$, it follows that
\[
f_t(\zt) - \ftlb(\xt) \geq f_t(\zt) - \mathrm{min}_\x\  f_t(\x) \, .
\]

We note that when subproblem $t$ is solved exactly, we can define $\ftlb$ such that this ``subproblem suboptimality gap'' is zero.
To do so, we define $\g_\psi^{\mathrm{LB}}$ and each $\gilb$ such that $\g_\psi^{\mathrm{LB}} +\sum_{i=1}^m \gilb = \m{0}$.
In this case, $\zt$ also minimizes $\ftlb$, which implies that $\xt = \zt$ and $f_t(\zt) - \ftlb(\xt) = 0$.

To bound the effect of approximate subproblem solutions, 
\algname{} chooses a subproblem termination threshold $\epsilon_t \in [0, 1)$.  We require that $\zt$, $\xt$, and $\ftlb$ satisfy two conditions:
\[
\begin{array}{cc}
& f_t(\zt) - \ftlb(\xt) \leq \epsilon_t \Delta_{t-1} \, , \\[0.4em]
\quad \quad \text{and} \quad \quad &\ftlb(\xt) - \ftmlb(\xtm) \geq (1 - \epsilon_t) \oh \norm{\zt - \xtm}^2 \, .
\end{array}
\]
The first condition bounds the subproblem suboptimality gap. The second condition lower bounds the algorithm's dual progress.
The parameter $\epsilon_t$ trades off the precision of subproblem $t$'s solution with the amount of time used to solve this subproblem---smaller $\epsilon_t$ values imply more precise subproblem solutions.
While not obvious in this context, we note the second condition is always satisfied once the subproblem solution is sufficiently precise. We prove this fact in \fullappref{app:condition_2_always}.

In addition to the changes already discussed, we make three final modifications to \algname{} to accommodate approximate subproblem solutions.
First,
 we redefine \algname{}'s suboptimality gap to ensure $\Delta_t \geq f(\yt) - f(\xstar)$. 
Specifically, we define
$
\Delta_t = f(\yt) - \ftlb(\xt) \, .
$

The second final change is that instead of searching along the segment $[\xt, \ytm]$, \algname{} updates $\yt$ by performing line search along $[\zt, \ytm]$:
\[
\yt \gets \argmin{\x \in [\zt, \ytm]} f(\x) \, .
\]

The last change to \algname{} adjusts condition (C3) from \secref{sec:blitz_sandwich}.
Specifically, for each $i \in [m]$ and \mbox{$k = \pi_i(\ccap_1)$}, \algname{} defines $\phi_{i,t} = \phi_i$ if:
\begin{itemize}
\item[(C3)] \emph{$\phi_i^{(k)}$ does not upper bound $\philbtm$}---for some $\x \in \reals^n$, we have $\phi_i^{(k)}(\x) < \philbtm(\x)$.
\end{itemize}
This change guarantees that $f_t$ upper bounds $\ftmlb$.
Compared to the original (C3) condition from \secref{sec:blitz_sandwich}, our new (C3) guarantees that $\phi_{i,t}$ upper bounds $\phi_{i,t-1}$ in a neighborhood of $\zt$ as opposed to a neighborhood of $\xt$.

Taking these changes into account, we have the following convergence result for \algname{} with approximate subproblem solutions (proven in \fullappref{app:proof_approximate}):
\begin{restatable}[Convergence bound for \algname{} with approximate subproblem solutions]{thm}{convergenceboundapprox} \label{thm:approx_convergence_rate}
Consider \algname{} with approximate subproblem solutions.
For any iteration $T$,
define the suboptimality gap $\Delta_T = f(\y_T) - f_T^{\mathrm{LB}}(\x_T)$.  For all $T > 0$, we have
\[ \Delta_T \leq \Delta_0 \prod_{t=1}^T (1 - (1 - \epsilon_t) \xi_t) \, .\]
\end{restatable}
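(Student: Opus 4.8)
The plan is to reduce the product bound to a single per-iteration contraction and then induct on $t$. Concretely, I would prove
\[
\Delta_t \le \left(1 - (1-\epsilon_t)\xi_t\right) \Delta_{t-1}
\]
for every iteration $t$, where $\Delta_t = f(\y_t) - \ftlb(\x_t)$; chaining this inequality from $t=1$ to $T$ then yields the claimed bound, exactly as \thmref{thm:convergence_rate} is obtained from the one-step estimate \thmref{thm:blitz_lasso_theorem}. All of the work therefore lies in the one-step estimate, and the objective is to see the exact-case factor $\xi_t$ degrade precisely to $(1-\epsilon_t)\xi_t$.

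First I would record the three global inequalities that the working-set construction now supplies in place of their \algnamel{} analogues: condition (C2) together with the $1$-strong convexity of $\psi$ gives $\ftlb(\x) \le f_t(\x) \le f(\x)$ for all $\x$; condition (C1) gives $f_t(\x) = f(\x)$ for all $\x \in \Tcap$; and the modified condition (C3) gives $f_t(\x) \ge \ftmlb(\x)$ for all $\x$. I would then establish the approximate, piecewise analogue of \lemref{lem:blitz_lasso_lemma}. Writing the line-search point as $\y_t = \alpha_t \z_t + (1-\alpha_t)\y_{t-1}$ and $\beta_t = \alpha_t(1+\alpha_t)^{-1}$, I would combine convexity of $f$, the definition of $\Delta_t$, the subproblem-suboptimality condition $f_t(\z_t) - \ftlb(\x_t) \le \epsilon_t \Delta_{t-1}$, and the dual-progress condition $\ftlb(\x_t) - \ftmlb(\x_{t-1}) \ge (1-\epsilon_t)\oh\norm{\z_t - \x_{t-1}}^2$ to obtain a bound of the same shape as \eqnref{eqn:blitzlasso_lemma} — one still centered at $\beta_t \x_{t-1} + (1-\beta_t)\y_{t-1}$ but now carrying the factor $(1-\epsilon_t)$ on the strong-convexity term and an additional $\epsilon_t \Delta_{t-1}$ of slack. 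These two approximate-solution conditions are exactly what let the movement $\oh\norm{\z_t - \x_{t-1}}^2$ enter the bound with weight $(1-\epsilon_t)$.

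Next I would supply the geometric distance bound that generalizes \eqnref{eqn:y_boundary}. When progress is not already complete, the line search along $[\z_t, \y_{t-1}]$ halts at a subdomain boundary belonging to some relaxed term, i.e., an $i$ with $\phi_{i,t} = \phi_i^{(k)}$ and $\Tcap \subseteq \Xik$; at such a boundary $\y_t$ leaves $\int{\Xik}$ and hence leaves $\Tcap$. Using $\T_\xi \subseteq \Tcap$ (established in the capsule section) together with the definition \eqnref{eqn:equiball} of the equivalence ball, this forces $\norm{\y_t - \beta_t \x_{t-1} - (1-\beta_t)\y_{t-1}} \ge \tau_\xi(\beta_t)$. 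Substituting the definition \eqnref{eqn:tau} of $\tau_\xi(\beta_t)$ into the generalized lemma, and discharging the degenerate cases ($\alpha_t = 1$, which signals convergence, and $\beta_t \ge \toh$, for which the bound is immediate), collapses the right-hand side to $(1-(1-\epsilon_t)\xi_t)\Delta_{t-1}$, mirroring the derivation of \thmref{thm:blitz_lasso_theorem}.

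The main obstacle I anticipate is the bookkeeping forced by the three distinct points that coincided in \algnamel{}: $\z_t$, which drives the line search; $\x_t = \mathrm{argmin}\,\ftlb$, which defines the gap; and $\x_{t-1}$, the previous dual minimizer that sits on the capsule's axis and anchors the equivalence ball. I must keep the ball centered at $\beta_t \x_{t-1} + (1-\beta_t)\y_{t-1}$ while the line-search direction points toward $\z_t$, and verify that the two $\epsilon_t$ error terms recombine into exactly $(1-\epsilon_t)\xi_t$ rather than a looser factor. A secondary point requiring care is confirming that the modified (C3) really yields $f_t \ge \ftmlb$ globally, so that the telescoped dual progress across iterations is legitimate, and that the line search indeed terminates at a boundary of a relaxed (rather than working-set) term, so that the capsule-leaving argument genuinely applies.
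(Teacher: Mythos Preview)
Your reduction to a per-iteration contraction and your use of (C1)--(C3) and the two $\epsilon_t$-conditions are right, and the algebra you describe (recombining the slack and the $(1-\epsilon_t)$-weighted strong-convexity term into $(1-(1-\epsilon_t)\xi_t)\Delta_{t-1}$) is exactly what happens. The gap is in your geometric step. You write the line-search point as $\y_t = \alpha_t \z_t + (1-\alpha_t)\y_{t-1}$ and claim that ``the line search halts at a subdomain boundary belonging to some relaxed term,'' so that $\y_t \notin \Tcap$ and the distance bound $\norm{\y_t - \beta_t \x_{t-1} - (1-\beta_t)\y_{t-1}} \ge \tau_\xi(\beta_t)$ follows. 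That reasoning is a carryover from \algnamel{}, where $\y_t$ is the extreme \emph{feasible} point and therefore sits on a constraint boundary. In the general piecewise setting, however, $\y_t = \mathrm{argmin}_{[\y_{t-1},\z_t]} f$, and there is no reason this minimizer should lie on any $\bd{\Xik}$: for smooth pieces (e.g., squared hinge) the segment minimum can fall strictly inside every subdomain, and then your ``$\y_t$ leaves $\Tcap$'' conclusion simply fails.

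The paper repairs this by decoupling the analysis from $\y_t$ altogether. It introduces an auxiliary point $\y_t' = \theta_t \z_t + (1-\theta_t)\y_{t-1}$, where $\theta_t$ is the largest $\theta$ for which $\theta \z_t + (1-\theta)\y_{t-1}$ stays in $\cl{\Tcap}$. Since $\y_t$ minimizes $f$ on the segment, $f(\y_t) \le f(\y_t')$; and since $\y_t' \in \cl{\Tcap}$, (C1) gives $f(\y_t') = f_t(\y_t')$, so $\Delta_t \le f_t(\y_t') - \ftlb(\x_t)$. Now the geometry works by construction: when $\theta_t < 1$, $\y_t' \in \bd{\Tcap}$, hence $\y_t' \notin \B_\xi(\beta_t)$ for $\beta_t = \theta_t(1+\theta_t)^{-1}$, and the $\tau_\xi(\beta_t)$ bound applies to $\y_t'$ rather than $\y_t$. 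The three cases $\theta_t = 1$, $\theta_t \in (0,1)$, $\theta_t = 0$ then close the argument. Your plan goes through once you swap $\y_t$ for this $\y_t'$; without that substitution the crucial distance inequality is unsupported.
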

This result clearly describes the effect of approximate subproblem solutions.
By solving subproblem $t$ with tolerance \mbox{$\epsilon_t \in [0, 1)$}, it is guaranteed that \algname{} makes $(1 - \epsilon_t) \xi_t$ progress during iteration $t$.
When $\epsilon_t = 0$, we recover our original convergence bound, \thmref{thm:convergence_rate}.

The parameters $\xi_t$ and $\epsilon_t$ allow \algname{} to trade off between subproblem size, time spent solving subproblems, and progress toward convergence.
We next explore these trade-offs in more detail.

\subsection{Bottlenecks of \algname{}}

Each iteration $t$ of \algname{} has three stages: select subproblem $t$, solve subproblem $t$, and update $\yt$.
Here we discuss the amount of computation that each stage requires.

\subsubsection{Time required to form each subproblem} \label{sec:time_form_f_t}
The time-consuming step for forming subproblem $t$ is testing condition (C1).
This step requires checking if $\Tcap \subseteq \Xik$ for all $i \in [m]$.

Recall $\Tcap = \conv{\Bcap_1 \cup \Bcap_2}$, where $\Bcap_1$ and $\Bcap_2$ are balls with centers $\ccap_1$ and $\ccap_2$  and radius $\rcap$.
If $\Xik$ is convex, then $\Tcap \subseteq \Xik$ iff $\Bcap_1 \subseteq \Xik$ and $\Bcap_2 \subseteq \Xik$.
Unfortunately, when $\Xik$ is convex, testing whether $\Bcap_1 \subseteq \Xik$ is not convex in general.
Even so, in the common scenarios that $\Xik$ is a half-space or ball, we can check if $\Bcap_1 \subseteq \Xik$ in $\mathcal{O}(n)$ time.
In other cases, we can often approximately check this condition efficiently.

Let us first consider the case that $\Xik$  is a half-space. For some $\a_i \in \reals^n$, $b_i \in \reals$, we can write $\Xik =  \{ \x \mid \ip{\a_i, \x} \leq b_i \}$. Then $\Bcap_1 \subseteq \Xik$ iff
\[
{\ip{\a_i, \ccap_1} - b_i} < \norm{\a_i} \rcap \, .
\]
Alternatively, suppose that $\Xik$ is a ball: $\Xik = \{ \x \mid \norm{\x - \a_i} \leq b_i \}$. Then
 $\Bcap_1 \subseteq \Xik$ iff
\[
\norm{\a_i - \ccap_1} + \rcap < b_i \, .
\]

When $\Xik$ is neither a half-space nor a ball, 
one option may be to approximate condition (C1) by defining a ball $\tXik$ such that $\tXik \subseteq \Xik$.
If $\Bcap_1 \subseteq \tXik$, then $\Bcap_1 \subseteq \Xik$.
%
If $\Bcap_1 \subseteq \tXik$ and $\Bcap_2 \subseteq \tXik$, then we must have $\Tcap \subseteq \Xik$.  
By approximating condition (C1) in this manner, $f_t$ and $f$ remain equivalent within $\T_\xi$, meaning \thmref{thm:convergence_rate} and \thmref{thm:approx_convergence_rate} still apply.

\subsubsection{Time required to solve subproblem $t$}

The time required to solve subproblem $t$ depends mainly on three factors: the progress coefficient, the subproblem termination threshold, and the subproblem solver.
Larger values of $\xi_t$ result in a larger equivalence region, increasing the size of the working set due to (C1).


\algname{} can use any solver to minimize $f_t$, 
but to be effective,
the time required to solve each subproblem must increase with the working set size. 
This is usually the case but not always.
For example, in the distributed setting, communication bottlenecks can affect convergence times greatly.  Depending on the algorithm and implementation, some distributed solvers require $\mathcal{O}(n)$ communication per iteration,
while the communication for other solvers may scale with the working set size.  The $\mathcal{O}(n)$ case is not desirable for \algname{}, since the amount of time needed to solve each subproblem depends less directly on the working set size.

\subsubsection{Time required to update $\yt$ \label{sec:backtracking_time23}}

Updating $\y_t$ requires minimizing $f$ along the segment $[\xt, \ytm]$.
\algname{} can perform this update using the bisection method, which requires evaluating $f$ a logarithmic number of times.
In this case, it is not necessary to compute $\yt$ exactly.  Our analysis requires only that $f(\yt) \leq f(\ytp)$, where $\ytp$ is the point on the segment $[\xt, \ytm]$ that is closest to $\xt$ while remaining in the closure of $\T_\xi$.

In many cases it is also straightforward to compute $\yt$ exactly.
For constrained problems like \prbref{prb:lasso_dual}, $\yt$ is the extreme feasible point on the segment $[\ytm, \xt]$.
If the constraints are linear or quadratic, \algname{} can compute $\yt$ in closed form.

\subsection{Choosing algorithmic parameters in \algname{} \label{sec:alg_params}}

Each \algname{} iteration uses a progress parameter, $\xi_t \in (0, 1]$, and termination threshold, $\epsilon_t \in [0, 1)$.
We could assign $\xi_t$ and $\epsilon_t$ constant values for all $t$.
As we will see in \secref{sec:empirical}, however, values of $\xi_t$ and $\epsilon_t$ that work well for one problem may result in slow convergence times for other problems.
For this reason, it is beneficial to choose these parameters in an adaptive manner.

To adapt the parameter choices to each problem, we model as functions of $\xi_t$ and $\epsilon_t$ both (i) the time required to complete iteration $t$ and (ii) \algname{}'s progress during this iteration.
Using these models, \algname{} selects $\xi_t$ and $\epsilon_t$ by approximately optimizing the trade-offs between subproblem size, iteration duration, and convergence progress.


To model the time required for \algname{} to complete iteration $t$, we define the function
\begin{equation}
\hat{T}_t(\xi, \epsilon) = \underbrace{\Csetup}_{\begin{array}{c} \text{\footnotesize{Estimated time to}} \\[-0.2em] \text{\footnotesize{update $\yt$ and define $f_t$}} \end{array}} + \quad \underbrace{\Csolve {\mathrm{ProblemSize}(\xi)}{\epsilon^{-1}}}_{\begin{array}{c} \text{\footnotesize{Estimated time to}} \\[-0.2em] \text{\footnotesize{solve subproblem $t$}}\end{array}} \, .
\label{eqn:time}
\end{equation}
Above, $\mathrm{ProblemSize}(\xi)$ measures the size of subproblem $t$ as a function of $\xi$.
For \prbref{prb:lasso_dual}, 
we define
\[
\textstyle \mathrm{ProblemSize}(\xi) = \sum_{i \in \mathcal{W}_t(\xi)} \nnz(\a_i) \, ,
\]
where $\W_t(\xi)$ denotes the working set when $\xi_t = \xi$, and $\nnz(\a_i) = \norm{\a_i}_0$.

\algname{} adapts the scalars $\Csetup$ and $\Csolve$ from iteration to iteration.
During each iteration $t$, \algname{} measures the time required to solve subproblem $t$, denoted $\Tsolve_t$, as well as the time taken for all other steps of iteration $t$, denoted $\Tsetup_t$.
Upon completion of iteration $t$, \algname{} estimates $\Csetup$ and $\Csolve$ by solving for the appropriate value in the model:
\[
\hCsetup_t = \Tsetup_t \,, \quad \quad \text{and} \quad \quad \hCsolve_t = \tfrac{\Tsolve_t \epsilon_t}{\mathrm{ProblemSize}(\xi_t)} \, .
\]

When selecting subproblem $t$, \algname{} defines $\Csetup$ and $\Csolve$ by taking the median of the five most recent estimates for these parameters.  For example,
\[
\Csetup = \mathrm{median}(\hCsetup_{t-1}, \hCsetup_{t-2}, \ldots, \hCsetup_{t-5}) \, .
\]
If $t \leq 5$, then the algorithm takes the median of only the past $t - 1$ parameter estimates.
Since this is not possible when $t = 1$ ($\hCsetup_0$ does not exist), \algname{} does not model the time required for iteration $1$.  Instead, during iteration 1, we define $\xi_1$ as the smallest value in $(0, 1]$ such that $f_t = f$, but we solve the subproblem crudely by terminating the subproblem solver after one iteration.

In addition to modeling the time for iteration $t$, \algname{} applies \thmref{thm:approx_convergence_rate} to model the suboptimality gap upon completion of this iteration:
\begin{equation}
 \hat{\Delta}_t(\xi, \epsilon) = \mathrm{max}\left\{(1 - (1-\epsilon) \xi \Cprogress) \Delta_{t-1}, \epsilon \Delta_{t-1} \right\}  \, .  \label{eqn:progress_blitz}
\end{equation}
The parameter $\Cprogress \geq 1$ accounts for looseness in the theorem's bound, which guarantees that $\Delta_t \leq (1 - (1 - \epsilon_t) \xi_t) \Delta_{t-1}$.
The $\mathrm{\max}\{ \cdot \}$ in \eqnref{eqn:progress_blitz} results from the fact that we should not expect that $\Delta_t \leq \epsilon_t \Delta_{t-1}$, regardless of looseness in our bound. 
This is because as a termination condition for subproblem $t$, we only assume that the subproblem suboptimality gap does not exceed $\epsilon_t \Delta_{t-1}$.

\algname{} estimates $\Cprogress$ in the same way that the algorithm estimates $\Csolve$ and $\Csetup$---by solving for the appropriate parameter after iteration $t$ and taking the median over past estimates:
\[
\hCprogress_t = \tfrac{1}{(1 - \hat{\epsilon}_t) \xi_t} \left[ 1 - \tfrac{\Delta_{t}}{\Delta_{t-1}} \right] \, , \quad 
\quad \hat{\epsilon}_t = \tfrac{f_t(\z_t) - \ftlb(\xt)}{\Delta_{t-1}} \, , \]
\vspace{-1.3em}
\[
\text{and} \quad \Cprogress =  \mathrm{max} \left\{ 1,  \mathrm{median}( \hCprogress_{t-1}, \hCprogress_{t-2} )  \right\} \, .
\]
Here the $\mathrm{max}\{ 1, \cdot\}$ guarantees that $\Cprogress \geq 1$---this ensures the value of $\hat{\Delta}(\xi, \epsilon)$ is at most the bound predicted by \thmref{thm:approx_convergence_rate}.
In this case, we take the median of only the past two estimates for $\Cprogress$, allowing $\xi_t$ to change significantly between iterations if necessary (unlike $\Csetup$, for example, it is unclear whether we should expect $\Cprogress$ to be approximately constant for all $t$).

Having modeled both the time for iteration $t$ and the progress during iteration $t$, \algname{} combines $\hat{T}_t$ and $\hat{\Delta}_t$ to approximately optimize $\xi_t$ and $\epsilon_t$.
Specifically, \algname{} defines
\begin{equation} \label{eqn:linear_assumption}
\xi_t, \epsilon_t = \argmax{\xi, \epsilon} -\frac{\log(\hat{\Delta}_t(\xi, \epsilon) / \Delta_{t-1})}{\hat{T}_t(\xi, \epsilon)} \, . 
\end{equation}
With \eqnref{eqn:linear_assumption}, \algname{} values time as if the algorithm converges linearly.  That is, a subproblem that requires an additional second to solve should result in a $\Delta_t$ that is smaller by a multiplicative factor.

\algname{} solves \eqnref{eqn:linear_assumption} approximately with grid search, considering 125 candidates for $\xi_t$ and 10 candidates for $\epsilon_t$.  The candidates for $\xi_t$ span between $10^{-6}$ and $1$, while the candidates for $\epsilon_t$ span between $0.01$ and $0.7$.  Later in \secref{sec:empirical}, we examine some of these parameter values empirically.

We also enforce a time limit when solving each subproblem.
In addition to the termination conditions described in \secref{sec:approximate},
we also terminate subproblem $t$ if the threshold $\epsilon_t$ is not reached before a specified amount of time elapses.
We define the time limit as $\Csolve \mathrm{ProblemSize}(\xi_t) {\epsilon_t}^{-1}$, which is the estimated time  for solving the subproblem in \eqnref{eqn:time}.

\subsection{Relation to prior algorithms}



Many prior algorithms also exploit piecewise structure in convex problems.
The classic simplex algorithm \citep{Dantzig:1965}, for example, exploits redundant constraints in linear programs.

In the late 1990s and early 2000s, working set algorithms became important to machine learning for training support vector machines.
Using working sets, \cite{Osuna:1997} prioritized computation on training examples with suboptimal dual value.
\cite{Joachims:1999} improved the choice of working sets based on a first-order ``steepest feasible direction'' strategy---an idea that \cite{Zoutendijk:1970} originally proposed for constrained optimization.
To further reduce computation, \citeauthor{Joachims:1999} developed a ``shrinking'' heuristic, which freezes values of specific dual variables that satisfy a condition during several consecutive iterations.

Later works refined and extended these working set ideas.  \cite{Fan:2005} as well as \citet{Glasmachers:2006} used second-order information to improve working sets for kernelized SVMs.
Unlike \algname{}, these approaches apply only to working sets of size two, which is limiting but nevertheless practical for kernelized SVMs.  \citet{Zanghirati:2003} and \citet{Zanni:2006} considered larger working sets and parallel algorithms.
\cite{Tsochantaridis:2005} extended working set ideas to structured prediction problems.
\cite{Hsieh:2008} combined shrinking with dual coordinate ascent to train linear SVMs. This resulted in a very fast algorithm, and the popular \liblinear{} library \citep{Fan:2008} uses this approach to train linear SVMs today.

Similar coordinate descent strategies work well for training $\ell_1$-regularized models. \cite{Friedman:2010} proposed a fast algorithm that combines working sets with coordinate descent and a proximal Newton strategy.
Similarly, \cite{Yuan:2010} found that combining CD with shrinking heuristics leads to a fast algorithm for sparse logistic regression.  
Today \liblinear{} uses a refined version of this approach to train such models \citep{Yuan:2012}, which applies working sets and shrinking in a two-layer prioritization scheme.

These are just two of many algorithms that incorporate working sets to speed up sparse optimization.  
For lasso-type problems, many additional studies combine working set \citep{Scheinberg:2016,Massias:2017} or active set \citep{Wen:2012,Solntsev:2015,Neskar:2016} strategies with standard algorithms.
Researchers have also applied working sets to many other sparse problems---see e.g. 
\citep{Lee:2007,Bach:2008,Kim:2008,Roth:2008,Obozinski:2009,Friedman:2010,Schmidt:2010}.

More generally, prioritizing components of the objective continues to be an important idea for scaling model training. 
Many works consider importance sampling to speed up stochastic optimization
\citep{Needell:2014,Zhao:2015,Csiba:2015,Vainsencher:2015,Perekrestenko:2017,Stich2:2017}.
\cite{Harikandeh:2015}, \cite{Stich:2017}, and \cite{Johnson:2017:stingy} use alternative strategies to
improve first-order algorithms.



To our knowledge, \algname{} is the first working set algorithm that selects each working set in order to guarantee an arbitrarily large amount of progress during each iteration.
Prior algorithms choose working sets in intuitive ways, but there is little understanding of the resulting progress.
In contrast, our theory for \algname{} provides justification for the algorithm, avoids possible pathological scenarios, and inspires new ideas, such as our approach to tuning algorithmic parameters.


We note that because \algname{} can use any subproblem solver,
\algname{} could also use importance sampling, shrinking, or another strategy when solving each subproblem.

\section{BlitzScreen safe screening test} \label{sec:screening}



In this section, we introduce \screenname{}, a safe screening test that relates closely to \algname{}.
Like \algname{}, \screenname{} involves minimizing a relaxed objective instead of the original objective, $f$.
Unlike \algname{}, \screenname{} guarantees
that the relaxed objective and $f$ have the same minimizer.


\subsection{\screenname{} definition \label{sec:main_screening}}

\screenname{} requires three ingredients: 
\begin{enumerate}
\item An approximate minimizer of $f$, denoted $\y_0$, for which $f(\y_0)$ is finite.
\item A 1-strongly convex function, denoted $f_0$, that satisfies $f_0(\x) \leq f(\x)$ for all $\x$.
\item The minimizer of $f_0$, denoted $\x_0$.  \end{enumerate}
One way to construct such a $f_0$ uses a subgradient $\g_0 \in \partial f(\y_0)$.
Given such a point, we can define
\[
f_0(\x) = f(\y_0) + \ip{\g_0, \x - \y_0} + \oh \norm{\x - \y_0}^2 \, .
\]
We can easily compute $\x_0 = \mathrm{argmin}\ f_0(\x)$,
and since $f$ is $1$-strongly convex, $f_0$ lower bounds $f$.

Regardless of how we define $f_0$, we have the following screening result. 



\begin{restatable}[\screenname{} safe screening test]{thm}{screeningtheorem} \label{thm:screening}
Let $f_0$ be any $1$-strongly convex function that 
satisfies $f_0(\x) \leq f(\x)$ for all $\x$,
and let $\x_0 = \mathrm{argmin}\, f_0(\x)$.
Given any $\y_0 \ne \xstar$, define the suboptimality gap \mbox{$\Delta_0 = f(\y_0) - f_0(\x_0)$} as well as the ``safe region''
\[
\T_1 = \left\{ \x \ \big| \ \norms{\x - \oh (\x_0 + \y_0)} < \sqrt{\Delta_0 - \tfrac{1}4 \norm{\x_0 - \y_0}^2} \right\} \, .
\]
For any $i \in [m]$, define $k$ such that the subdomain $\Xik$ contains $\oh (\x_0 + \y_0)$.  Then if $\T_1 \subseteq \Xik$, we can safely replace $\phi_i$ with $\phi_i^{(k)}$ in $f$.
That is, for all $i \in [m]$, if we let
\[
\phis = \left\{
  \begin{array}{ll}
  \phi_i^{(k)} & \text{if}\ \T_1 \subseteq \Xik \, , \\
  \phi_i & \text{otherwise,}
\end{array} \right.
\]
then the ``screened objective''
$
\fs(\x) := \psi(\x) + \sum_{i=1}^m \phis(\x)
$
has the same minimizer as $f$.
\end{restatable}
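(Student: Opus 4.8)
The plan is to show that $\xstar$ satisfies the optimality condition $\m{0} \in \partial \fs(\xstar)$ for the screened objective, by first establishing that $\xstar$ lies inside the safe region $\T_1$ and then invoking the same subgradient-preservation argument used to justify \propref{prop:pw}. The key observation is that if $\xstar \in \T_1$, then for every $i$ that gets screened (i.e. $\T_1 \subseteq \Xik$) we have $\xstar \in \Xik$, so $\phi_i(\Ai \xstar) = \phi_i^{(k)}(\Ai \xstar)$ and moreover the subgradients agree locally; replacing $\phi_i$ by $\phi_i^{(k)}$ then leaves the subdifferential of the objective at $\xstar$ unchanged, so $\m{0} \in \partial f(\xstar)$ yields $\m{0} \in \partial \fs(\xstar)$. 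By convexity of $\fs$, this makes $\xstar$ a minimizer of $\fs$, and since $\fs$ is $1$-strongly convex (it retains $\psi$), the minimizer is unique, so $\fs$ and $f$ share exactly the same minimizer.

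\textbf{The main technical step is proving $\xstar \in \T_1$}, i.e. bounding $\norm{\xstar - \oh(\x_0 + \y_0)}$. First I would recall the standard duality-gap geometry already implicit in the paper's suboptimality-gap machinery. Since $f_0$ is $1$-strongly convex with minimizer $\x_0$, we have $f_0(\x) \geq f_0(\x_0) + \oh \norm{\x - \x_0}^2$ for all $\x$, and in particular at $\x = \xstar$, using $f_0 \leq f$, this gives $f(\xstar) \geq f_0(\x_0) + \oh \norm{\xstar - \x_0}^2$. Second, because $f$ is itself $1$-strongly convex with minimizer $\xstar$, we have $f(\y_0) \geq f(\xstar) + \oh \norm{\y_0 - \xstar}^2$. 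Adding these two inequalities and substituting $\Delta_0 = f(\y_0) - f_0(\x_0)$ yields
\[
\Delta_0 \geq \oh \norm{\xstar - \x_0}^2 + \oh \norm{\xstar - \y_0}^2 \, .
\]

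\textbf{The remaining work is purely algebraic}: the right-hand side above is exactly what is needed to place $\xstar$ in the ball defining $\T_1$. Using the identity $\oh \norm{\u}^2 + \oh \norm{\w}^2 = \norm{\oh(\u + \w)}^2 + \tfrac{1}4 \norm{\u - \w}^2$ with $\u = \xstar - \x_0$ and $\w = \xstar - \y_0$, the sum becomes $\norm{\xstar - \oh(\x_0 + \y_0)}^2 + \tfrac{1}4 \norm{\x_0 - \y_0}^2$. Combining with the displayed bound and rearranging gives $\norm{\xstar - \oh(\x_0 + \y_0)}^2 \leq \Delta_0 - \tfrac{1}4 \norm{\x_0 - \y_0}^2$, which is precisely the (closed) condition for membership in $\T_1$. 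A minor point to handle carefully is the strictness of the inequality defining $\T_1$: I would note that since $\y_0 \ne \xstar$, the second strong-convexity inequality is strict, so $\xstar$ lies in the open region, matching the definition exactly. The anticipated obstacle is not any single hard estimate but rather assembling these two strong-convexity bounds correctly and confirming the subgradient-preservation step formally—namely that screening only indices with $\T_1 \subseteq \Xik$ guarantees $\xstar$ lands strictly inside each such subdomain, so that local equivalence of $\phi_i$ and $\phi_i^{(k)}$ (and hence of their subdifferentials at $\xstar$) holds and $\partial \fs(\xstar) = \partial f(\xstar) \ni \m{0}$.
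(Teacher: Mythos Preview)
Your overall strategy matches the paper's: combine the two strong-convexity inequalities
\[
f(\xstar) \geq f_0(\x_0) + \oh\norm{\xstar-\x_0}^2
\quad\text{and}\quad
f(\y_0) \geq f(\xstar) + \oh\norm{\y_0-\xstar}^2
\]
to place $\xstar$ in the closure of $\T_1$, and then use subgradient preservation inside $\T_1$ to conclude $\m{0}\in\partial\fs(\xstar)$. For the interior case $\xstar\in\T_1$ your argument is essentially the paper's.

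There is, however, a genuine gap in how you dispose of the boundary case. You assert that because $\y_0\neq\xstar$, the inequality $f(\y_0)\geq f(\xstar)+\oh\norm{\y_0-\xstar}^2$ is strict, forcing $\xstar$ into the open ball. This is false: $1$-strong convexity only gives a non-strict inequality, and equality can occur. For instance, if $f(\x)=\oh\norm{\x}^2$ (so $\xstar=\m{0}$) and $f_0=f$, then $\x_0=\m{0}$ and for any $\y_0\neq\m{0}$ one checks $\norm{\xstar-\oh(\x_0+\y_0)}=\oh\norm{\y_0}$ equals the radius exactly, so $\xstar\in\bd{\T_1}$. In that situation you cannot conclude $\xstar\in\int{\Xik}$ from $\T_1\subseteq\Xik$, and the subgradient-preservation step breaks down: $\partial\phi_i(\xstar)$ may strictly contain $\partial\phi_i^{(k)}(\xstar)$, so the particular subgradient witnessing $\m{0}\in\partial f(\xstar)$ may be lost in $\fs$.

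The paper handles this boundary case with a separate argument. When $\xstar\in\bd{\T_1}$, both strong-convexity inequalities above must hold with equality. One then takes the midpoint $\z=\oh(\y_0+\xstar)$, which lies in the \emph{open} ball $\T_1$ (here the hypothesis $\y_0\neq\xstar$ is actually used), reads off $\z-\xstar\in\partial f(\z)=\partial\fs(\z)$ from the equality case, and applies strong convexity of $\fs$ at $\z$ to show $\fs(\x)\geq f(\xstar)=\fs(\xstar)$ for all $\x$. You would need to supply an argument of this kind to close the gap.
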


We prove \thmref{thm:screening} in \fullappref{app:proof_screening}.
The proof relies on the equivalence of $\fs$ and $f$ within $\T_1$.  As long as $\fs(\x) = \f(\x)$ for all $\x \in \T_1$, these objectives have the same minimizer.
Note the safe region size greatly depends on the approximate solutions.
When $\Delta_0$ is large, $\T_1$ is large and $\phis = \phi_i$ for many $i$. 
If $\Delta_0$ is small, 
minimizing $\fs$ can be significantly simpler than minimizing $f$.

Applying \screenname{} requires checking whether $\T_1 \subseteq \Xik$ for each $i$. 
This condition is closely related to (C1) in \algname{}, and our remarks in \secref{sec:time_form_f_t} about testing (C1) also apply to screening. 
In many scenarios, we can evaluate whether $\T_1 \subseteq \Xik$ in $\mathcal{O}(n)$ time.

\subsection{Example: \screenname{} for $\ell_1$-regularized learning \label{sec:screen_l1}}

As an example, we apply \screenname{} to $\ell_1$-regularized loss minimization:
\begin{equation}
\textstyle \minimize{\omegab \in \reals^m}\, \flo(\omegab) := \sum_{j=1}^n L_j(\ip{\a_j, \omegab})  + \lambda \norm{\omegab}_1 \, .
\label{prb:l1_loss_2}
\tag{PL1}
\end{equation}
If each $L_j$ is $1$-smooth, we can transform the problem into its $1$-strongly convex dual:
\begin{equation}
\textstyle \minimize{\x \in \D} \flod(\x) := \sum_{j=1}^n L_j^*(x_j) + \sum_{i=1}^m \phi_i(\x)   \, .
\label{prb:l1_dual_2}
\tag{PL1D}
\end{equation}
Above, each implicit constraint defines $\phi_i(\x) = 0$ if $\abs{\ip{\A_i, \x}} \leq \lambda$ and $\phi_i(\x) = +\infty$ otherwise.
Successfully screening a constraint in \prbref{prb:l1_dual_2} corresponds to eliminating a feature from \prbref{prb:l1_loss_2}. 

To apply \screenname{}, we assume an approximate solution to \prbref{prb:l1_loss_2}, which we denote by $\omegab_0$.  Letting $L_j'(\cdot)$ represent the derivative of $L_j(\cdot)$, we define
\begin{equation}
\x_0 = 
\left[
L_1'(\ip{\a_j, \omegab_0}) ,
\ldots ,
L_n'(\ip{\a_j, \omegab_0}) 
\right]^T
\, , \quad \text{and} \quad
f_0(\x) = \oh \norm{\x - \x_0}^2 - \flo(\omegab_0) 
\, . \label{eqn:sz80}
\end{equation}
Using properties of duality, we show in \fullappref{app:screen_l1} that $f_0$ indeed lower bounds $\flod$.

\screenname{} also
requires a $\y_0 \in \D$ such that $\flod(\y_0)$ is finite, 
meaning $\y_0$ must satisfy all constraints. 
We define $\y_0$ by scaling $\x_0$ toward $\m{0}$ until this requirement is satisfied:



\begin{equation} \label{eqn:bbzzz}
\y_0 = \tfrac{\lambda}{\max{i \in [m]} \abs{\ip{\A_i, \x_0}}} \x_0 \, .
\end{equation}
We note there exist more advanced strategies for defining $\y_0$ \citep{Massias:2018},
but we do not consider such ideas in this work.
With \eqnref{eqn:bbzzz}, we have the following screening test for \prbref{prb:l1_loss_2}.
\begin{cor}[\screenname{} for \prbref{prb:l1_loss_2}] \label{screen:l1}
Given any $\omegab_0$ that does not solve \prbref{prb:l1_loss_2}, define $f_0$, $\x_0$, and $\y_0$ as in \eqnref{eqn:sz80} and \eqnref{eqn:bbzzz}.  
Define $\Delta_0 = \flod(\y_0) + \flo(\omegab_0)$.
For any $i \in [m]$, if
\[
{\lambda - \abs{\ips{\A_i, \oh (\x_0 + \y_0)}}} \geq {\norm{\A_i}} \sqrt{\Delta_0 - \tfrac{1}4 \norm{\x_0 - \y_0}^2 }  \, ,
\]
we can safely remove $\phi_i$ from \prbref{prb:l1_dual_2}, which implies that $\omega^\star_i = 0$ for all $\omegab^\star$ that solve \prbref{prb:l1_loss_2}.
\end{cor}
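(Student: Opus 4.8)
The plan is to recognize the corollary as a direct specialization of \thmref{thm:screening} to the dual problem \prbref{prb:l1_dual_2}, and then translate the resulting dual statement back to the primal. First I would verify the hypotheses of \thmref{thm:screening}. The function in \eqnref{eqn:sz80} is $f_0(\x) = \oh \norm{\x - \x_0}^2 - \flo(\omegab_0)$, which is manifestly $1$-strongly convex with unique minimizer $\x_0$, and the computation in \fullappref{app:screen_l1} shows $f_0(\x) \leq \flod(\x)$ for all $\x$. Since $\y_0$ from \eqnref{eqn:bbzzz} is a rescaling of $\x_0$ chosen so that $\abs{\ip{\A_i, \y_0}} \leq \lambda$ for every $i$, it is feasible and $\flod(\y_0)$ is finite, while the assumption that $\omegab_0$ does not solve \prbref{prb:l1_loss_2} guarantees $\y_0 \ne \xstar$. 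Evaluating the gap, $f_0(\x_0) = -\flo(\omegab_0)$, so $\Delta_0 = \flod(\y_0) - f_0(\x_0) = \flod(\y_0) + \flo(\omegab_0)$, matching the corollary.

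Next I would identify the subdomain structure of each $\phi_i$. In \prbref{prb:l1_dual_2} each $\phi_i$ has exactly two subdomains: the feasible slab $\Xik = \{ \x \mid \abs{\ip{\A_i, \x}} \leq \lambda \}$, on which $\phi_i^{(k)} = 0$, and its infeasible complement. The right-hand side of the corollary's inequality is nonnegative (indeed $\Delta_0 \geq \oh \norm{\x_0 - \y_0}^2$ by strong convexity, so the quantity under the square root is at least $\tfrac14 \norm{\x_0 - \y_0}^2 \geq 0$), hence the hypothesis forces $\abs{\ip{\A_i, \oh(\x_0 + \y_0)}} \leq \lambda$. This places the center $\oh(\x_0 + \y_0)$ in the feasible subdomain, so $k$ is indeed the subdomain index required by \thmref{thm:screening}.

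The key computation is to reduce the abstract containment $\T_1 \subseteq \Xik$ to the scalar inequality in the statement. Writing $r = (\Delta_0 - \tfrac14 \norm{\x_0 - \y_0}^2)^{1/2}$ for the radius of the open ball $\T_1$ and $\oh(\x_0 + \y_0)$ for its center, a support-function argument shows that the extreme values of $\ip{\A_i, \x}$ over $\x \in \T_1$ are $\ip{\A_i, \oh(\x_0 + \y_0)} \pm \norm{\A_i} r$. The ball is contained in the slab $\{ \abs{\ip{\A_i, \x}} \leq \lambda \}$ iff both extremes lie in $[-\lambda, \lambda]$, that is iff $\abs{\ip{\A_i, \oh(\x_0 + \y_0)}} + \norm{\A_i} r \leq \lambda$, which is exactly the corollary's condition. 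Applying \thmref{thm:screening} then lets us replace $\phi_i$ by $\phi_i^{(k)} = 0$, i.e. drop the $i$th constraint from \prbref{prb:l1_dual_2} without changing its minimizer $\xstar$.

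The final and most delicate step is the translation back to the primal conclusion $\omega_i^\star = 0$. Because $\T_1$ is the safe region that \thmref{thm:screening} guarantees to contain $\xstar$, and $\T_1$ is open with $\T_1 \subseteq \Xik$, I would conclude $\xstar \in \int{\Xik}$, that is $\abs{\ip{\A_i, \xstar}} < \lambda$ strictly. I then invoke the optimality conditions linking \prbref{prb:l1_loss_2} and \prbref{prb:l1_dual_2} (the feature--constraint correspondence sketched in \secref{sec:sparsity_inducing}): at any primal solution $\omegab^\star$, stationarity yields $\ip{\A_i, \xstar} = -\lambda s_i$ for some subgradient $s_i \in \partial \abs{\omega_i^\star}$. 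Strict inactivity forces $\abs{s_i} < 1$, which is incompatible with $\omega_i^\star \ne 0$ (in which case $\abs{s_i} = 1$); hence $\omega_i^\star = 0$ for every primal optimum. The main obstacle is precisely this translation: it requires the exact primal--dual (KKT) relationship and care to extract \emph{strict} inactivity from the openness of $\T_1$, since a merely active constraint need not force a zero weight. The ball-in-slab reduction, by contrast, is a routine support-function computation.
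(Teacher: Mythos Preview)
Your proposal is correct and follows precisely the route the paper intends: the corollary is stated without a separate proof, being an immediate specialization of \thmref{thm:screening} to \prbref{prb:l1_dual_2} via the ball-in-slab reduction you describe, together with the standard KKT correspondence between dual constraints and primal weights. One small technical slip worth noting: the proof of \thmref{thm:screening} (see \eqnref{eqn:x8gj}) only places $\xstar$ in the \emph{closure} of $\T_1$, not in the open ball itself, so your inference ``$\T_1$ open and $\T_1 \subseteq \Xik$, hence $\xstar \in \int{\Xik}$'' does not go through when the corollary's inequality holds with equality---this degenerate boundary case (which the paper also glosses over) would need a separate argument.
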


\subsection{Relation to prior screening tests}

\screenname{} improves upon prior screening tests in a few ways, which
we summarize as follows:

\begin{itemize}
\item \emph{More broadly applicable:}  
Prior works have derived separate screening tests for different objectives, including sparse regression \citep{Ghaoui:2010,Xiang:2012,Tibshirani:2012,Liu:2014,Wang:2015},
 sparse group \lasso{} \citep{Wang:2014}, as well as SVM and least absolute deviation problems \citep{Wang:Wonka:2014}.
Extending screening to each new objective requires substantial new derivations.
In contrast, \screenname{} applies in a unified way to all instances of our piecewise problem formulation.

Recently
\cite{Raj:2016} proposed a general recipe for deriving screening tests for different problems.
Unlike this approach, \screenname{} is an explicit screening test.

\item \emph{Adaptive:}
Before recently, most safe screening tests relied on knowledge of an exact solution to a related problem.
For example, \citet{Ghaoui:2010}'s test requires the solution to an identical problem but with greater regularization.
This is disadvantageous for a few reasons, one of which is that screening only applies as a preprocessing step prior to optimization.

Recent works have proposed \emph{adaptive} (also called ``dynamic'') safe screening tests \citep{Bonnefoy:2014,Bonnefoy:2015,Fercoq:2015,Johnson:2015,Ndiaye:2015,Zimmert:2015,Shibagaki:2016,Raj:2016,Ndiaye:2016,Ndiaye:2017}. 
Adaptive screening tests increasingly simplify the objective as the quality of the approximate solution improves. 
\screenname{} is an adaptive screening test.

\item \emph{More effective:}
Prior to \screenname{}, the ``gap safe sphere'' tests proposed by \citet{Fercoq:2015} 
were state-of-the-art adaptive screening tests,
as were the closely related tests proposed by \citet{Johnson:2015}, \citet{Zimmert:2015}, \citet{Shibagaki:2016}, \citet{Raj:2016} and \citet{Ndiaye:2017}.
Each of these tests applies to a different class of objectives, but they relate to \screenname{} in the same way.
With the exception of \citeauthor{Zimmert:2015}'s result (which is a special case of \screenname{} for SVM problems), we can recover these prior screening tests as special cases of \screenname{} but only by replacing $\T_1$ with a larger set.  Specifically, if we replace $\T_1$ in \thmref{thm:screening} with the larger ball
\begin{equation} \label{eqn:def_s_gap}
\T_{\mathrm{Gap}} = \left\{ \x \ \big| \ \norm{\x - \y_0} \leq \sqrt{2 \Delta_0} \right\} \, ,
\end{equation}
then the resulting theorem is a more general version of these existing tests.
The main difference is that $\T_{\mathrm{Gap}}$ is at least a factor $\sqrt{2}$ larger than the radius of $\T_1$.
As a result, \screenname{} is more effective at simplifying the objective.


\end{itemize}

\subsection{Relation to \algname{} \label{sec:screen_relat}}
We can view safe screening as a working set algorithm that converges in one iteration.
To solve ``subproblem 1,'' we minimize the screened objective, $f_S$. The subproblem solution also solves \prbref{prb:general}.

Our next theorem shows that
in the case of \screenname{} and \algname{},
this relation goes further:
\begin{restatable}[Relation between equivalence regions in \screenname{} and \algname{}]{thm}{thmSone} \label{thm:S1}
Given points $\x_0$ and $\y_0$, function $f_0$, and suboptimality gap $\Delta_0$ that satisfy the requirements for \thmref{thm:screening}, define the ball $\T_1$ as in \thmref{thm:screening}.
In addition, consider the equivalence region $\T_\xi$ from \secref{sec:blitz_lasso} with parameter choices $\xi_t = 1$, $\xtm = \x_0$, $\ytm = \y_0$, and $\Delta_{t-1} = \Delta_0$. Then 
\[
\T_1 = \T_\xi \, .
\]
\end{restatable}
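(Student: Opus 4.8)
The plan is to show that the extreme ball in the union defining $\T_\xi$---the one obtained as $\beta \to \toh$---is already all of $\T_1$, and that every other ball $\B_\xi(\beta)$ with $\beta \in (0, \toh)$ is contained in it. First I would substitute $\xi_t = 1$ into \eqnref{eqn:tau}, which kills the $\tfrac{1 - \xi_t}{1 - 2\beta}$ term and, after simplification, leaves $\tau_\xi(\beta)^2 = \tfrac{2\Delta_0\beta^2 - \norm{\x_0 - \y_0}^2\beta^3}{1-\beta}$. Evaluating the center $\beta\x_0 + (1-\beta)\y_0$ and the radius $\tau_\xi(\beta)$ at $\beta = \toh$ gives center $\oh(\x_0 + \y_0)$ and radius exactly $\sqrt{\Delta_0 - \tfrac14\norm{\x_0 - \y_0}^2}$, so the limiting ball coincides with $\T_1$. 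Throughout I would use the hypothesis, inherited from the requirements of \thmref{thm:screening}, that $\norm{\x_0 - \y_0}^2 \le 2\Delta_0$: since $f_0$ is $1$-strongly convex with minimizer $\x_0$ and $f_0 \le f$, we have $\Delta_0 = f(\y_0) - f_0(\x_0) \ge f_0(\y_0) - f_0(\x_0) \ge \oh\norm{\x_0 - \y_0}^2$. This bound guarantees $\tau_\xi(\beta) > 0$ on $(0, \toh]$ and is exactly what makes the containment below hold.

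For the inclusion $\T_1 \subseteq \T_\xi$ I would argue by continuity. Writing $d = \norm{\x_0 - \y_0}$ and fixing $\x \in \T_1$, the function $h(\beta) = \tau_\xi(\beta) - \norm{\x - \beta\x_0 - (1-\beta)\y_0}$ is continuous on $(0, \toh]$ and satisfies $h(\toh) = \sqrt{\Delta_0 - \tfrac14 d^2} - \norm{\x - \oh(\x_0+\y_0)} > 0$ by the definition of $\T_1$. Hence $h > 0$ on a left neighborhood of $\toh$, so there is some $\beta \in (0, \toh)$ with $\x \in \B_\xi(\beta) \subseteq \T_\xi$.

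The main work is the reverse inclusion $\T_\xi \subseteq \T_1$, i.e. $\B_\xi(\beta) \subseteq \T_1$ for every $\beta \in (0, \toh)$. Since $\T_1$ is the ball of radius $R := \sqrt{\Delta_0 - \tfrac14 d^2}$ about $\oh(\x_0 + \y_0)$, and $\B_\xi(\beta)$ is the ball of radius $\tau_\xi(\beta)$ about $\beta\x_0 + (1-\beta)\y_0$ whose center lies at distance $(\toh - \beta)d$ from $\oh(\x_0 + \y_0)$, it suffices to prove the one-dimensional inequality $(\toh - \beta)d + \tau_\xi(\beta) \le R$. Both $R - (\toh - \beta)d$ and $\tau_\xi(\beta)$ are nonnegative (the former because $(\toh-\beta)d \le \oh d \le R$, again using $d^2 \le 2\Delta_0$), so I can square once to reduce the claim to $2R(\toh-\beta)d \le R^2 + (\toh-\beta)^2 d^2 - \tau_\xi(\beta)^2$. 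A short computation collapses the right-hand side to $\tfrac{(1-2\beta)[\Delta_0(1+\beta) - \beta d^2]}{1-\beta}$, which is strictly positive. Squaring a second time and cancelling the common factor $(1-2\beta)^2$ reduces the whole claim to the polynomial inequality
\[
\left[\Delta_0(1+\beta) - \beta d^2\right]^2 \ge \left(\Delta_0 - \tfrac14 d^2\right)d^2(1-\beta)^2 \, .
\]

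I expect this polynomial inequality to be the crux of the argument. My plan is to treat its difference as a quadratic $Q$ in the variable $x = d^2$ with positive leading coefficient $\beta^2 + \tfrac14(1-\beta)^2$, and to show its two roots are $x = 2\Delta_0$ and $x = \tfrac{\Delta_0(1+\beta)^2}{2\beta^2 + \oh(1-\beta)^2}$. A one-line check reduces the inequality $\tfrac{\Delta_0(1+\beta)^2}{2\beta^2 + \oh(1-\beta)^2} \ge 2\Delta_0$ to $\beta \le 1$, so the second root is at least $2\Delta_0$; since $Q$ opens upward and $x = d^2 \le 2\Delta_0$ sits to the left of (or at) both roots, $Q(d^2) \ge 0$, completing the proof. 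The only subtlety to handle carefully is the chain of squaring steps: each is reversible precisely because the hypothesis $d^2 \le 2\Delta_0$ keeps every intermediate quantity nonnegative, so the final polynomial inequality is genuinely equivalent to the desired ball containment rather than merely necessary.
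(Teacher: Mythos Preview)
Your proof is correct. The direction $\T_1 \subseteq \T_\xi$ is handled exactly as in the paper: both of you observe that $\tau_\xi(\beta)$ and the center $\beta\x_0 + (1-\beta)\y_0$ extend continuously to $\beta = \toh$, where the ball coincides with $\T_1$, and then invoke continuity to find a genuine $\beta \in (0,\toh)$ with $\x \in \B_\xi(\beta)$.

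For $\T_\xi \subseteq \T_1$ the two arguments diverge. You prove the ball-containment inequality $(\toh-\beta)d + \tau_\xi(\beta) \le R$ by squaring twice and factoring the resulting quadratic in $d^2$, locating its roots at $2\Delta_0$ and a point $\ge 2\Delta_0$. The paper takes a shorter route: it shows (though it is written as an equality, only the inequality is needed and used) that $\tau_\xi(\beta) \le 2\beta R$, after which the triangle inequality gives
\[
\norm{\x - \tfrac12(\x_0+\y_0)} < (1-2\beta)\tfrac{d}{2} + 2\beta R \le R,
\]
the last step being a convex combination together with $\tfrac{d}{2} \le R$. Your approach is more computational but entirely valid and has the virtue of never relying on an intermediate bound that looks suspiciously like an identity; the paper's approach is slicker once one spots the bound $\tau_\xi(\beta) \le 2\beta R$, which is equivalent to $(2\Delta_0 - d^2)(1-2\beta) \ge 0$.
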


We prove \thmref{thm:S1} in \fullappref{app:proof_S1}.
When $\xi_1 = 1$, using \algname{} is nearly equivalent to applying \screenname{}.
The only minor difference is that 
\algname{} may not simplify the objective as much as \screenname{}, since \screenname{} does not consider conditions analogous to (C2) and (C3).


Importantly, it is usually \emph{not} desirable for a working set algorithm to converge in one iteration.
Since screening tests only make ``safe'' simplifications to the objective, screening tests often simplify the problem only a modest amount.  In fact, unless a good approximate solution is already known, screening can fail to simplify the objective \emph{at all}.
We find it is usually better to simplify the objective aggressively, correcting erroneous choices later as needed.  This is precisely the working set approach.
As part of the next section,
we support this observation with empirical results.

\section{Empirical evaluation \label{sec:empirical}}

This section demonstrates the performance of \algname{} and \screenname{} in practice.

\subsection{Comparing the scalability of \algname{} and \screenname{} \label{sec:emp_scale}}

We first consider a group \lasso{} task and a linear SVM task. 
In each case, we examine how \algname{} and \screenname{} affect convergence times as the problem grows larger.
To our knowledge, such scalability tests are a novel contribution to research on safe screening.

\subsubsection{Scalability tests for group \lasso{} application}

For our first experiment, we consider the group \lasso{} objective \citep{Yuan:2006}:
\[
\ggl(\omegab) := \oh \norm{\A \omegab - \b}^2 + \lambda \sump \norm{ \omegab_{\mathcal{G}_i}}  .
\]
Here $\mathcal{G}_1, \ldots, \mathcal{G}_m$ are disjoint sets of feature indices such that $\cup_{i=1}^m \mathcal{G}_i = [q]$.  
Let $\omegab^\star \in \reals^q$ denote a minimizer of $\ggl$.
If $\lambda > 0$ is sufficiently large, then $\omegab^\star_{\G_i} = \m{0}$ for many $i$. 

We transform this problem into an instance of \prbref{prb:general} by considering the dual problem:
\begin{equation}
\begin{array}{cll}
\minimize{\x \in \D} & \fgl(\x) := \oh \norm{\x + \b}^2 - \oh \norm{\b}^2 &  \\
\mathrm{s.t.} & \norm{\A^T_{\G_i} \x} \leq \lambda & i = 1, \ldots, m \, .
\end{array}
\tag{PGD} \label{prb:group_dual}
\end{equation}
Each feature group corresponds to a constraint in the dual problem.
Constraints that do not determine the dual solution correspond to zero-valued groups in the primal solution.

We apply group \lasso{} to perform feature selection for a loan default prediction task.
Using data available from Lending Club,\footnote{URL: \url{https://www.kaggle.com/wendykan/lending-club-loan-data}.} we 
train a boosted decision tree model to
predict whether a loan will default during a given month.
We apply group \lasso{} to reduce the number of trees in the model. 
Features correspond to leaves in the tree model ($q\approx 3.0\e{4}$ features); groups correspond to trees ($m = 990$).
We generate $n = 4.8\e{5}$ training instances by passing data through the tree model, using the model's prediction values (sum of appropriate leaf weights) as training labels.
Since each tree maps each instance to one leaf, the feature matrices corresponding to each group are orthogonal.

There exist many algorithms for minimizing $\ggl$ \citep{Yuan:2006,Liu:2009,Kim:2010}.
Our implementation uses the block coordinate descent approach of \citet{Qin:2013}.
During an iteration, BCD updates weights in one group, 
keeping the remaining weights unchanged.
Following \citet{Qin:2013}, our implementation computes an optimal update to $\omegab_{\G_i}$ for roughly the cost of multiplying a dual vector $\x \in \reals^n$ by $\A_{\G_i}$.
Each update requires solving a 1-D optimization problem, which we solve with the bisection method.

We implement BCD in C++.
Using the same code base, we also implement the following:
\begin{itemize}
\item \emph{\algname{}:} To solve each subproblem, we use BCD.
\item \emph{\algname{} + \screenname{}:} After solving  each \algname{} subproblem, we apply \screenname{}.
\item \emph{BCD + \screenname{}:} After every five epochs, we apply \screenname{}.
\item \emph{BCD + gap safe screening:} After every five passes over the groups, we apply gap safe screening \citep{Ndiaye:2015}.
This implementation is identical to BCD + \screenname{} except we replace $\T_1$ in \screenname{} with the $\T_{\mathrm{Gap}}$ region defined in \eqnref{eqn:def_s_gap}.
\end{itemize}

\algname{} and the screening tests require checking if 
a region $\T = \{ \x \mid \norm{\x - \c} \leq r \}$ is a subset of
a set $\X_i^{(1)} = \{ \x \mid \norm{\A_{\G_i}^T \x} \leq \lambda \}$.
Computing this is nontrivial, so we apply 
relaxation ideas from \secref{sec:time_form_f_t} and \secref{sec:main_screening}. We define a set $\tilde{\X}_i^{(1)}$ such that $\tilde{\X}_i^{(1)} \subseteq \X_i^{(1)}$, and the algorithms test if $\T \subseteq \tilde{\X}_i^{(1)}$.
For each $i \in [m]$, we let $L_i = \max{k \in \G_i} \norm{\A_k}$ and define
$ \tilde{\X}_i^{(1)} = \{ \x \mid \norm{\A_{\G_i}^T \c} + L_i \norm{\x - \c} \leq \lambda \} $.

\begin{figure}[h]
\centering
\begin{small}
\begin{tabu}{@{}c@{\hspace{0.115in}}c@{\hspace{0.115in}}c@{}}
$m = 110,\ n = 4.8\e{5}$ & $m = 330,\ n = 4.8\e{5}$ & $m = 990,\ n = 4.8\e{5}$  \\
\includegraphics[width=1.90in]{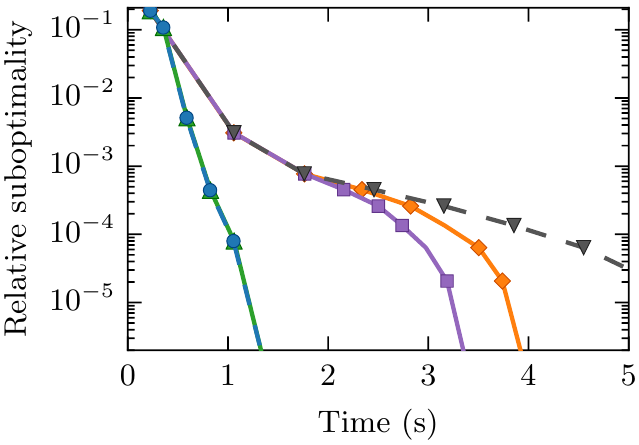} 
&
\includegraphics[width=1.90in]{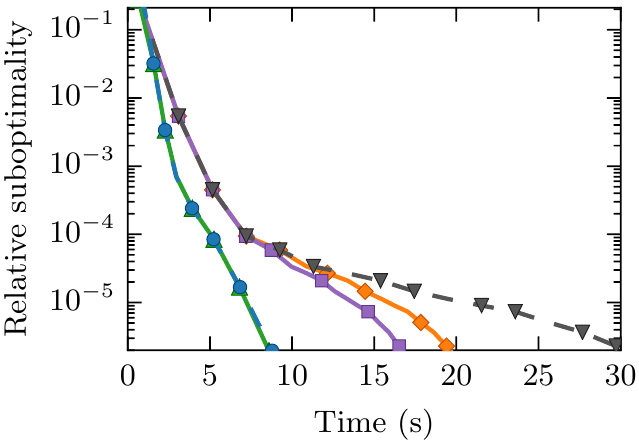} 
&
\includegraphics[width=1.90in]{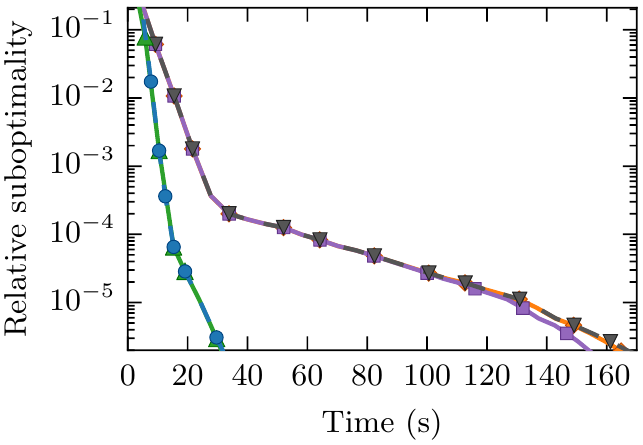} 
\\
\hline
\\[-0.95em]
$m = 110,\ n = 1.6\e{5}$ & $m = 330,\ n = 1.6\e{5}$ & $m = 990,\ n = 1.6\e{5}$  \\
\includegraphics[width=1.90in]{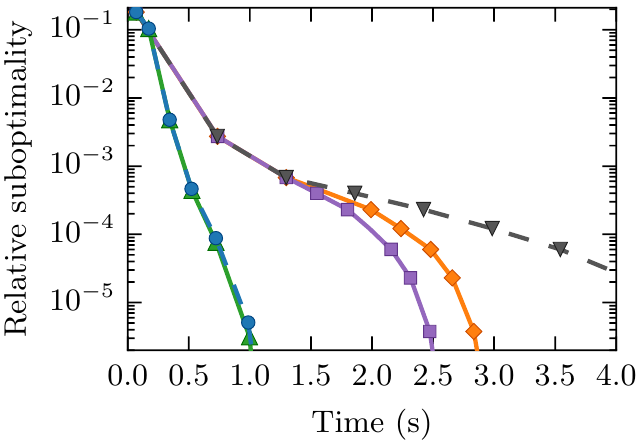} 
&
\includegraphics[width=1.90in]{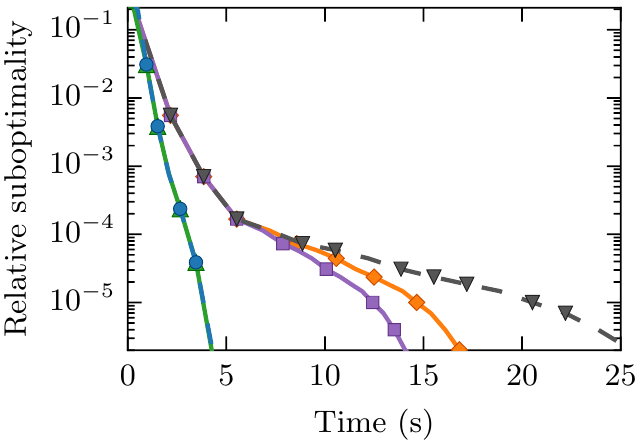} 
&
\includegraphics[width=1.90in]{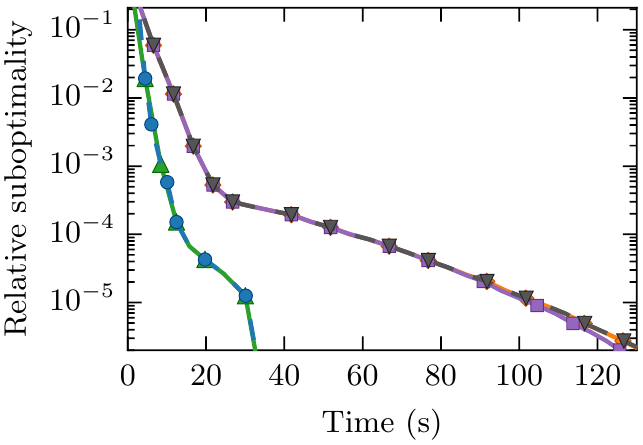} 
\\
\hline
\\[-0.95em]
$m = 110,\ n \approx 5.3\e{4}$ & $m = 330,\ n \approx 5.3\e{4}$ & $m = 990,\ n \approx 5.3\e{4}$  \\
\includegraphics[width=1.90in]{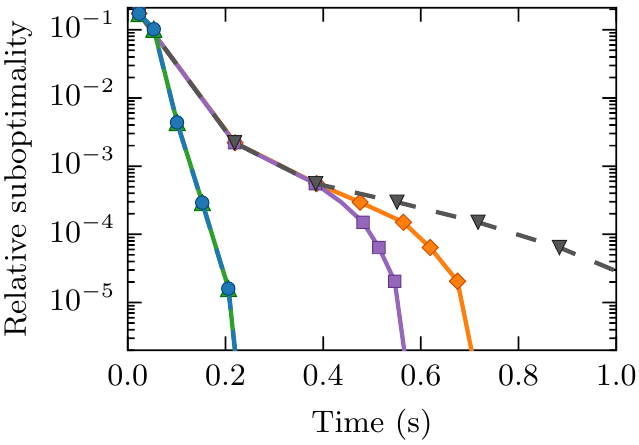} 
&
\includegraphics[width=1.90in]{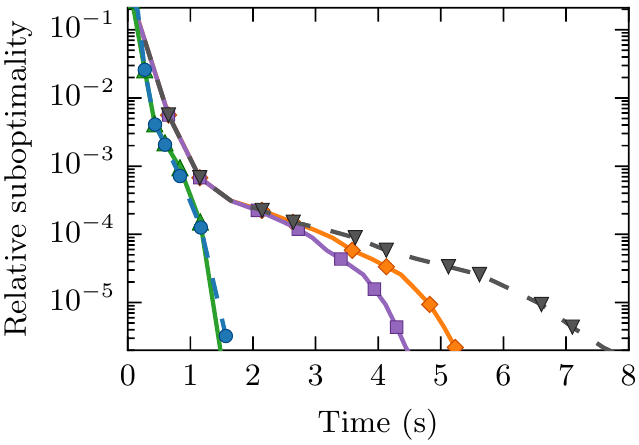} 
&
\includegraphics[width=1.90in]{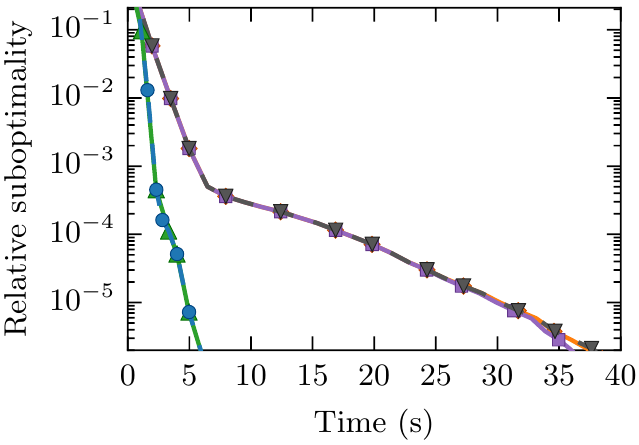} 
\\
\multicolumn{3}{c}{
\includegraphics[width=4.6in]{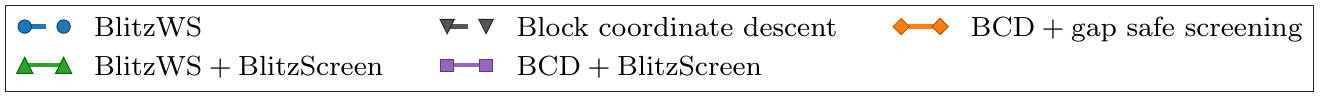} 
}
\end{tabu}
\end{small}
\caption{
\label{fig:group_lasso_comparisons}
\textbf{Scalability tests for group \lasso{}.}
From left to right, the number of groups increases from 110 to 330 and finally to 990.
From top to bottom, the number of instances decreases from 480k to 160k to 53.3k.
The impact of screening degrades as the number of groups increases, but \algname{} provides significant speed-ups in all cases.
Each \algname{} point represents 1 iteration; each BCD point represents 5 epochs.
}
\end{figure}

We perform data preprocessing to standardize groups in $\A$.  For each $i$, we scale $\A_{\G_i}$ so the variances of each column sum to one.
Our implementations include an unregularized bias variable.
We can easily accommodate this bias term by adding the constraint $\ip{\x, \m{1}} = 0$ to \prbref{prb:group_dual}.

To test the scalability of \algname{} and \screenname{}, we create nine smaller problems from the original group \lasso{} problem.  
We consider problems with $m = 990$, 330, and 110 groups by subsampling  groups uniformly without replacement.
We consider problems with $n = 480k$, $160k$, and $53.3k$ training examples by subsampling examples.
For each problem, we define $\lambda$ so that exactly 10\% of the groups have nonzero weight in the optimal model.

We evaluate performance using the relative suboptimality metric:
\[
\text{Relative suboptimality} = \frac{\ggl(\omegab_T) - \ggl(\omegab^\star)}{\ggl(\omegab^\star)}  \, .
\]
Here $\omegab_T$ is the weight vector at time $T$.
We take the optimal solution to be \algname{}'s solution after optimizing for twice the amount of time as displayed in each figure.

\figref{fig:group_lasso_comparisons} shows the results of these scalability tests.
Our first takeaway is that for this problem, the number of training examples does not greatly affect the impact of \algname{} and screening; as $n$ increases, the relative performance of each algorithm is remarkably consistent.
As the number of \emph{groups} increases, we observe a different trend.
When $m=110$, the screening tests provide some speed-up compared to BCD with no screening, particularly once the relative suboptimality reaches $6 \e{-4}$.
When $m = 990$, however, screening provides much less benefit.  In this case, despite being state-of-the-art for safe screening, \screenname{} has no impact on convergence progress until relative suboptimality reaches $10^{-5}$.

In contrast to safe screening, we find \algname{} achieves significant speed-ups compared to BCD, regardless of $m$.
We also note \screenname{} provides no benefit when combined with \algname{}.
This is because \algname{} already effectively prioritizes BCD updates.  

\subsubsection{Scalability tests for linear SVM application \label{sec:svm_scale_results}}

\begin{figure}[t]
\centering
\begin{small}
\begin{tabu}{@{}c@{\hspace{0.115in}}c@{\hspace{0.115in}}c@{}}
\includegraphics[width=1.90in]{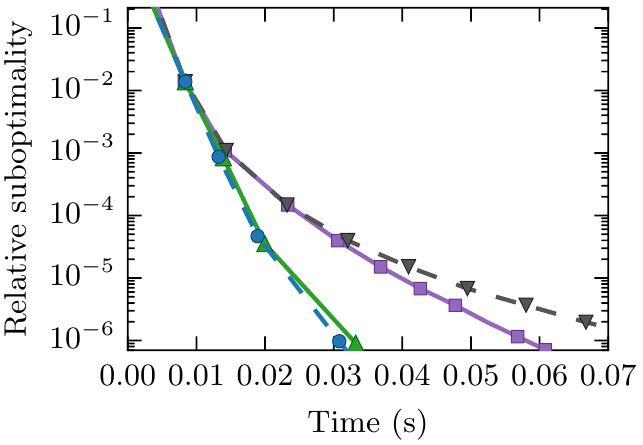} 
&
\includegraphics[width=1.90in]{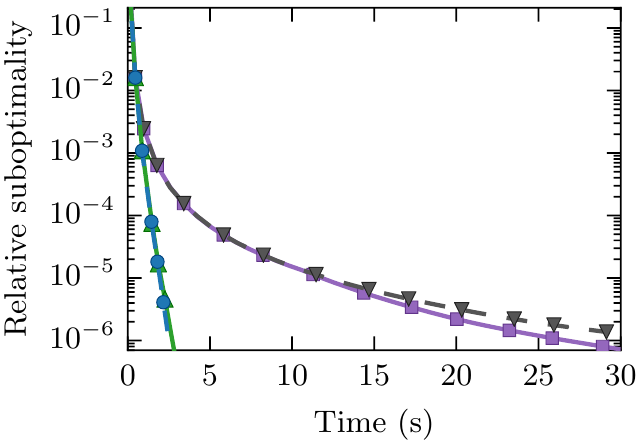} 
&
\includegraphics[width=1.90in]{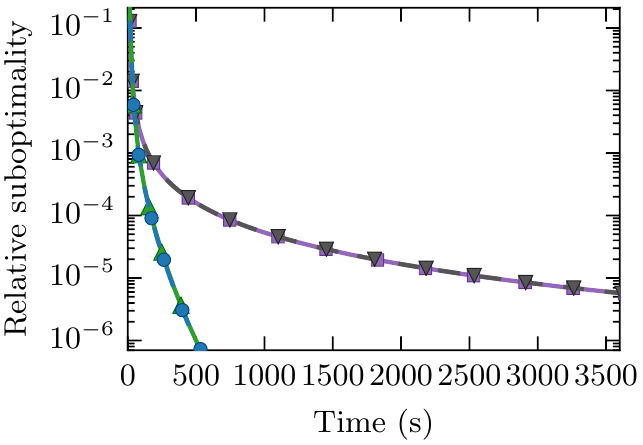} 
\\[-0.15em]
\multicolumn{3}{c}{
\includegraphics[width=4.6in]{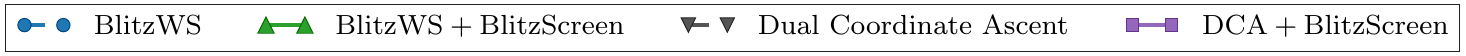} 
} 
\\[0.1em]
\includegraphics[width=1.90in]{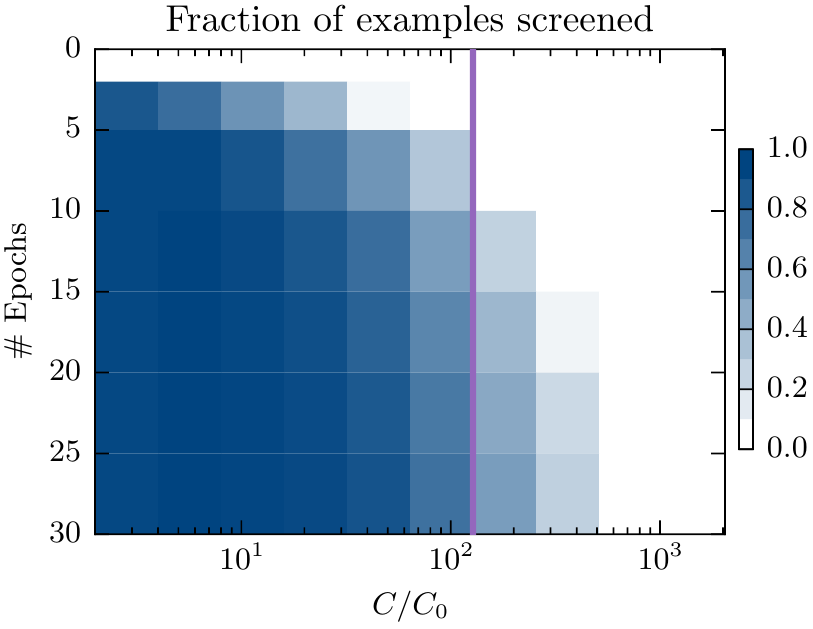} 
&
\includegraphics[width=1.90in]{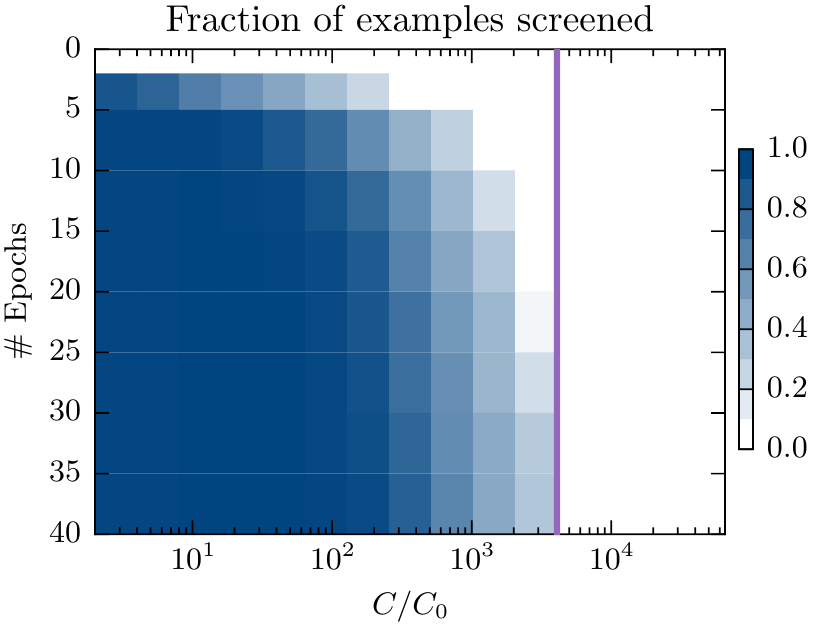} 
&
\includegraphics[width=1.90in]{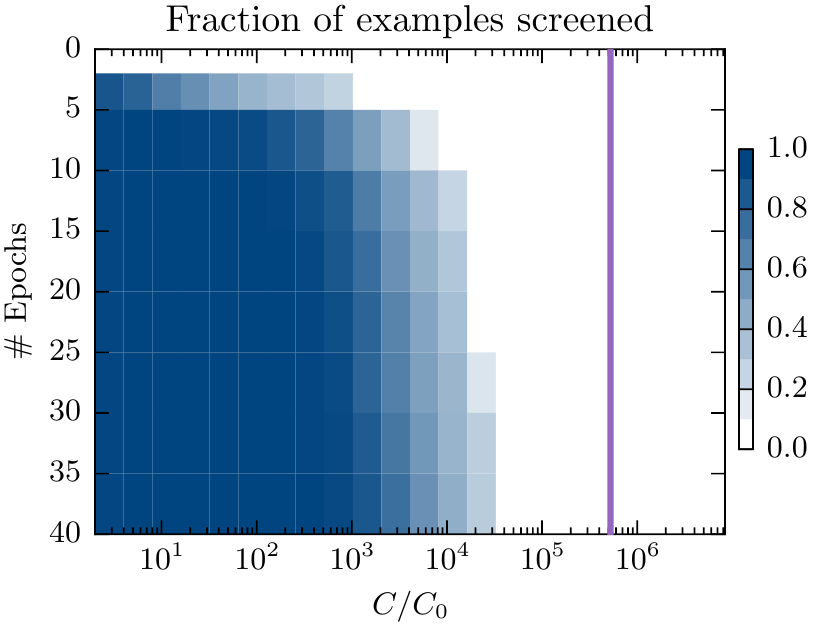} 
\\[-0.25em]
$m = 10^{4}$ & $m = 3.2 \e{5}$ & $m = 10^{7}$  
\end{tabu}
\end{small}
\caption{
\label{fig:svm_scalability}
\textbf{Scalability tests for linear SVM problem.}
From left to right, the number of training instances ($m$) increases for training a linear SVM. 
\emph{Above:} Relative suboptimality vs. time. 
\emph{Below:} Heat maps depicting the fraction of examples screened by \screenname{} when used with dual coordinate ascent. 
The purple vertical line indicates the $C$ chosen by five-fold cross validation.  We also use this value of $C$ for the above plots.
As the number of examples increases, screening becomes less useful when training with desirable values of $C$.
}
\end{figure}

We perform similar scalability tests for a linear SVM problem (\prbref{prb:l2_loss} with hinge loss).
We consider a physics prediction task involving the Higgs boson \citep{Bourdarios:2014}.
 We perform feature engineering using XGBoost \citep{Chen:2016},
which achieves good accuracy for this problem \citep{Chen:2014:HEP}. 
Using each leaf in the ensemble as a feature, the data set contains $n= 8010$ features and $m = 10^7$ examples.

There exist many algorithms for solving this problem \citep{Zhang:2004,Joachims:2006,ShalevShwartz:2007,teo:2010}.
We use dual coordinate ascent, which is simple and fast \citep{Hsieh:2008}.
We implement DCA in C++. Like the group \lasso{} comparisons, we implement \algname{} using the same code base. For each algorithm, we also implement \screenname{}.

By subsampling training instances without replacement,
we test the scalability of \algname{} and \screenname{} using
$m=10^7$, $3.2\e{5}$, and $10^4$ training instances.
For each problem and each algorithm, we plot relative suboptimality vs. time---here we measure relative suboptimality using the dual objective. 
We choose $C$ using five-fold cross validation.

We also test the performance of \screenname{} using a range of $C$ parameters.
We show these results using heatmaps, where 
the $y$-axis indicates epochs completed by DCA, and 
the $x$-axis indicates $C$.
The shading of the heat map depicts the fraction of training instances that \screenname{} screens successfully at each point in the algorithm.

\figref{fig:svm_scalability}
includes results from these comparisons.
Similar to the group \lasso{} case, we see \screenname{} provides some speed-up when $m$ is small.
But as $m$ increases, \screenname{} has no impact on convergence times until the relative suboptimality is much smaller.  In contrast, \algname{} provides improvements that, relative to the DCA solver, do not degrade as $m$ grows larger.

\subsection{Comparing \algname{} to \normalsize{L}\footnotesize{IB}\normalsize{L}\footnotesize{INEAR}}

\liblinear{} is one of the most popular and, to our knowledge, one of the fastest solvers
for sparse logistic regression and linear SVM problems.
Here we test how \algname{} compares. 

For sparse logistic regression, \liblinear{} uses working sets and shrinking to prioritize computation \citep{Yuan:2012}.
For linear SVM problems, \liblinear{} applies only shrinking \citep{Joachims:1999}. 
We can view shrinking as a working set algorithm that initializes the working set with all components (i.e., $\W_t = [m]$ and $f_t = f$); then while solving the subproblem, shrinking progressively removes elements from $\W_t$ using a heuristic. 

\subsubsection{Sparse logistic regression comparisons \label{sec:logreg_comparisons}}

Our \liblinear{} comparisons first consider sparse logistic regression (\prbref{prb:l1_loss}
with logistic loss).
There are many efficient algorithms for solving this problem \citep{Shalev-Shwartz:2009,Xiao:2010,Bradley:2011,Defazio:2014,Xiao:2014,Fercoq:2015:approx}.
To solve each subproblem, our \algname{} implementation uses an inexact proximal Newton algorithm (``ProxNewton'').  
We use coordinate descent to compute each proximal Newton step.
\liblinear{} uses the same ProxNewton strategy \citep{Yuan:2012}. 

We compare \algname{} with \liblinear{} version 2.11.
We compile \algname{} and \liblinear{} with GCC 4.8.4 and the {\tt -O3} optimization flag. We compare with two baselines: our ProxNewton subproblem solver (no working sets) and ProxNewton combined with \screenname{}.  
We perform screening as described in \secref{sec:screen_l1}
after each ProxNewton iteration.

We compare the algorithms using data from the LIBSVM data repository.\footnote{URL: \url{https://www.csie.ntu.edu.tw/~cjlin/libsvmtools/data sets/}.}
Tasks include spam detection \citep{Webspam:2006}, malicious URL identification \citep{Url:2009}, text classification \citep{Rcv1:2004}, and educational performance prediction \citep{Kdda:2010}.

We perform conventional preprocessing on each data set.
We standardize all features to have unit variance.  We remove features with fewer than ten nonzero entries.
We include an unregularized bias term in the model.  To accommodate this term, we add the constraint $\ip{\m{1}, \x} = 0$ to \prbref{prb:l1_dual}.
Since \liblinear{} implements an $\ell_1$-regularized bias term, we slightly modify \liblinear{} to 
(i)~use regularization 0 for the bias variable, and (ii)
always include the bias term in the working set.

\begin{figure}
\centering
\begin{small}
\begin{tabu}{@{}c@{\hspace{0.115in}}c@{\hspace{0.115in}}c@{}}
\multicolumn{3}{c}{{\tt webspam $\quad m \approx 4.4\e{5},\ n = 3.5\e{5},\ \nnz \approx 1.3\e{9}$}} \\
\lam{0.2} & \lam{0.02} & \lam{0.002} \\
\spar{6\e{-5}}{0.005} & \spar{9\e{-4}}{0.03} & \spar{0.006}{0.05}
\\
\includegraphics[width=1.90in]{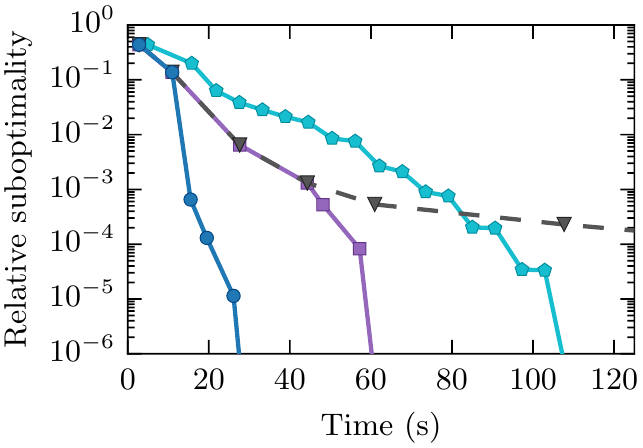} 
&
\includegraphics[width=1.90in]{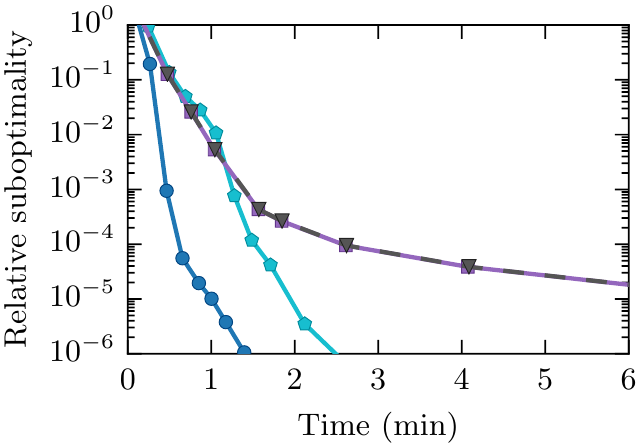} 
&
\includegraphics[width=1.90in]{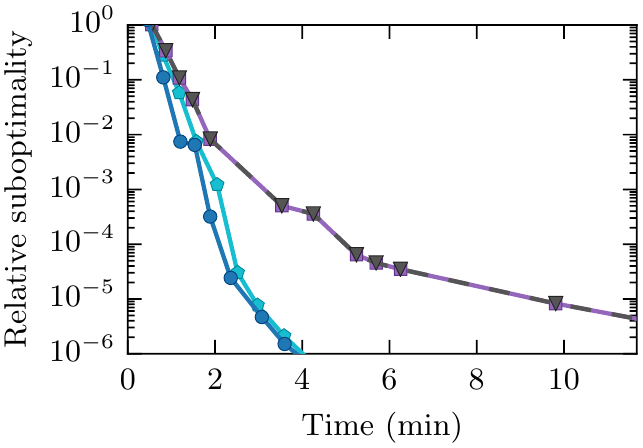} 
\\
\hline
\\[-0.99em]
\multicolumn{3}{c}{{\tt url} $\quad m \approx 1.5\e{5},\ n \approx 2.4\e{6},\ \nnz \approx 2.6\e{8}$} \\
\lam{0.2} & \lam{0.02} & \lam{0.002} \\
\spar{4\e{-5}}{0.03} & \spar{7\e{-4}}{0.09} & \spar{0.01}{0.2}
\\
\includegraphics[width=1.90in]{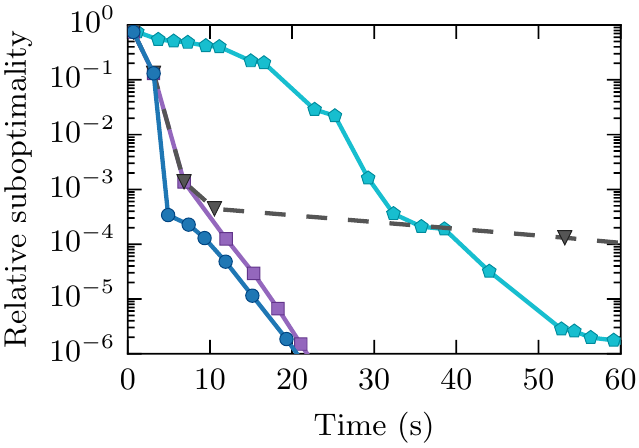} 
&
\includegraphics[width=1.90in]{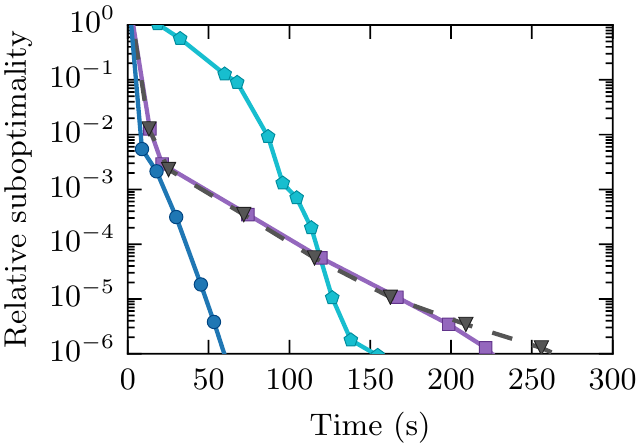} 
&
\includegraphics[width=1.90in]{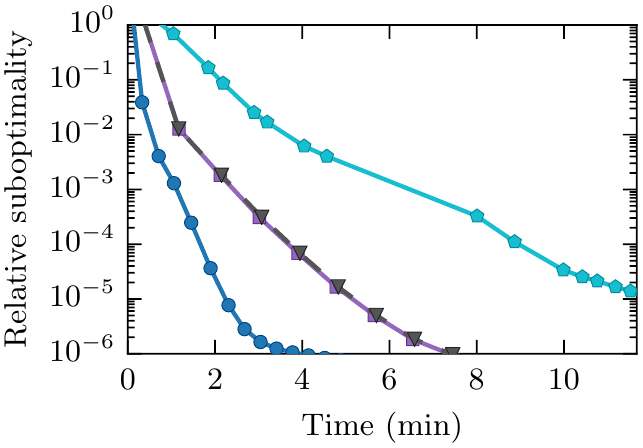} 
\\
\hline
\\[-0.99em]
\multicolumn{3}{c}{{\tt kdda $\quad m \approx 2.2\e{6},\ n \approx 8.4\e{6},\ \nnz \approx 2.8\e{8}$}} \\
\lam{0.2} & \lam{0.02} & \lam{0.002} \\
\spar{1\e{-5}}{0.01} & \spar{1\e{-3}}{0.1} & \spar{0.1}{0.3}
\\
\includegraphics[width=1.90in]{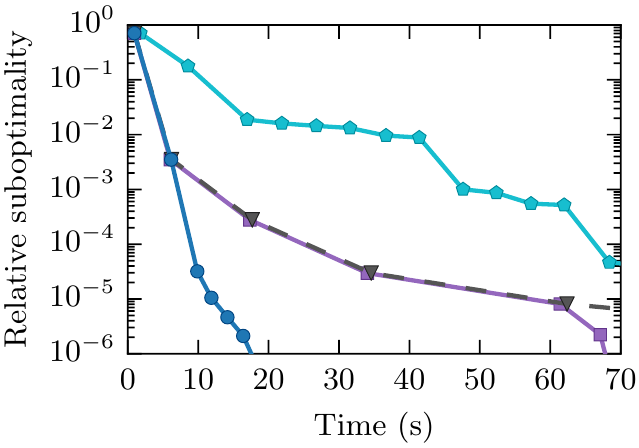} 
&
\includegraphics[width=1.90in]{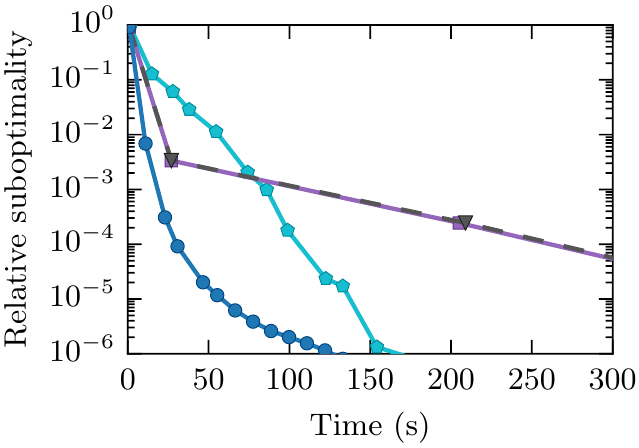} 
&
\includegraphics[width=1.90in]{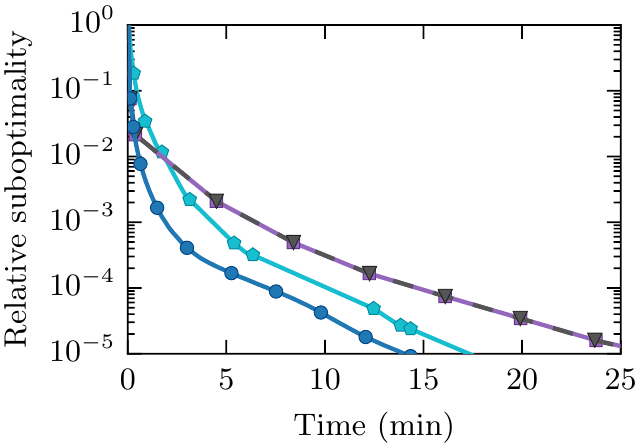} 
\\
\hline
\\[-0.99em]
\multicolumn{3}{c}{{\tt rcv1\_test} $\quad m \approx 3.4 \e{4},\ n \approx 6.8\e{5},\ \nnz \approx 5.0\e{7}$ } \\
\lam{0.2} & \lam{0.02} & \lam{0.002} \\
\spar{2\e{-3}}{0.09} & \spar{0.02}{0.3} & \spar{0.2}{0.6}
\\
\includegraphics[width=1.90in]{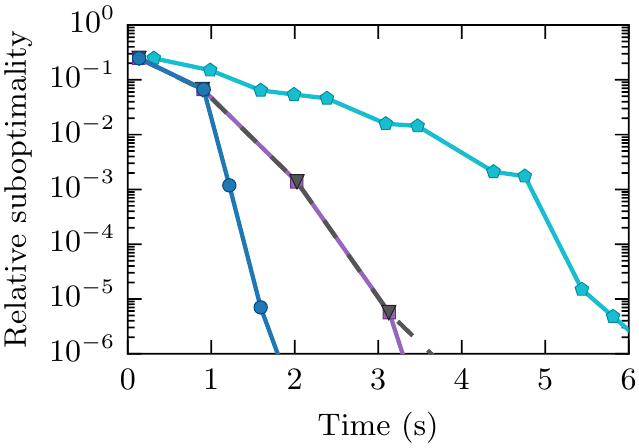} 
&
\includegraphics[width=1.90in]{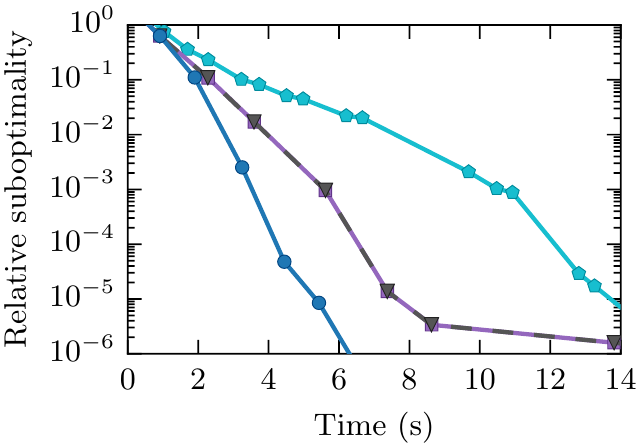} 
&
\includegraphics[width=1.90in]{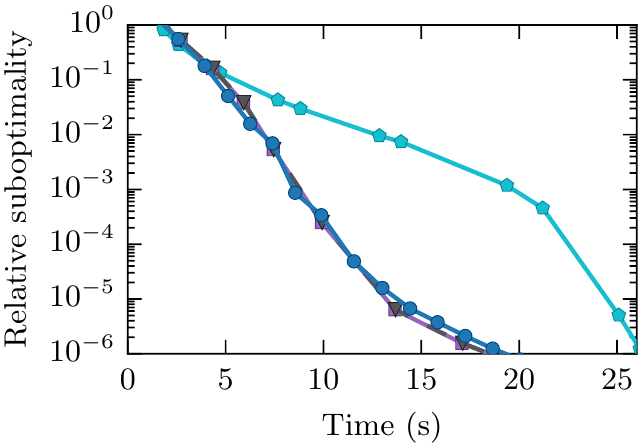} 

\\[-0.1em]
\multicolumn{3}{c}{
\includegraphics[width=4.4in]{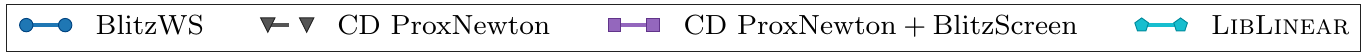} 
}
\end{tabu}
\end{small}
\caption{
\label{fig:logreg_comparisons}
\textbf{Convergence comparisons for sparse logistic regression.}
We compare \algname{} to its subproblem solver and \liblinear{}. 
\algname{} provides consistent optimization speed-ups.
}
\end{figure}

We solve each problem using three $\lambda$ values: $0.2 \lammax$, $ 0.02 \lammax$, and $ 0.002 \lammax$.
Here $\lammax$ is the smallest regularization value for which the problem's solution, $\omegab^\star$, equals $\m{0}$.
For each problem, we report the fraction of nonzero entries in $\omegab^\star$, which we denote by $s^\star$.
We also report a weighted version of this quantity, which we define as
$
s^\star_{\mathrm{W}} = \tfrac{1}{\nnz(\A)} \sum_{i \, : \, \omega_i^\star \ne 0} \nnz{(\A_i)} 
$.
Here $\nnz(\A_i)$ denotes the number of nonzero entries in column $i$ of the design matrix. 

With the exception of the spam detection problem, we solve each problem using a {\tt m4.2xlarge} Amazon EC2 instance with 2.3 GHz Intel Xeon E5-2686 processors, 46 MB cache, and 32 GB memory.
Due to memory requirements, we use a {\tt r3.2xlarge} instance with 61 GB memory and Intel Xeon E5-2670 processors for the spam detection problem.

\figref{fig:logreg_comparisons} contains the results of these comparisons.
In many cases, we see that \algname{} converges in much less time than \liblinear{}.
Considering that \liblinear{} is an efficient, established library, these results show that \algname{} is indeed a fast algorithm.

We also note that \algname{} provides significant speed-ups compared to the non-working set approach.
The amount of speed-up depends on the solution's sparsity, which is not surprising since we designed \algname{} to exploit the solution's sparsity. 


\subsubsection{Adaptation to regularization strength}


We find \algname{} outperforms \screenname{} because \algname{} adapts its $\xi_t$ progress parameter to each problem.
In contrast,
ProxNewton $+$ \screenname{} is approximately equivalent to using \algname{} with $\xi_t= 1$ for all iterations (as discussed in \secref{sec:screen_relat}).  
\figref{fig:xi_epsilon_values} contains plots of \algname{}'s chosen $\xi_t$ parameters for each logistic regression problem.
When $\lambda = 0.2 \lammax$, \algname{} uses large $\xi_t$ values, and screening (i.e., $\xi_t = 1$) also tends to perform well.
As $\lambda$ decreases,
screening becomes ineffective, while \algname{} adapts by choosing smaller values of $\xi_t$.

\begin{figure}
\centering
\begin{small}
\begin{tabu}{@{}c@{\hspace{0.075in}}c@{\hspace{0.075in}}c@{\hspace{0.075in}}c@{}}
\hspace{0.2in} {\tt webspam} & 
\hspace{0.23in} {\tt url} & 
\hspace{0.22in} {\tt kdda} & 
\hspace{0.18in} {\tt rcv1\_test} 
\\
\includegraphics[width=1.425in]{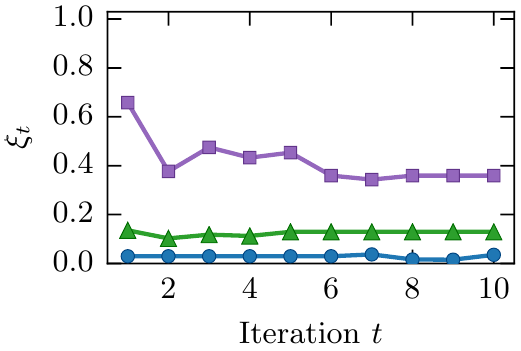} 
&
\includegraphics[width=1.425in]{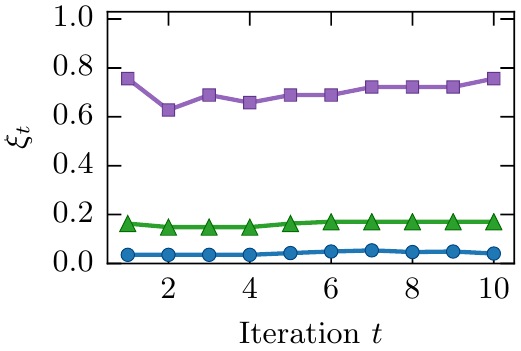} 
&
\includegraphics[width=1.425in]{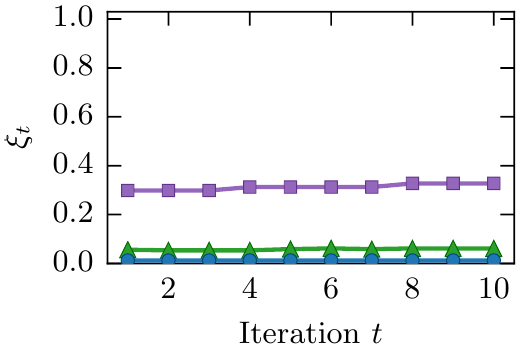} 
&
\includegraphics[width=1.425in]{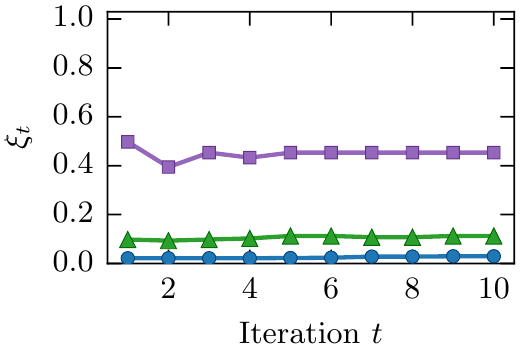} 
\\
\multicolumn{4}{c}{
\includegraphics[width=2.8in]{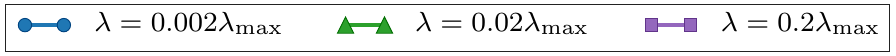} 
}
\end{tabu}
\end{small}
\caption{
\label{fig:xi_epsilon_values}
\textbf{\algname{} progress parameters for sparse logistic regression.}
Plots show $\xi_t$ values 
that \algname{} uses to produce the results in \figref{fig:logreg_comparisons}.
As regularization decreases, \algname{} adapts by decreasing $\xi_t$. 
}
\end{figure}



\subsubsection{Impact of capsule approximation}

\begin{figure}[t]
\centering
\begin{small}
\begin{tabu}{@{}c@{\hspace{0.075in}}c@{\hspace{0.075in}}c@{\hspace{0.075in}}c@{\hspace{0.075in}}c@{}}
&
\hspace{0.16in} \texttt{webspam} & 
\hspace{0.19in} \texttt{url} & 
\hspace{0.18in} \texttt{kdda} & 
\hspace{0.14in} \texttt{rcv1\_test} 
\\
\raisebox{0.545in}[0pt][0pt]{\rotatebox[origin=c]{90}{\footnotesize{$\lambda = 0.2 \lammax$}}} &
\includegraphics[width=1.350in]{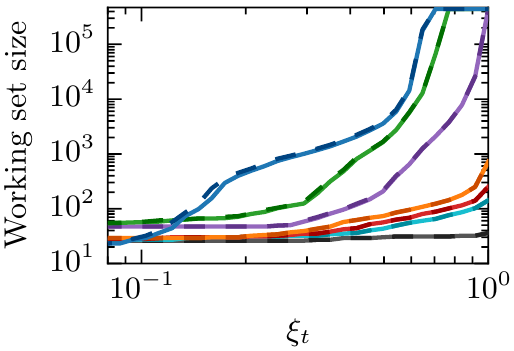} 
&
\includegraphics[width=1.350in]{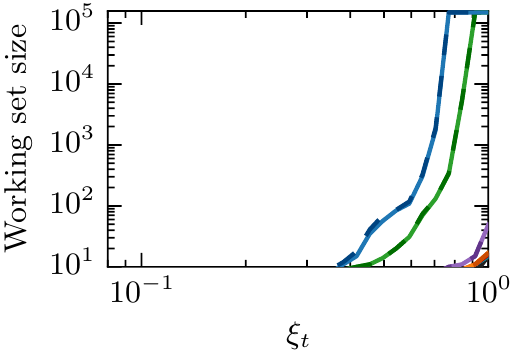} 
&
\includegraphics[width=1.350in]{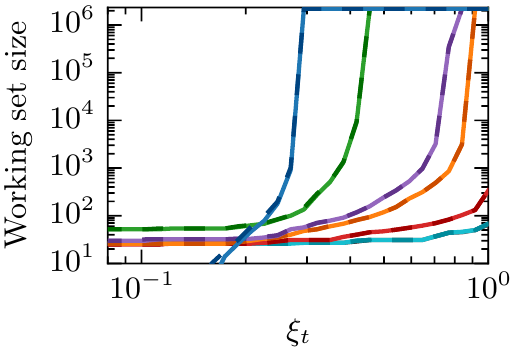} 
&
\includegraphics[width=1.350in]{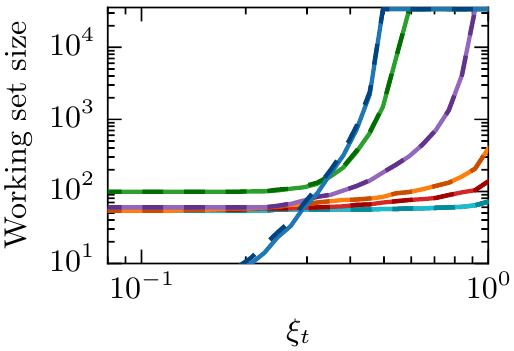} 
\\
\raisebox{0.54in}[0pt][0pt]{\rotatebox[origin=c]{90}{\footnotesize{$\lambda = 0.02 \lammax$}}} &
\includegraphics[width=1.350in]{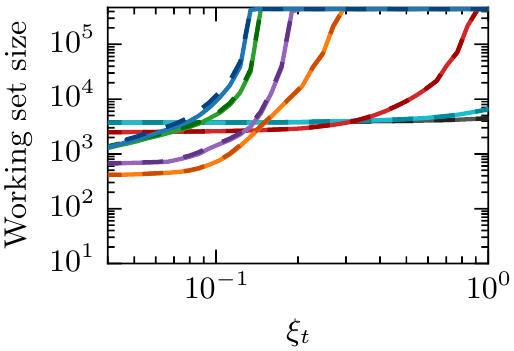} 
&
\includegraphics[width=1.350in]{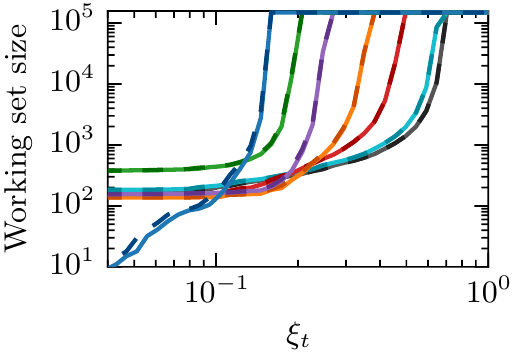} 
&
\includegraphics[width=1.350in]{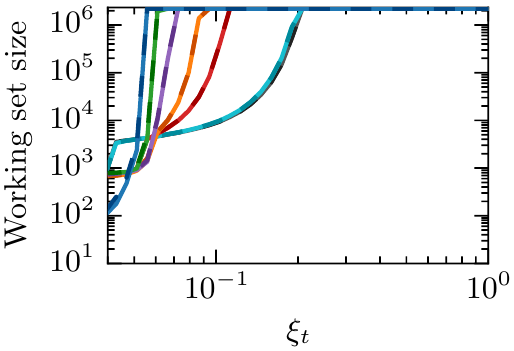} 
&
\includegraphics[width=1.350in]{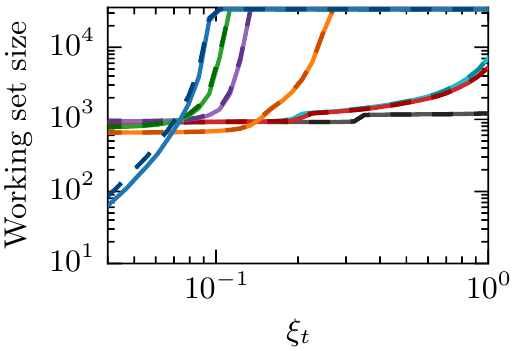} 
\\
\raisebox{0.53in}[0pt][0pt]{\rotatebox[origin=c]{90}{\footnotesize{$\lambda = 0.002 \lammax$}}} &
\includegraphics[width=1.350in]{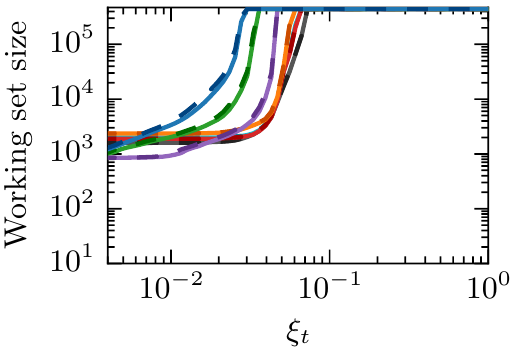} 
&
\includegraphics[width=1.350in]{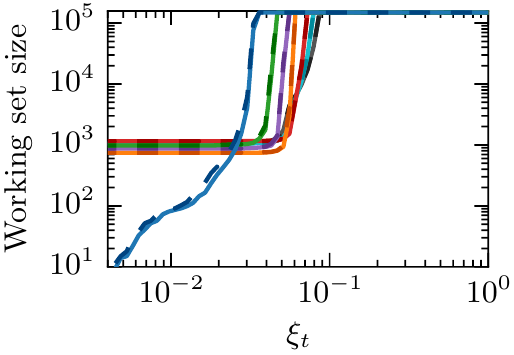} 
&
\includegraphics[width=1.350in]{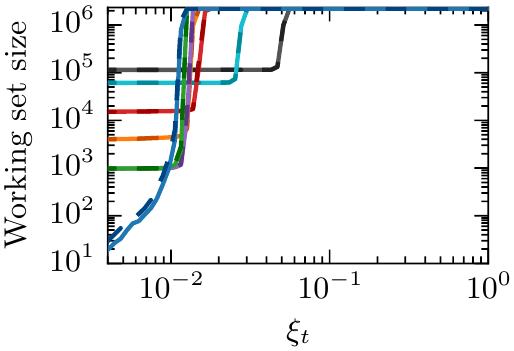} 
&
\includegraphics[width=1.350in]{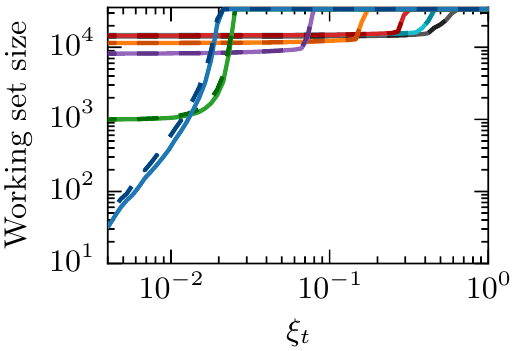} 
\\
\multicolumn{5}{c}{
\includegraphics[width=0.975\textwidth]{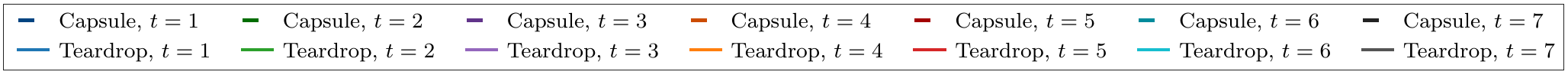} 
}
\end{tabu}
\end{small}
\caption{
\label{fig:capsule_ratio}
\textbf{Impact of \algname{}'s capsule approximation.}
We plot the working set size vs. possible choices of the $\xi_t$ progress parameter (dashed curves). 
Each of \algname{}'s first seven iterations corresponds to a different colored curve.
We also plot the working set size when using the teardrop region, $\T_\xi$, to select each working set (solid curves).
The close alignment of curves indicate the capsule approximation performs well.
}
\end{figure}


For the sparse logistic regression problems, we examine 
how \algname{}'s capsule approximation affects each working set.
We log \algname{}'s state---$\xtm$, $\ytm$, and $\Delta_{t-1}$---prior to selecting each working set.  Then offline, we compute working sets for many values of $\xi_t$.

We record working set sizes
for each problem, iteration, and $\xi_t$ value.
  In each case, we construct one working set using $\T_\xi$ and a second working set using the capsule approximation.
To calculate the working set using $\T_\xi$, we discretize the definition of this set using 200 values of $\beta$.

We measure the capsule approximation's impact by comparing the sizes of $\Wxi$ and $\Wcap$,
where the teardrop determines $\Wxi$ and the capsule determines $\Wcap$.
If $|\Wcap| \approx \abs{\Wxi}$, then \algname{}'s capsule approximation has little impact on the makeup of each working set.

\figref{fig:capsule_ratio} contains the results of this experiment.
Observe that in all cases, $|\Wcap| \approx \abs{\Wxi}$.
This suggests that $\Tcap$ is a very good approximation of $\T_\xi$.

\subsubsection{Linear SVM comparisons}

\begin{figure}
\centering
\begin{small}
\begin{tabu}{@{}c@{\hspace{0.115in}}c@{\hspace{0.115in}}c@{}}
\multicolumn{3}{c}{{\tt webspam $\quad m = 2.8\e{5},\ n \approx 4.4\e{5},\ \nnz \approx 1.0\e{9}$}} \\
\dcsvm{0.1}, \svmspar{0.06} & \dcsvm{}, \svmspar{0.06} & \dcsvm{10}, \svmspar{0.05} \\
\includegraphics[width=1.90in]{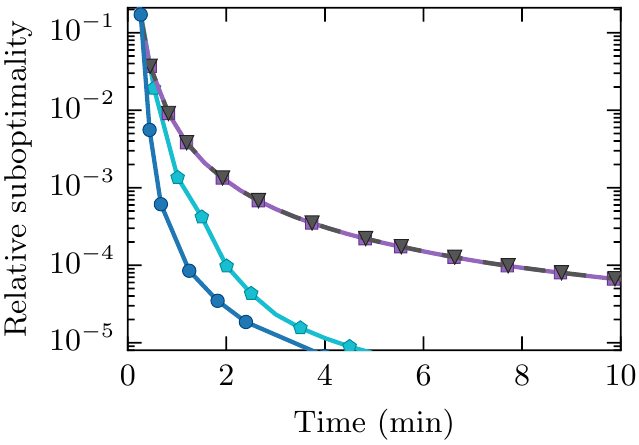} 
&
\includegraphics[width=1.90in]{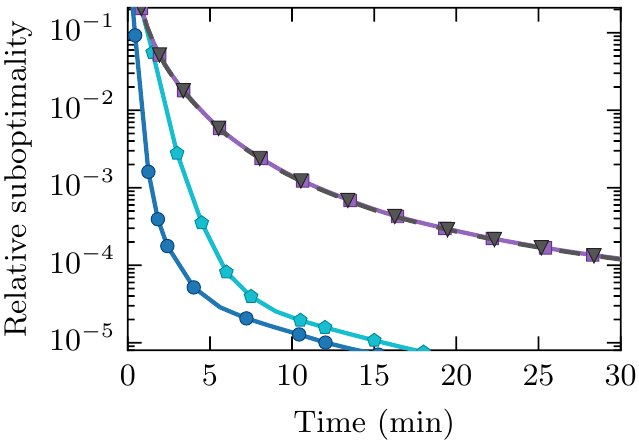} 
&
\includegraphics[width=1.90in]{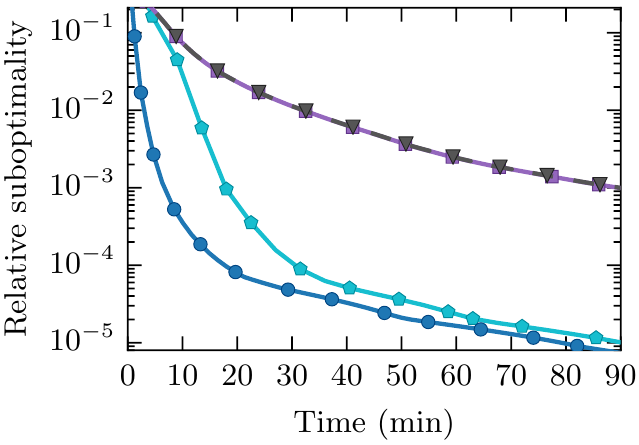} 
\\
\hline
\\[-0.99em]
\multicolumn{3}{c}{{\tt url $\quad m \approx 1.9\e{6},\ n \approx 1.5\e{5},\ \nnz \approx 2.1\e{8}$}} \\
\dcsvm{0.1}, \svmspar{0.02} & \dcsvm{}, \svmspar{0.02} & \dcsvm{10}, \svmspar{0.02} \\
\includegraphics[width=1.90in]{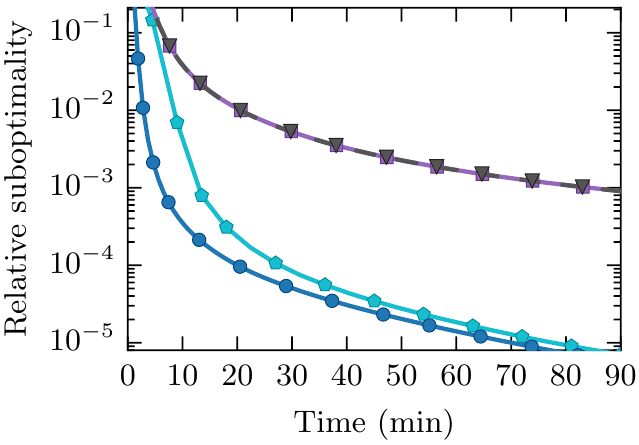} 
&
\includegraphics[width=1.90in]{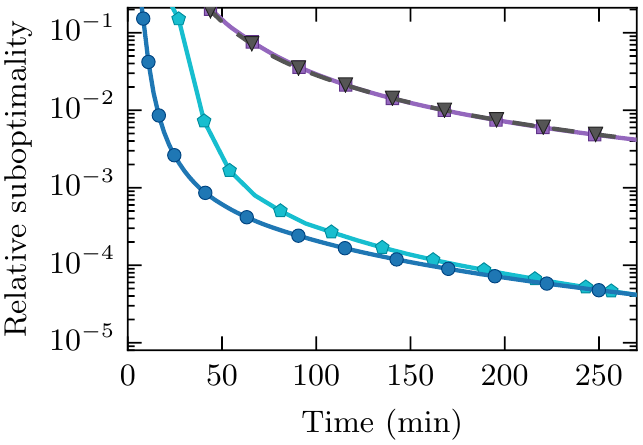} 
&
\includegraphics[width=1.90in]{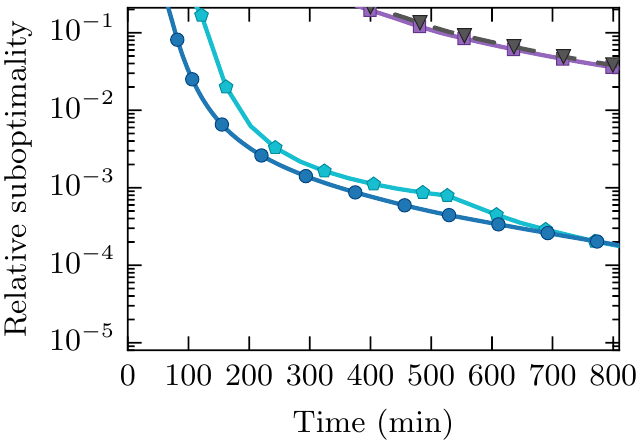} 
\\
\hline
\\[-0.99em]
\multicolumn{3}{c}{{\tt kdda $\quad m \approx 6.7\e{6},\ n \approx 2.2\e{6},\ \nnz \approx 2.2\e{8}$}} \\
\dcsvm{0.1}, \svmspar{0.1} & \dcsvm{}, \svmspar{0.1} & \dcsvm{10}, \svmspar{0.1} \\
\includegraphics[width=1.90in]{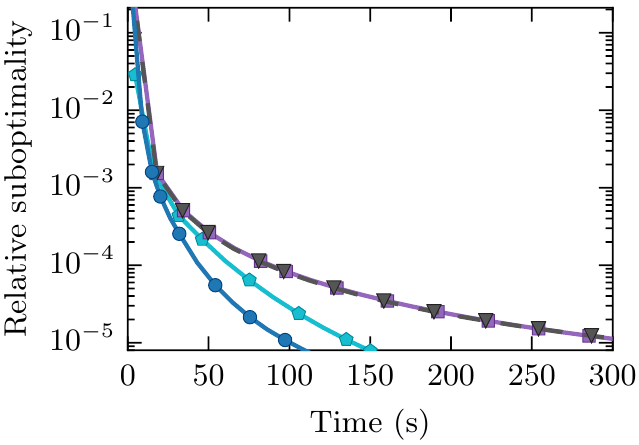} 
&
\includegraphics[width=1.90in]{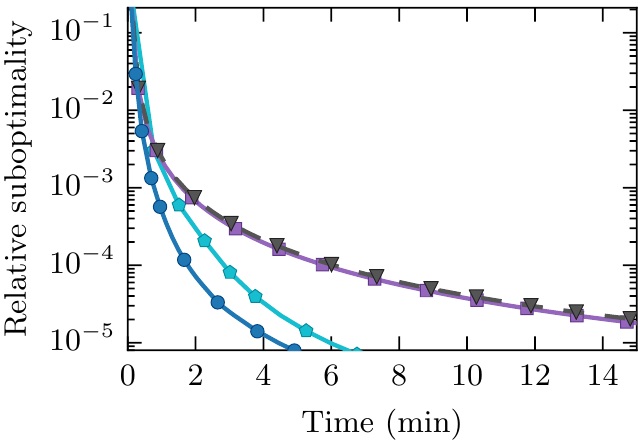} 
&
\includegraphics[width=1.90in]{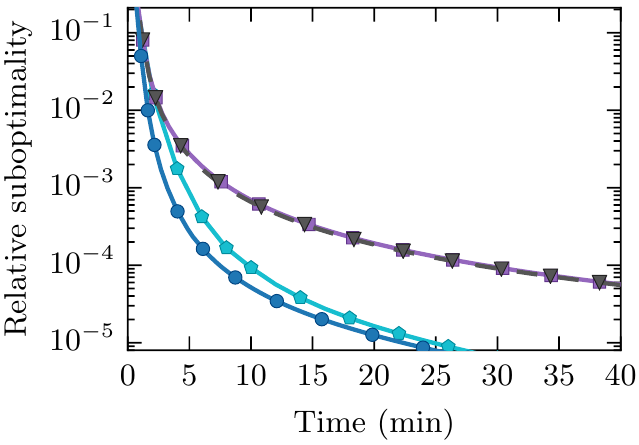} 
\\
\hline
\\[-0.99em]
\multicolumn{3}{c}{{\tt rcv1\_test} $\quad m \approx 5.4 \e{5},\ n \approx 3.3\e{4},\ \nnz \approx 4.0\e{7}$ } \\
\dcsvm{0.1}, \svmspar{0.03} & \dcsvm{}, \svmspar{0.04} & \dcsvm{10}, \svmspar{0.05} \\
\includegraphics[width=1.90in]{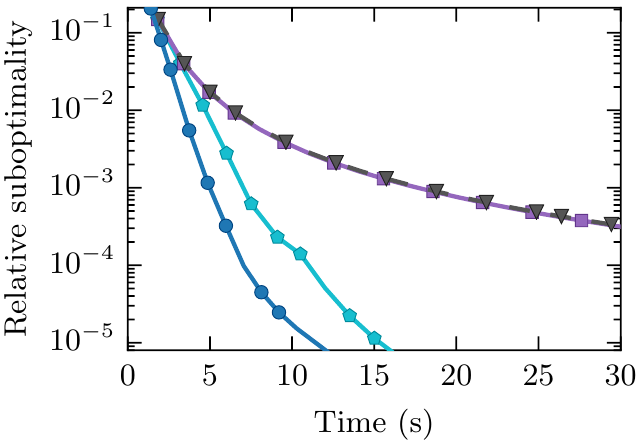} 
&
\includegraphics[width=1.90in]{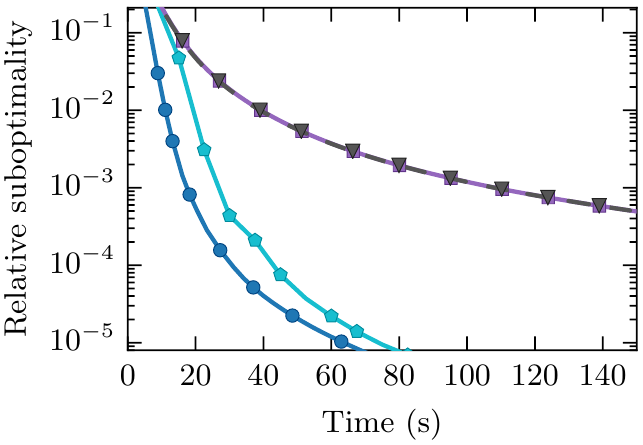} 
&
\includegraphics[width=1.90in]{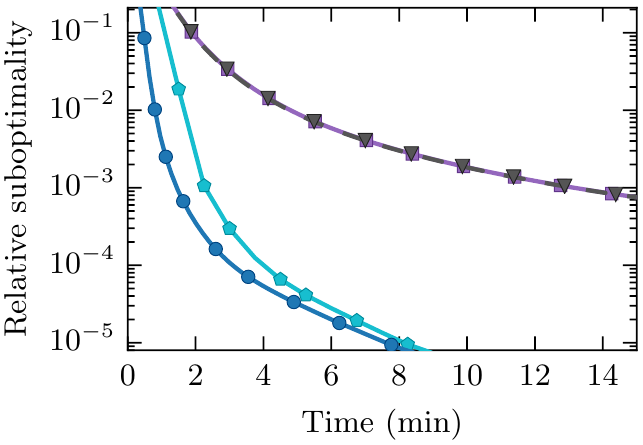} 
\\
\multicolumn{3}{c}{
\includegraphics[width=4.4in]{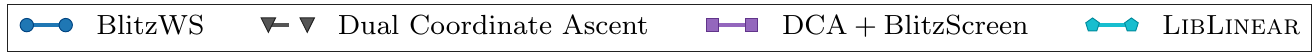} 
}
\end{tabu}
\end{small}
\caption{
\label{fig:svm_comparisons}
\textbf{Convergence comparisons for linear SVMs.}
\algname{} also leads to convergence time improvements when training linear SVMs. 
For more difficult problems, plot markers represent to multiple iterations.
}
\end{figure}

We also compare \algname{} with \liblinear{} for training linear SVMs.
The \algname{} implementation is the same as described in \secref{sec:svm_scale_results}. \liblinear{} also uses a DCA-based algorithm.

For these comparisons, we use the same data sets, compilation settings, and hardware as we used in \secref{sec:logreg_comparisons}.
For each data set, we compute a practical value of $C$ using five-fold cross validation, which we denote by $\Csvm$.
We compare using three values of $C$:
$0.1 \Csvm{}$, $\Csvm$, and $10 \Csvm$.
We also report the solution's ``sparsity,'' denoted $s^\star$, which we define as the fraction of training examples that are unbounded support vectors at the solution. 

\figref{fig:svm_comparisons} includes results from these comparisons.
\algname{} consistently provides speed-up compared to \liblinear{}, often during early iterations.


\section{Discussion \label{sec:discussion}}

We proposed \algname{}, a principled yet practical working set algorithm.
Unlike prior algorithms, \algname{} selects subproblems in a way that maximizes guaranteed progress.
We also analyzed the consequences of solving \algname{}'s subproblems approximately, and we applied this understanding to adapt algorithmic parameters as iterations progress.

In practice, \algname{} is indeed a fast algorithm.
Compared to the popular \liblinear{} library, \algname{} achieves very competitive convergence times. 
Another appealing quality of \algname{} is its capability of solving a variety of problems. This includes constrained problems, sparse problems, and piecewise loss problems.
This flexibility results from \secref{sec:formulation}'s novel piecewise problem formulation.  
We find this formulation is a useful way of thinking about sparsity and related structure in optimization.

We also proposed a state-of-the-art safe screening test called \screenname{}.
Unlike prior screening tests, \screenname{} applies to a large class of problems.
Because of its relatively small safe region, \screenname{} also simplifies the objective by a greater amount.
Unfortunately, we found that in many practical scenarios, \screenname{} had little impact on the algorithm's progress.  
While disappointing,
we think this observation is an important contribution.

Exploiting piecewise structure can lead to large optimization speed-ups.
Our analysis of \algname{} and \screenname{} provides a foundation for exploiting this structure in a principled way.  
We hope these contributions may serve as a starting point for future approaches to scalable optimization.




\section*{Acknowledgments}

Research grants that supported this work include PECASE N00014-13-1-0023, NSF IIS-1258741, and the TerraSwarm Research Center 00008169.
The Carnegie Learning and DataShop originally provided the educational performance data set ({\tt kdda}).

\bibliography{references}

\renewcommand{\theHsection}{A\arabic{section}}
\appendix

\section{Proof of \lemref{lem:blitz_lasso_lemma} \label{app:proof_blitz_lasso_lemma}}


\begin{proof}
We start with the definition of $\y_t$: 
\begin{align}
\Delta_t &= \fld(\y_t) - \fld(\x_t) \nonumber \\ 
&= \fld(\alpha_t \x_t + (1- \alpha_t) \y_{t-1}) - \fld(\x_t) \nonumber \\
&= (1 - \alpha_t) \left[ \Delta_{t-1} - \oh \alpha_t \norm{\x_t - \y_{t-1}}^2 - [\fld(\x_t) -\fld(\x_{t-1})] \right] \, . \label{eqn:xx7x}
\end{align}
Since $\fld$ is $1$-strongly convex, 
\begin{align}
& \fld(\x_t) \geq \fld(\x_{t-1}) + \ip{\nabla \fld(\x_{t-1}), \x_t - \x_{t-1}} + \oh \norm{\x_t - \x_{t-1}}^2 \nonumber \\
 \Rightarrow \ & \fld(\x_t) - \fld(\x_{t-1}) \geq \oh \norm{\x_t - \x_{t-1}}^2 \, . \label{eqn:sjasdf}
\end{align}
Above, we have used the fact that $\ip{ \nabla \fld(\x_{t-1}), \x_t - \x_{t-1}} \geq 0$, which must be true because $\fld(\xtm) \leq \fld(\xt)$.
Combining \eqnref{eqn:sjasdf} with \eqnref{eqn:xx7x}, we have
\begin{equation}
\Delta_t \leq (1 - \alpha_t) \left[ \Delta_{t-1} - \oh \alpha_t \norm{\x_t - \y_{t-1}}^2 - \oh \norm{\x_t - \x_{t-1}}^2 \right] \, . \label{eqn:m18z}
\end{equation}
Next, we use the algebraic fact
\begin{equation}
\alpha_t \norm{\x_t - \y_{t-1}}^2 + \norm{\x_t - \x_{t-1}}^2 = (1 + \alpha_t) \norm{\x_t - \tfrac{\x_{t-1} + \alpha_t \y_{t-1}}{1 + \alpha_t}}^2 + \tfrac{\alpha_t}{1 + \alpha_t} \norm{\x_{t-1} - \y_{t-1}}^2 \, .
\label{eqn:z8j3n}
\end{equation}
To simplify notation, we define $\dtm = \norm{\xtm - \ytm}$.
Applying the assumption that $\alpha_t > 0$, we can write $\x_t = \tfrac{\y_t - (1 - \alpha_t) \y_{t-1}}{\alpha_t}$.
Substituting this equality into \eqnref{eqn:z8j3n}, we have
\begin{align}
\alpha_t \norm{\x_t - \y_{t-1}}^2 + \norm{\x_t - \x_{t-1}}^2 &= (1 + \alpha_t) \norm{\tfrac{\y_t - (1 - \alpha_t) \y_{t-1}}{\alpha_t} - \tfrac{\x_{t-1} + \alpha_t \y_{t-1}}{1 + \alpha_t}}^2 + \tfrac{\alpha_t}{1 + \alpha_t} \dtm^2 \nonumber \\
&= \tfrac{1 + \alpha_t}{\alpha_t^2} \norm{\y_t - \tfrac{\alpha_t \x_{t-1} + \y_{t-1}}{1 + \alpha_t}}^2 + \tfrac{\alpha_t}{1 + \alpha_t} \dtm^2 \, . \label{eqn:x8z0s8f}
\end{align}
Inserting \eqnref{eqn:x8z0s8f} into \eqnref{eqn:m18z}, we see that
\[
\Delta_t \leq (1 - \alpha_t) \left[ \Delta_{t-1} - \tfrac{1 + \alpha_t}{\alpha_t^2} \oh \norm{\y_t - \tfrac{\alpha_t \x_{t-1} + \y_{t-1}}{1 + \alpha_t}}^2 - \tfrac{\alpha_t}{1 + \alpha_t} \oh \dtm^2\right] \, .
\]
Using the definition $\betat = \alpha_t (1 + \alpha_t)^{-1}$, we can plug in $\alpha_t = \betat (1 - \betat)^{-1}$ to complete the proof.
\end{proof}

\section{Proof of \lemref{lem:gap_progress_betat} \label{app:proof_blitz_lasso_lemma2}}


\begin{proof}
If $\beta_t = \oh$, we have $\Delta_t = 0$ by \lemref{lem:blitz_lasso_lemma}.
The bound holds in this case because $\Delta_{t-1}(1 - \xi_t) \geq 0$.
For the remainder of the proof, we assume that $\betat < \oh$, which implies that $\alpha_t < 1$.

Since $\alpha_t < 1$, there exists a constraint $\il \notin \W_t$ for which 
$\ip{\a_{\il}, \y_t} = b_i$.
Since $\il \notin \W_t$, we must have $\B_\xi(\betat) \cap \{ \x \, : \, \ip{\a_{\il}, \x} \geq b_i \} = \emptyset$.  Thus, $\y_t \notin \B_\xi(\betat)$.
Since $\B_\xi(\betat)$ is a ball with center $\betat \xtm + (1 - \betat) \ytm$ and radius $\tau_\xi(\betat)$, this implies that
\[
\norm{\y_t - \betat \x_{t-1} - (1 - \betat) \y_{t-1}} \geq \tau_\xi(\betat) \, .
\]
To simplify notation, we define $\dtm = \norm{\xtm - \ytm}$.
Combining with \lemref{lem:blitz_lasso_lemma} and plugging in the definition of $\tau_\xi(\betat)$, we have
\begin{align} \label{eqn:xn1nsadf8}
\Delta_t& \leq \tfrac{1 - 2\betat}{1 - \betat} \left[ \Delta_{t-1} - \tfrac{1 - \betat}{\betat^2} \oh \tau_\xi(\betat)^2 - \betat \oh \dtm^2 \right] \\
& = 
 \tfrac{1 - 2\betat}{1 - \betat} \left[ \Delta_{t-1} - (1 - \betat) \Delta_{t-1} \left[ 1 + \tfrac{\betat}{1 - \betat} \left(1 - \tfrac{\dtm^2}{2 \Delta_{t-1}} \right) - \tfrac{1 - \xi_t}{1 - 2 \betat} \right]_+  - \betat \oh \dtm^2 \right] \nonumber \\
 &\leq \tfrac{1 - 2 \betat}{1 - \betat} \left[ \Delta_{t-1} \tfrac{(1 - \xi_t)(1 - \betat)}{1 - 2 \betat} \right]
 \nonumber \\
 &= (1 - \xi_t) \Delta_{t-1} \nonumber \, .
\end{align}
\end{proof}

\section{Proof of \thmref{thm:blitz_lasso_theorem}} \label{app:proof_blitz_lasso_theorem}

The proof can be divided into three cases: $\betat=\toh$, $\betat \in (0, \toh)$, and $\betat = 0$.
Here we present the proof of \thmref{thm:blitz_lasso_theorem}
for only the main case that $\betat \in (0, \toh)$,
and we rely on the proof in \fullappref{app:proof_approximate} (a more general proof) for the edge cases.

\begin{proof}[Partial proof]

Assuming $\betat < \toh$ implies $\alpha_t < 1$.
This implies there exists a $\il \notin \W_t$
such that
$\ip{\a_{\il}, \y_t} = b_i$.
Since $\il \notin \W_t$, we must also have $\T_\xi \cap \{ \x \, : \, \ip{\a_{\il}, \x} \geq b_i \} = \emptyset$, which implies $\y_t \notin \T_\xi$.
Since $\y_t \notin \T_\xi$, then for all $\beta \in (0, 1/2)$, we have $\y_t \notin \B_\xi(\beta)$.  Applying the definition of $\B_\xi(\beta)$, we have
$
\norm{\y_t - \beta \x_{t-1} - (1 - \betat) \y_{t-1}} \geq \tau_\xi(\beta)
$
for all $\beta \in (0, 1/2)$.
Thus,
\begin{equation}  \label{eqn:xckvjsdf8}
\norm{\yt - \betat \xtm - (1 - \betat) \ytm} \geq \tau_\xi(\betat) \, .
\end{equation}
At this point, we can combine \eqnref{eqn:xckvjsdf8} with \lemref{lem:blitz_lasso_lemma} to achieve the desired bound.
The result follows from the same steps as \lemref{lem:gap_progress_betat}'s proof, starting at \eqnref{eqn:xn1nsadf8}.

\end{proof}

\section{Proof of \thmref{thm:quasiconcave} \label{app:proof_quasiconcave}}


\begin{proof}
Plugging in definitions of $q_s$ and $\tau_\beta$, we have
\[
q_s(\beta) = s \beta \norm{\xtm - \ytm} + \beta \sqrt{2 \Delta_{t-1}} \left[ 1 + \tfrac{\beta}{1 - \beta} \left(1 - \tfrac{\norm{\xtm - \ytm}^2}{2 \Delta_{t-1}} \right) - \tfrac{1 - \xi_t}{1 - 2 \beta} \right]_+^{1/2} \, .
\]
To simplify notation, define $\dtm = \norm{\xtm - \ytm}$.
Taking the log and substituting $\beta = \theta(1 + \theta)^{-1}$, we have
\[
\tilde{q}_s(\theta) = \log\left( \tfrac{\theta}{1 + \theta} \right) + \log \left( s \dtm + \sqrt{2 \Delta_{t-1}} \left[ 1 + \theta \left(1 - \tfrac{\dtm^2}{2 \Delta_{t-1}} \right) - (1 - \xi) \tfrac{1 + \theta}{1 - \theta} \right]_+^{1/2} \right) \, .
\]

Since $\tfrac{\theta}{1 + \theta}$ is concave, $\theta$ is concave, $-\tfrac{1+\theta}{1 - \theta}$ is concave, $[\cdot]^{1/2}$ is concave and nondecreasing, and $\log(\cdot)$ is concave and nondecreasing, we see that $\tilde{q}_s(\theta)$ is log-concave on $\{ \theta \, : \, \tilde{q}_s(\theta) > 0 \}$. (Here we have also used the facts that $1 - \tfrac{\dtm^2}{2 \Delta_{t-1}} \geq 0$ and $(1 - \xi) \geq 0$.)
Since all log-concave functions are quasiconcave and quasiconcavity is preserved under composition with the increasing function $\theta = \beta (1 - \beta)^{-1}$ (on the domain $0 < \beta \leq \toh$), it must be the case that $q_s(\beta)$ is quasiconcave.

\end{proof}

\section{Proof of \thmref{thm:blitz_lasso_theorem2} \label{app:proof_convergence_bound}}


\begin{proof}
In \secref{sec:proof_capsule_subset}, we prove that $\T_\xi \subseteq \Tcap$.
Since $\T_\xi \subseteq \Tcap$, we have 
\[
\T_\xi \cap \{ \x \, : \, \abs{\ip{\A_i, \x}} \geq \lambda \}  \ne \emptyset \ \Rightarrow  \
\Tcap \cap \{ \x \, : \, \abs{\ip{\A_i, \x}} \geq \lambda \} \ne \emptyset  \, .
\]

Thus, during iteration $t$ of \algref{alg:blitz_lasso}, condition (i) for \thmref{thm:blitz_lasso_theorem} is satisfied.
That is, for any $i \in [m]$, if $\T_\xi \cap \{ \x \, : \, \abs{\ip{\A_i, \x}} \geq \lambda \}  \ne \emptyset$, then $i \in \W_t$.
Since condition (ii) of the theorem is satisfied by our definition of \algref{alg:blitz_lasso}, we have by \thmref{thm:blitz_lasso_theorem}, $\Delta_t \leq (1 - \xi_t) \Delta_{t-1}$ for all $t \geq 1$.
The theorem then follows from induction.
\end{proof}

\section{Proof of \thmref{thm:convergence_rate} and \thmref{thm:approx_convergence_rate} \label{app:proof_approximate}}

Since \thmref{thm:convergence_rate} is a special case of \thmref{thm:approx_convergence_rate}, we prove both theorems by proving \thmref{thm:approx_convergence_rate}.
To recover \thmref{thm:convergence_rate}, we define $\epsilon_t = 0$, $\ftlb = f_t$, and $\x_t = \z_t = \mathrm{argmin}_\x\ f_t(\x)$.


\begin{proof}
We will prove that for all $t > 0$, we have
\begin{equation}
\Delta_t \leq (1 - (1 - \epsilon_t) \xi_t) \Delta_{t-1} \, . \label{eqn:itr_bound}
\end{equation}


To prove \eqnref{eqn:itr_bound} for any $t > 0$, let us define the scalar
\[
\thetat  = \mathrm{max}\, \{ \theta \in [0, 1] \, : \, \theta \z_t + (1 - \theta) \ytm \in \cl{\Tcap} \} 
\]
and point
$\ytp = \thetat \zt + (1 - \thetat) \ytm$.
Above, $\cl{\cdot}$ denotes the closure of a set.  Note $\ytm \in \cl{\Tcap}$.


Since $\yt$ minimizes $f$ along $[\ytm, \zt]$, it follows that $f(\yt) \leq f(\ytp)$.
Due to (C1), we have that $f_t(\x) = f(\x)$ for all $\x \in \Tcap$.  Since $\ytp \in \cl{\Tcap}$ and $f$ is convex lower semicontinuous, it follows that $f_t(\ytp) = f(\ytp)$.
Beginning with the definition of $\Delta_t$, we can write
\[
\Delta_t = f(\yt) - \ftlb(\xt) 
\leq f(\ytp) - \ftlb(\xt) \\
= f_t(\ytp) - \ftlb(\xt) \, . 
\]

We divide the remainder of the proof into three cases.

\paragraph{Case 1: $\thetat = 1$}

In this case, $\ytp = \zt$, and it follows that
\[
\Delta_t \leq f_t(\ytp) - \ftlb(\xt) 
= f_t(\zt) - \ftlb(\xt) 
\leq \epsilon_t \Delta_{t-1} 
\leq (1 - (1 - \epsilon_t) \xi_t) \Delta_{t-1}  \, .
\]
Above, the second-to-last step results from termination conditions for subproblem $t$, while the final step is true because $\xi_t \in (0, 1]$.

\paragraph{Case 2: $\thetat \in (0, 1)$}




Applying the definition of $\ytp$, the fact that $f_t$ is 1-strongly convex, the fact that $f_t(\x) \leq f(\x)$ for all $\x$, and the definition of $\Delta_{t-1}$, we have
\begin{align}
\Delta_t & \leq f_t(\ytp) - \ftlb(\xt) \nonumber \\
&= f_t(\thetat \zt + (1 - \thetat) \ytm) - \ftlb(\xt) \nonumber \\
&\leq \thetat f_t(\zt) + (1 - \thetat) f_t(\ytm) - \oh (1 - \thetat) \thetat \norm{\zt - \ytm}^2  - \ftlb(\xt) \nonumber \\
&\leq \thetat f_t(\zt) + (1 - \thetat) f(\ytm) - \oh (1 - \thetat) \thetat \norm{\zt - \ytm}^2 - \ftlb(\xt) \nonumber \\
&= (1 - \thetat) \Delta_{t-1} - (1 - \thetat) [\ftlb(\xt) - \ftmlb(\xtm) ] + \nonumber \nonumber \\ 
& \hskip 1.455in {\thetat} [ f_t(\zt) - \ftlb(\xt) ]- \oh (1 - \thetat) \thetat \norm{\zt - \ytm}^2  \nonumber \, .
\end{align}

From termination conditions for subproblem $t$, we have that $f_t(\zt) - \ftlb(\xt) \leq \epsilon_t \Delta_{t-1}$ and also that $\ftlb(\xt) - \ftmlb(\xtm) \geq (1 - \epsilon_t) \oh \norm{\zt - \xtm}^2$.  Thus,
\begin{align}
\Delta_t &\leq (1 - \thetat) \Delta_{t-1} - (1 - \thetat) (1 - \epsilon_t) \oh \norm{\zt - \xtm}^2 + \thetat \epsilon_t \Delta_{t-1} - \oh (1 - \thetat) \thetat \norm{\zt - \ytm}^2  \nonumber \\
&\leq \Delta_{t-1} - (1 - \epsilon_t) \left[ \thetat \Delta_{t-1} + \oh (1 - \thetat) \left( \thetat \norm{\zt - \ytm}^2 + \norm{\zt - \xtm}^2 \right) \right] \, . \label{eqn:bbz}
\end{align}
We next use the fact
\begin{equation}
\thetat \norm{\zt - \ytm}^2 +
\norm{\zt - \xtm}^2  
= (1 + \thetat) \norm{\zt - \tfrac{\xtm + \thetat \ytm}{1 + \thetat}}^2 + \tfrac{\thetat}{1 + \thetat} \norm{\xtm - \ytm}^2 \, . \label{eqn:c090}
\end{equation}
To simplify notation slightly, we define $\dtm = \norm{\xtm - \ytm}$.
Applying the assumption that $\thetat  > 0$, we can write $\zt = \tfrac{\ytp - (1 - \thetat) \ytm}{\thetat}$.  Substituting this equality into \eqnref{eqn:c090}, we have
\begin{align}
\thetat \norm{\zt - \ytm}^2 +
\norm{\zt - \xtm}^2  &= (1 + \thetat) \norm{\tfrac{\ytp - (1 - \thetat) \ytm}{\thetat} - \tfrac{\xtm + \thetat \ytm}{1 + \thetat}}^2 + \tfrac{\thetat}{1 + \thetat} \dtm^2 \nonumber \\
 &= \tfrac{1 + \thetat}{\thetat^2} \norm{\ytp - \tfrac{\thetat \xtm + \ytm}{1 + \thetat}}^2 + \tfrac{\thetat}{1 + \thetat} \dtm^2 \, . \label{eqn:hxjz}
\end{align}
Inserting \eqnref{eqn:hxjz} into \eqnref{eqn:bbz}, it follows that
\begin{equation}
\Delta_t \leq \Delta_{t-1} - (1 - \epsilon_t) \left[ \thetat \Delta_{t-1} + \tfrac{1 - \thetat^2}{\thetat^2} \oh \norm{\ytp - \tfrac{\thetat \xtm + \ytm}{1 + \thetat}}^2 + \tfrac{\thetat (1 - \thetat)}{1 + \thetat} \oh \dtm^2 \right] \, . \label{eqn:papap}
\end{equation}

Let us denote the quantity within the brackets above by $P$ (for ``progress'' toward convergence).
 Also, let us define $\betat = \thetat (1 + \thetat)^{-1}$, which implies $\thetat = \betat (1 - \betat)^{-1}$.  We see that
\begin{align}
  P &= \thetat \Delta_{t-1} + \tfrac{1 - \thetat^2}{ \thetat^2} \oh \norm{\ytp - \tfrac{\thetat \xtm + \ytm}{1 + \thetat}}^2 + \tfrac{\thetat (1 - \thetat)}{1 + \thetat} \oh \dtm^2  \nonumber \\
&=  \tfrac{\betat}{1 - \betat} \Delta_{t-1} +  \tfrac{1 - 2 \betat}{\betat^2} \oh \norm{\ytp - \betat \xtm - (1 - \betat) \ytm}^2 + \tfrac{\betat (1 - 2 \betat)}{1- \betat} \oh \dtm^2 \nonumber \\
&=  \tfrac{1 - 2 \betat}{1 - \betat} \left[ \tfrac{\betat}{1 - 2 \betat} \Delta_{t-1} +  \tfrac{1 -  \betat}{\betat^2} \oh \norm{\ytp - \betat \xtm - (1 - \betat) \ytm}^2 + \betat  \oh \dtm^2 \right] \, . \label{eqn:xuau}
\end{align}

Since $\thetat < 1$, by definition of $\thetat$ and $\ytp$, we must have $\ytp \in \bd{\Tcap}$.
Since $\Tcap$ is an open set and $\ytp \in \bd{\Tcap}$, we have $\ytp \notin \Tcap$.
Furthermore, since $\Tcap \supseteq \T_\xi \supseteq \B_\xi(\betat)$, it follows that $\ytp \notin \B_\xi(\betat)$.  By definition of $\B_\xi(\betat)$, we have
\[
\norm{\ytp - \betat \xtm - (1 - \betat) \ytm} \geq \tau_\xi(\betat) \, .
\]
Plugging in the definition of $\tau_\xi(\betat)$, it follows that
\begin{align}
\tfrac{1 - \betat}{\betat^2} \oh \norm{\ytp - \betat \xtm - (1 - \betat) \ytm}^2 &\geq (1 - \betat) \Delta_{t-1} \left[ 1 + \tfrac{\betat}{1 - \betat} \left( 1 - \tfrac{\dtm^2}{2 \Delta_{t-1}} \right) - \tfrac{1 - \xi_t}{1 - 2 \betat} \right]_+ \nonumber \\
&=  \left[ \Delta_{t-1} - \betat \oh \dtm^2 -  \tfrac{1 - \betat}{1 - 2 \betat} (1 - \xi_t) \Delta_{t-1} \right]_+ \nonumber \\
&=  \left[\tfrac{1 - \betat}{1 - 2 \betat} \xi_t \Delta_{t-1} - \tfrac{\betat}{1 - 2\betat} \Delta_{t-1} - 
\betat \oh \dtm^2 \right]_+ \nonumber
\end{align}

Plugging this result into \eqnref{eqn:xuau}, we have
\begin{align}
P &\geq \tfrac{1 - 2 \betat}{1 - \betat} \left[ \tfrac{\betat}{1 - 2 \betat} \Delta_{t-1} +
  \left[\tfrac{1 - \betat}{1 - 2 \betat} \xi_t \Delta_{t-1} - \tfrac{\betat}{1 - 2\betat} \Delta_{t-1} - 
\tfrac{\betat}2 \dtm^2 \right]_+
+  \tfrac{\betat}2 \dtm^2 \right] \nonumber \\
 &\geq \tfrac{1 - 2 \betat}{1 - \betat} \left[ \tfrac{1 - \betat}{1 - 2 \betat} \xi_t \Delta_{t-1} \right] \nonumber \\
 &= \xi_t \Delta_{t-1} \, . \label{eqn:almost}
\end{align}
By combining \eqnref{eqn:almost} with \eqnref{eqn:papap}, we obtain the desired bound.

\paragraph{Case 3: $\theta_t = 0$}
Using the definition of $\ytp$, we have
\begin{align}
\Delta_t &\leq f_t(\ytp) - \ftlb(\xt) \nonumber \\
        &= f_t(\ytm) - \ftlb(\xt)  \nonumber\\
        &\leq f(\ytm) - \ftlb(\xt) \nonumber \\
        &= \Delta_{t-1} - \left[ \ftlb(\xt) - \ftmlb(\xtm) \right]  \nonumber\\
        &\leq \Delta_{t-1} - (1 - \epsilon_t) \oh \norm{\zt - \xtm}^2 \, . \label{eqn:oof}
\end{align}
Above, the last step follows from termination conditions for subproblem $t$.

Since $\theta_t = 0$, it follows from the definition of $\thetat$ that $\beta_t \zt + (1 - \betat) \ytm \notin \cl{\Tcap}$ for all $\beta \in (0, 1/2)$.  Since $\Tcap \supseteq \T_\xi \supseteq \B_\xi(\beta)$ for all $\beta \in (0, 1/2)$, then
$\beta \zt + (1 - \beta) \ytm \notin \cl{B_\xi(\beta)}$ for all $\beta \in (0, 1/2)$.
By definition of $\B_\xi(\beta)$, this means that
\begin{align}
& \norm{\beta \zt + (1 - \beta) \ytm - \beta \xtm - (1 - \beta) \ytm} > \tau_\xi(\beta)  \nonumber \\
& \Rightarrow \quad  \norm{\zt - \xtm} > \frac{\tau_\xi(\beta)}{\beta} \, . \nonumber
\end{align}
This implies that
\begin{equation}
\norm{\zt - \xtm} \geq \lim{\beta \rightarrow 0^+} \frac{\tau_\xi(\beta)}{\beta} 
= \tfrac{d}{d \beta} \tau_\xi(\beta) \Bigr|_{\beta = 0} 
= \sqrt{2 \Delta_{t-1} \xi_t} \, . \label{eqn:wash}
\end{equation}
By combining \eqnref{eqn:wash} with \eqnref{eqn:oof}, we obtain the result.

\end{proof}

\section{Proof of \thmref{thm:screening} \label{app:proof_screening}}


\begin{proof}


We need to show that $\xstar = \mathrm{argmin}_\x\,f(\x) = \mathrm{argmin}_\x\,\fs(\x) = \xs$.
First note that since $\f_0$ is a $1$-strongly convex lower bound on $f$, and $\x_0$ minimizes $f_0$, it follows that 
\begin{equation} 
 f(\xstar)  \geq f_0(\x_0) + \oh \norm{\xstar - \x_0}^2 \, . \label{eqn:fabi}
\end{equation}
Since $f$ is 1-strongly convex, and $\xstar$ minimizes $f$, we have
\begin{equation} \label{eqn:naz}
f(\y_0) \geq f(\xstar) + \oh \norm{\y_0 - \xstar}^2 \, .
\end{equation}

Combining \eqnref{eqn:naz} with \eqnref{eqn:fabi}, we have
\begin{align}
f(\xstar) + f(\y_0) & \geq f_0(\x_0) + \oh \norm{\xstar - \x_0}^2 + f(\xstar) + \oh \norm{\y_0 - \xstar}^2 \nonumber \\
\Rightarrow \quad & \Delta_0  \geq \norm{\xstar - \oh (\x_0 + \y_0)}^2 + \tfrac{1}4 \norm{\x_0 - \y_0}^2 
\label{eqn:x8gj}
\\
\Rightarrow \quad & \xstar  \in \cl{\T_1} \, .
\nonumber 
\end{align}

By construction, $\fs(\x) = f(\x)$ for all $\x \in \T_1$.
Since $\T_1$ is an open set, if $\xstar \in \T_1$, then
\[
\partial \fs(\xstar)  = \partial f(\xstar) 
\ \Rightarrow \  \m{0}  \in \partial \fs(\xstar) 
\ \Rightarrow \  \xstar  = \xs \, .
\]

For the remainder of the proof, we consider the case that $\xstar \in \bd{\T_1}$.
In this case, \eqnref{eqn:naz} holds with equality (since \eqnref{eqn:x8gj} holds with equality), meaning
\begin{equation}
f(\y_0) = f(\xstar) + \oh \norm{\y_0 - \xstar}^2 \, . \label{eqn:blueb}
\end{equation}
Define $\z = \oh (\y_0 + \xstar)$.  Note $\z \in \T_1$, since $\xstar \in \bd{\T_1}$, $\y_0 \in \cl{\T_1}$, $\xstar \ne \y_0$, and $\T_1$ is an open ball.
Also, since $\z$ lies on the segment $[\xstar, \y_0]$, and $f$ is 1-strongly convex,  \eqnref{eqn:blueb} implies that
\[
f(\z) = f(\xstar) + \oh \norm{\z - \xstar}^2 \, . \label{eqn:bleed}
\]
This implies that $\z - \xstar \in \partial f(\z)$,
since $f(\xstar) + \oh \norm{\x - \xstar}^2 \leq f(\x)$ for all $\x$.
Because $\z \in {\T_1}$,  it follows that
$\z - \xstar \in \partial \fs(\z)$.
Since $\fs$ is 1-strongly convex, then for all $\x$,  we have
\begin{align}
\fs(\x) & \geq f(\z) + \ip{\z - \xstar, \x - \z} + \oh \norm{\x - \z}^2\nonumber  \\
& = f(\xstar) + \oh \norm{\z - \xstar}^2 + \ip{\z - \xstar, \x - \z} + \oh \norm{\x - \z}^2 \nonumber \\
& \geq f(\xstar) \, . \label{eqn:b7b7b}
\end{align}

At the same time, since $\fs(\x) = f(\x)$ for all $\x \in \T_1$, and $\fs$ is lower semicontinuous, we must have
$
\fs(\xstar) = f(\xstar)
$.
Combined with \eqnref{eqn:b7b7b}, it follows that $\xstar$ minimizes $\fs$.




\end{proof}

\section{Proof of \thmref{thm:S1} \label{app:proof_S1}}


\begin{proof}
Consider any $\x_{\T_1} \in \T_1$.
For some $\delta > 0$, we have
\[
 \norm{\x_{\T_1} - \oh (\x_0 + \y_0)} = \sqrt{\Delta_0 - \tfrac{1}4 \norm{\x_0 - \y_0}^2}  - \delta
 \]
From the definition of $\T_\xi$, we know $\T_\xi \supseteq \B_\xi(\beta)$ for all $\beta \in (0, \toh)$.
Recall that $\B_\xi(\beta)$ is a ball with center $\beta \x_0 + (1 - \beta) \y_0$ and radius $\tau_\xi(\beta)$.
We have
\begin{align}
\lim{\beta \rightarrow \toh^-} [\tau_\xi(\beta) - \norm{\x_{\T_1} - \beta \x_0 - (1- \beta)\y_0}] &= \sqrt{\Delta_0 - \tfrac{1}4 \norm{\x_0 - \y_0}^2} - \norm{\x_{\T_1} - \oh (\x_0 + \y_0)} \nonumber \\
&= \delta 
> 0 \,. \nonumber
\end{align}
Thus, for some $\beta \in (0,\toh)$, we have $\x_{\T_1}\in \B_\xi(\beta)$, implying $\x_{\T_1} \in \T_\xi$.
We have shown $\T_1 \subseteq \T_\xi$.

To show that $\T_\xi \subseteq \T_1$, consider any $\x_{\T_\xi} \in \T_\xi$.  
Since $\xi_t = 1$, for all $\beta \in (0, \toh)$ we have
\[
\tau_\xi(\beta) = \beta \sqrt{ 2 \Delta_0 \left[ 1 + \tfrac{\beta}{1 - \beta} \left(1 - \tfrac{ \norm{\x_0 - \y_0}^2}{2 \Delta_0} \right)\right] } 
= 2 \beta \sqrt{ \Delta_0 - \tfrac{1}4 \norm{\x_0 - \y_0}^2} \, .
\]
Above we used the fact $\tfrac{\norm{\x_0 - \y_0}^2}{2 \Delta_0} \leq 1$, which follows from $f(\y_0) \geq f(\x_0) + \oh \norm{\y_0 - \x_0}^2$. 

From the definition of $\T_\xi$, there exists a $\beta \in (0, 1/2)$ such that \mbox{$\x_{\T_\xi} \in \B_\xi(\beta)$}.
From this, we see
\begin{align}
 &
\norm{\x_{\T_\xi} - \beta \x_0 - (1 - \beta) \y_0} < \tau_\xi(\beta) \nonumber \\
\Rightarrow &
\norm{\x_{\T_\xi} - \beta \x_0 - (1 - \beta) \y_0} < 2 \beta \sqrt{ \Delta_0 - \tfrac{1}4 \norm{\x_0 - \y_0}^2} \nonumber \\
\Rightarrow &
\norm{\x_{\T_\xi} - \oh(\x_0 + \y_0) + (\oh - \beta) (\x_0 - \y_0)} < 2 \beta \sqrt{ \Delta_0 - \tfrac{1}4 \norm{\x_0 - \y_0}^2}\nonumber  \\
\Rightarrow &
\norm{\x_{\T_\xi} - \oh(\x_0 + \y_0)} < (1 - 2 \beta) \oh \norm{\x_0 - \y_0} + 2 \beta \sqrt{ \Delta_0 - \tfrac{1}4 \norm{\x_0 - \y_0}^2} \nonumber \\
\Rightarrow &
\norm{\x_{\T_\xi} - \oh(\x_0 + \y_0)} < \sqrt{ \Delta_0 - \tfrac{1}4 \norm{\x_0 - \y_0}^2} \nonumber \\
\Rightarrow & \
\x_{\T_\xi} \in \T_1 \, . \nonumber
\end{align}
Note we again used the fact $\tfrac{ \norm{\x_0 - \y_0}^2}{2 \Delta_0} \leq 1$, which implies $\oh \norm{\x_0 - \y_0} \leq \sqrt{ \Delta_0 - \tfrac{1}4 \norm{\x_0 - \y_0}^2}$.
\end{proof}

\section{Miscellaneous proofs \label{app:misc_proofs}}

\subsection{Proof that teardrop equivalence region is a subset of capsule equivalence region \label{sec:proof_capsule_subset}}

\begin{thm}
Define $\T_\xi$ and $\Tcap$ as in \secref{sec:teardrop} and \secref{sec:capsule}.  Then $\T_\xi \subseteq \Tcap$.
\end{thm}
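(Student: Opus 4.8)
The plan is to prove the inclusion one ball at a time. Since $\T_\xi = \bigcup_{\beta \in (0,\toh)} \B_\xi(\beta)$, it suffices to show $\B_\xi(\beta) \subseteq \Tcap$ for every $\beta$ with $\tau_\xi(\beta) > 0$; for the remaining $\beta$ the ball $\B_\xi(\beta)$ is empty and there is nothing to check. The first step is to rewrite the capsule in a more convenient form. Because $\Bcap_1$ and $\Bcap_2$ are open balls of the same radius $\rcap$, a short convexity argument identifies their convex hull with the open tube $\Tcap = \{ \x \mid \mathrm{dist}(\x, [\ccap_1, \ccap_2]) < \rcap \}$: any convex combination $\lambda p_1 + (1 - \lambda) p_2$ with $p_i \in \Bcap_i$ decomposes as a point of the segment $[\ccap_1, \ccap_2]$ plus a displacement of norm $< \rcap$, and conversely every point within $\rcap$ of the segment arises this way.

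Next I would set up coordinates along the axis through $\ytm$ and $\xtm$. With $u = (\xtm - \ytm) / \norm{\xtm - \ytm}$ and $d = \norm{\xtm - \ytm}$, the center of $\B_\xi(\beta)$ is $\beta \xtm + (1 - \beta) \ytm = \ytm + \beta d\, u$, which lies on this axis at axial coordinate $\beta d$, while $\ccap_1$ and $\ccap_2$ have axial coordinates $a_1 = \dminc + \rcap$ and $a_2 = \dmaxc - \rcap$. The whole argument is then driven by three elementary consequences of the defining extrema: for every admissible $\beta$ one has $\dminc \le \beta d - \tau_\xi(\beta)$, $\dmaxc \ge \beta d + \tau_\xi(\beta)$, and $\rcap \ge \tau_\xi(\beta)$. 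Combining the first two over all $\beta$ also gives $\dmaxc - \dminc \ge 2 \rcap$, hence $a_1 \le a_2$, so the segment is properly oriented and the tube description is unambiguous.

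The core is a one-dimensional distance estimate together with the fact that $\mathrm{dist}(\cdot, [\ccap_1, \ccap_2])$ is $1$-Lipschitz. Since the center $\ytm + \beta d\, u$ lies on the axis, its distance to the segment is the $1$-D distance from $\beta d$ to $[a_1, a_2]$, namely $\mathrm{max}\{ a_1 - \beta d,\ \beta d - a_2,\ 0 \}$. The three bounds above make each of these three quantities at most $\rcap - \tau_\xi(\beta)$, so $\mathrm{dist}(\ytm + \beta d\, u, [\ccap_1, \ccap_2]) \le \rcap - \tau_\xi(\beta)$. Then for any $\x \in \B_\xi(\beta)$ we have $\norm{\x - (\ytm + \beta d\, u)} < \tau_\xi(\beta)$, and $1$-Lipschitzness yields $\mathrm{dist}(\x, [\ccap_1, \ccap_2]) < (\rcap - \tau_\xi(\beta)) + \tau_\xi(\beta) = \rcap$, so $\x \in \Tcap$. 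Taking the union over $\beta$ completes the proof.

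I expect the main obstacle to be bookkeeping rather than conceptual difficulty: correctly reducing the convex hull of the two balls to the tube form, and verifying $a_1 \le a_2$ so that the projection of the center onto the segment behaves as the case analysis assumes. The distance bound itself is routine once the three supremum/infimum inequalities are recorded. The one subtlety worth care is that $\B_\xi(\beta)$ is nonempty exactly when $\tau_\xi(\beta) > 0$, which is precisely the index set over which $\dminc$, $\dmaxc$, and $\rcap$ are defined, so no degenerate value of $\beta$ requires separate handling.
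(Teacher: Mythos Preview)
Your proposal is correct and follows essentially the same approach as the paper: both identify $\Tcap$ with the open tube of radius $\rcap$ about the segment $[\ccap_1,\ccap_2]$, locate the center of $\B_\xi(\beta)$ on the axis through $\ytm$ and $\xtm$, and use the defining extrema $\dminc$, $\dmaxc$, $\rcap$ together with the triangle inequality to push each $\B_\xi(\beta)$ inside the capsule. The paper carries out an explicit three-case split on whether $\beta'$ lies to the left of, inside, or to the right of the interval $[\beta_1,\beta_2]$ (with $\beta_j = a_j/d$), which is exactly your formula $\mathrm{max}\{a_1-\beta d,\ \beta d - a_2,\ 0\}$ unpacked; your Lipschitz packaging is a cosmetic streamlining of the same computation, and your explicit check that $a_1\le a_2$ is a small point of added care that the paper leaves implicit.
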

\begin{proof}
Recall $\Tcap$ is the set of points within a distance $\rcap$
from the segment $[\ccap_1, \ccap_2]$, where
\[
\ccap_1 = \beta_1 \xtm + (1 - \beta_1) \ytm \,, \quad \quad
\ccap_2 = \beta_2 \xtm + (1 - \beta_2) \ytm \,,
\]
\vspace{-1.2em}
\[
\beta_1 = \tfrac{\dminc + \rcap}{\norm{\xtm - \ytm}} \,, \quad \quad
\beta_2 = \tfrac{\dmaxc - \rcap}{\norm{\xtm - \ytm}} \,. 
\]

Consider any $\x' \in \T_\xi$.  By definition of $\T_\xi$, there exists a scalar $\beta' \in (0, \toh)$ such that
\[
\norm{\x' - (\beta' \xtm + (1 - \beta') \ytm)} < \tau_\xi(\beta') \, .
\]

In the case that $\beta_1 \leq \beta' \leq \beta_2$, then $\beta' \xtm + (1 - \beta') \ytm$ falls on the segment $[\ccap_1, \ccap_2]$.  This implies that $\x' \in \T_\xi^{\mathrm{Cap}}(\T_\xi)$, since
\[
\norm{\x' - (\beta' \xtm + (1 - \beta') \ytm)} < \tau_\xi(\beta') 
\leq  \rcap \, .
\]

In the case that $\beta' \leq \beta_1$, we have
\begin{align}
\norm{\x' - \ccap_1} &\leq \norm{\x' - (\beta' \xtm + (1 - \beta') \ytm)} + \norm{\beta' \xtm + (1 - \beta') \ytm - \ccap_1} \nonumber \\
&< \tau_\xi(\beta') +  (\beta_1 - \beta') \norm{\xtm - \ytm} \nonumber \\
&= \left[ \tau_\xi(\beta') - \beta' \norm{\xtm - \ytm} \right] + \beta_1 \norm{\xtm - \ytm}  \nonumber \\
&\leq -\dminc + \left[ \dminc + \rcap \right] \nonumber \\
&= \rcap \, . \nonumber
\end{align}
Thus, $\x' \in \Tcap$ if $\beta' \leq \beta_1$.
A similar argument implies $\x' \in \Tcap$ when $\beta' \geq \beta_2$.

Thus, for all $\beta'$, we have $\x' \in \T_\xi$, which implies $\T_\xi \subseteq \Tcap$.
\end{proof}

\subsection{Proof that dual progress termination condition can be satisfied}
\label{app:condition_2_always}

\begin{thm}
For the \algname{} algorithm with approximate subproblem solutions described in \secref{sec:approximate}, 
if subproblem $t$ is solved exactly, then it is always the case that
  \[
  \ftlb(\xt) - \ftmlb(\xtm) \geq (1 - \epsilon_t) \oh \norm{\zt - \xtm}^2 \, .
  \]
\end{thm}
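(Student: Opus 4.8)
The plan is to use the special structure of an exactly-solved subproblem to reduce $\ftlb(\xt)$ to $f_t(\zt)$, and then to bound $f_t(\zt)$ from below by combining the one-strong convexity of the previous lower bound $\ftmlb$ with the fact that $f_t$ dominates $\ftmlb$.

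First I would record the two consequences of solving subproblem $t$ exactly that are noted in \secref{sec:approximate}. When the subgradients are chosen so that $\g_\psi^{\mathrm{LB}} + \sum_{i=1}^m \gilb = \m{0}$, the point $\zt$ minimizes $\ftlb$, so $\xt = \zt$. Evaluating the assumed form of $\ftlb$ at $\x = \zt$ kills every linear term together with the quadratic term $\oh \norm{\x - \zt}^2$, leaving
\[
\ftlb(\zt) = \psi(\zt) + \sum_{i=1}^m \phi_{i,t}(\zt) = f_t(\zt) \, .
\]
Hence $\ftlb(\xt) = f_t(\zt)$, which is the crucial simplification.

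Next I would invoke two facts already available in the excerpt. First, $\ftmlb$ is one-strongly convex, since its only quadratic term is $\oh \norm{\x - \ztm}^2$, and $\xtm$ minimizes it; strong convexity therefore gives $\ftmlb(\zt) \geq \ftmlb(\xtm) + \oh \norm{\zt - \xtm}^2$. Second, the modified condition (C3) of \secref{sec:approximate} guarantees that $f_t$ upper bounds $\ftmlb$ pointwise, so $f_t(\zt) \geq \ftmlb(\zt)$. Chaining these,
\[
\ftlb(\xt) = f_t(\zt) \geq \ftmlb(\zt) \geq \ftmlb(\xtm) + \oh \norm{\zt - \xtm}^2 \, ,
\]
i.e.\ $\ftlb(\xt) - \ftmlb(\xtm) \geq \oh \norm{\zt - \xtm}^2$. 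Since $\epsilon_t \in [0, 1)$ we have $1 - \epsilon_t \leq 1$, so the right-hand side is at least $(1 - \epsilon_t) \oh \norm{\zt - \xtm}^2$, which is the claimed bound.

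I do not expect a genuine obstacle here: the vanishing of terms at $\zt$ and the strong-convexity step are immediate. The only point deserving care is the inequality $f_t \geq \ftmlb$. The excerpt states this as a consequence of the new (C3), but for a self-contained argument I would unpack it by showing $\phi_{i,t} \geq \philbtm$ for every $i$ --- when $i$ is omitted from the working set this is exactly the new (C3), and when $i$ is included it follows from (C2) (so that $\phi_i \geq \phi_{i,t-1}$) combined with convexity (so that $\phi_{i,t-1} \geq \philbtm$, its own linearization) --- while the $\psi$-blocks compare by one-strong convexity of $\psi$. Thus the only real work is this bookkeeping, after which the result is a two-line chain of inequalities.
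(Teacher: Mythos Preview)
Your proposal is correct and follows essentially the same approach as the paper: establish $\ftlb(\xt) = f_t(\zt)$ from exactness, use condition (C3) to get $f_t \geq \ftmlb$, invoke the one-strong convexity of $\ftmlb$ at its minimizer $\xtm$, and finish with $1 - \epsilon_t \leq 1$. Your additional unpacking of why $f_t \geq \ftmlb$ holds term-by-term is a nice elaboration that the paper omits, but the core argument is identical.
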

\begin{proof}
If subproblem $t$ is solved exactly, then $f_t(\zt) = \ftlb(\xt)$, since $\xt = \zt$.
Due to condition (C3) in \secref{sec:approximate}, we have $f_t(\x) \geq \ftmlb(\x)$ for all $\x$.
Thus,
\begin{align}
& f_t(\x)  \geq \ftmlb(\xtm) + \oh \norm{\x - \xtm}^2 \nonumber \\
 \Rightarrow \quad & f_t(\zt)  \geq \ftmlb(\xtm) + \oh \norm{\zt - \xtm}^2 \nonumber \nonumber \\
 \Rightarrow \quad  &\ftlb(\xt) - \ftmlb(\xtm)  \geq \oh \norm{\zt - \xtm}^2 \nonumber \nonumber \\
 \Rightarrow \quad & \ftlb(\xt) - \ftmlb(\xtm)  \geq (1 - \epsilon_t) \oh \norm{\zt - \xtm}^2 \,  . \nonumber
\end{align}
\end{proof}

\subsection{Proof that $f_0$ lower bounds $\flod$ in \secref{sec:screen_l1} \label{app:screen_l1}}

\begin{thm}
For any $\omegab_0 \in \reals^m$, define $f_0$, $\flod$, and $\x_0$ as in \secref{sec:screen_l1}.  
  Then 
$
f_0(\x)  \leq 
 \flod(\x)
$
$\forall \x$.
\end{thm}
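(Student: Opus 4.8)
The plan is to split on whether $\x$ is dual-feasible. If $\abs{\ip{\A_i, \x}} > \lambda$ for some $i$, then $\phi_i(\x) = +\infty$, so $\flod(\x) = +\infty$ and the bound $f_0(\x) \leq \flod(\x)$ is immediate. All the work is therefore in the feasible case, where $\phi_i(\x) = 0$ for every $i$ and hence $\flod(\x) = \sum_{j=1}^n L_j^*(x_j)$.

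For the feasible case, first I would use that each $L_j$ is $1$-smooth, so each conjugate $L_j^*$ is $1$-strongly convex. Abbreviating $u_j = \ip{\a_j, \omegab_0}$ and writing the $j$-th component of $\x_0$ as $L_j'(u_j)$ (its definition), the Fenchel--Young equality supplies both the subgradient relation $u_j \in \partial L_j^*(L_j'(u_j))$ and the value identity $L_j^*(L_j'(u_j)) = u_j L_j'(u_j) - L_j(u_j)$. Strong convexity of $L_j^*$, evaluated with the subgradient $u_j$, then yields
\[
L_j^*(x_j) \geq L_j^*(L_j'(u_j)) + u_j\,(x_j - L_j'(u_j)) + \oh\,(x_j - L_j'(u_j))^2 \, .
\]

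Summing over $j$, the quadratic terms assemble exactly into $\oh \norm{\x - \x_0}^2$, and substituting the Fenchel identity for $L_j^*(L_j'(u_j))$ cancels the terms $u_j L_j'(u_j)$, leaving $\sum_j L_j^*(x_j) \geq \sum_j u_j x_j - \sum_j L_j(u_j) + \oh \norm{\x - \x_0}^2$. Recalling the definitions $f_0(\x) = \oh \norm{\x - \x_0}^2 - \flo(\omegab_0)$ and $\flo(\omegab_0) = \sum_j L_j(u_j) + \lambda \norm{\omegab_0}_1$, the claim $f_0(\x) \leq \sum_j L_j^*(x_j)$ reduces to the single scalar inequality $\sum_j u_j x_j \geq -\lambda \norm{\omegab_0}_1$.

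I would close by rewriting the left side as an inner product in the primal space: since $u_j = \ip{\a_j, \omegab_0}$ and $\ip{\A_i, \x} = \sum_j (\a_j)_i x_j$, one has $\sum_j u_j x_j = \sum_i (\omegab_0)_i \ip{\A_i, \x}$. Applying H\"older's inequality together with dual feasibility $\max{i} \abs{\ip{\A_i, \x}} \leq \lambda$ gives
\[
\sum_i (\omegab_0)_i \ip{\A_i, \x} \geq -\norm{\omegab_0}_1 \, \max{i} \abs{\ip{\A_i, \x}} \geq -\lambda \norm{\omegab_0}_1 \, ,
\]
as needed. The conjugate bookkeeping is routine; the step I expect to be the crux is recognizing that the strong-convexity bound combined with the Fenchel equality at $\x_0$ distills the whole inequality down to a H\"older estimate that is exactly absorbed by the dual constraint $\norm{\A^T \x}_\infty \leq \lambda$ --- this is precisely where the choice of $f_0$ is matched to the feasible set.
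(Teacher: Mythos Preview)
Your proof is correct and follows essentially the same route as the paper: use $1$-strong convexity of each $L_j^*$ together with the Fenchel--Young equality at $\x_0$ to reduce the claim to the linear inequality $\ip{\A\omegab_0,\x} \geq -\lambda\norm{\omegab_0}_1$, then dispatch that with dual feasibility. The only cosmetic differences are that you case-split on feasibility up front and invoke H\"older globally, whereas the paper carries the $\phi_i$ terms to the end and bounds $-[\omegab_0]_i\ip{\A_i,\x}-\lambda|[\omegab_0]_i|\leq 0$ termwise; these are the same argument.
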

\begin{proof}
Let $[\x_0]_j$ denote the $j$the entry of $\x_0$.
For all $x_j$, the Fenchel-Young inequality implies 
\[
L_j^*(x_j) - x_j \ip{\a_j, \omegab_0} \geq -L_j(\ip{\a_j, \omegab_0}) \, .
\]
When $x_j = L_j'(\ip{\a_j, \omegab_0})$, this inequality holds with equality, implying $L_j^*(x_j) - x_j \ip{\a_j, \omegab_0}$ is minimized when $x_j = [\x_0]_j$.
By assuming that $L_j$  is 1-smooth, $L_j^*$ is 1-strongly convex. 
Thus,
\begin{align}
\textstyle \sum_{j=1}^n \left[ L_j^*(x_j) - x_j \ip{\a_j, \omegab_0} \right] &\geq \oh \norm{\x - \x_0}^2 + \textstyle\sum_{j=1}^n \left[ L_j^*([\x_0]_j) - [\x_0]_j \ip{\a_j, \omegab_0} \right] \nonumber \\
&= \oh \norm{\x - \x_0}^2 - \textstyle\sum_{j=1}^n  L_j(\ip{\a_j, \omegab_0}) \, \nonumber .
\end{align}
Applying this result, we have
\begin{align}
f_0(\x) \leq \flod(\x) \ \Leftrightarrow\ &  
\oh \norm{\x - \x_0}^2 - \flo(\omegab_0) 
\leq \textstyle\sum_{j=1}^n L_j^*(x_j) + \textstyle\sum_{i=1}^m \phi_i(\x) \nonumber  
\\
\Leftrightarrow\ &  -\textstyle\sum_{j=1}^n x_j \ip{\a_j, \omegab_0} - \lambda \norm{\omegab_0}_1 \leq \textstyle\sum_{i=1}^m \phi_i(\x) \nonumber \\
\Leftrightarrow\ & - \ip{\A \omegab_0, \x} - \lambda \norm{\omegab_0}_1 \leq \textstyle\sum_{i=1}^m \phi_i(\x) \, . \label{eqn:0zm10jj}
\end{align}

Thus, it remains to prove \eqnref{eqn:0zm10jj}.
For each $i$, note $\phi_i(\x) = +\infty$ if $\abs{\ip{\A_i, \x}} > \lambda$.
Thus, we must only consider the case $\abs{\ip{\A_i, \x}} \leq \lambda$, which implies $\phi_i(\x) =0$.
Assuming $\abs{\ip{\A_i, \x}} \leq \lambda$, we have
\begin{align}
-[\omegab_0]_i \ip{\A_i, \x} - \lambda \abs{[\omegab_0]_i} &\leq \abs{[\omegab_0]_i} \abs{\ip{\A_i, \x}} - \lambda \abs{[\omegab_0]_i} \nonumber  \\
&= \abs{[\omegab_0]_i} \left( \abs{\ip{\A_i, \x}} - \lambda \right) \nonumber \\
&\leq 0 \nonumber \\
&= \phi_i(\x) \, . \nonumber
\end{align}
Summing over $i \in [m]$ proves \eqnref{eqn:0zm10jj}.
\end{proof}

\end{document}